\documentclass[12pt]{article}
\usepackage{enumitem}
\usepackage{amsmath}
\usepackage{amssymb}
\usepackage{algorithm}
\usepackage{algpseudocode}
\usepackage{placeins}

\DeclareMathOperator*{\argmin}{arg\,min}
\DeclareMathOperator{\denoiser}{D}
\DeclareMathOperator{\prox}{prox}
\newcommand{\R}{\mathbb{R}}
\usepackage{graphicx} 
\usepackage{placeins} 
\usepackage{makecell}
\usepackage{graphicx}
\usepackage{subcaption}
\usepackage{caption} 
\usepackage{tabularx}
\usepackage{array}
\usepackage{multirow}
\usepackage{amsmath}
\usepackage{multirow}
\usepackage{mathrsfs}
\usepackage{amsfonts}
\usepackage{indentfirst}
\usepackage{colortbl,dcolumn}
\usepackage{color}

\usepackage{amsmath}
\allowdisplaybreaks[4]
\usepackage{psfrag}

\usepackage{array}
\usepackage{booktabs}
\usepackage{cite}
\usepackage[hidelinks]{hyperref}
\usepackage{amsthm}
\allowdisplaybreaks
\numberwithin{equation}{section}
\usepackage[top=1in, bottom=1in, left=0.8in, right=0.8in]{geometry}
\renewenvironment{proof}%
  {\begin{trivlist}%
   \item[\hskip\labelsep]
   \ignorespaces}%
  {\qed\end{trivlist}}

\usepackage{caption}
\usepackage{appendix} 
\usepackage{graphicx}
\usepackage{subcaption}
\usepackage{subcaption}
\usepackage{float}
\usepackage{makecell}
\newcommand{\N}{\mathbb{N}}

\newcommand{\dd}{\text{d}}
\newtheorem{thm}{Theorem}[section]
\newtheorem{defn}[thm]{Definition}

\newtheorem{prop}[thm]{Proposition}

\newtheorem{rem}[thm]{Remark}

\begin{document}
\bibliographystyle{plain}
\title{HP-PnP-Flow: Hybrid Time Scheduling and Projection for Plug-and-Play Image Restoration via Flow Matching\footnotemark[2] {}}
	\author{
    Yong Li,
	Xiaojie Wang,
    Xiaoyan Zhang
		\\
		\footnotesize 
         School of Mathematics and Statistics,   HNP-LAMA, Central South University, Changsha 410083, Hunan, China
		%
		%
		%
	}

\maketitle
\footnotetext[2]{This work was supported by Natural Science Foundation of China (12471394, 12371417, 12071488) and Hunan Basic Science Research Center for Mathematical Analysis (2024JC2002). 
 E-mail addresses:
 15572158033@163.com, \
 x.j.wang7@csu.edu.cn, \
 x.y.zhang@csu.edu.cn.
 }

\begin{abstract}
%
%
Image restoration is a fundamental inverse problem that seeks to recover high-quality images from degraded observations. Plug-and-Play (PnP) methods provide a flexible framework for integrating pretrained denoisers into iterative reconstruction, whereas flow matching offers an efficient generative modeling approach based on continuous-time ODE dynamics.
Very recently, Martin et al.\ (ICLR, 2025) developed a PnP-Flow method, which integrates the PnP framework with flow matching through a time-dependent denoiser constructed from a pretrained flow matching model. 
%
By incorporating a different time schedule and an additional pixel-space projection operator, this paper proposes a novel efficient PnP-Flow algorithm (HP-PnP-Flow) that enjoys rigorous convergence guarantees under mild assumptions, without imposing restrictive conditions on iterates required by the aforementioned work.
Numerical experiments on CelebA and AFHQ-Cat demonstrate that the newly proposed HP-PnP-Flow algorithm significantly outperforms PnP-Flow on several restoration tasks such as denoising, deblurring, super-resolution, random inpainting, and box inpainting.
%
Notably, our algorithm achieves comparable or better reconstruction quality with substantially fewer iterations, reducing the iteration count from $500$ to $100$ for deblurring and super-resolution on AFHQ-Cat.
\\

\noindent{\textbf{AMS subject classification:}} 65J22, 68U10, 65K10.\\

\noindent{\textbf{Key Words:} Image reconstruction, Flow matching, Plug-and-Play, Hybrid time scheduling, Projection operator.}
\end{abstract}

\section{Introduction}
Image restoration has been a long-standing problem in a wide range of applications, including digital photography, medical imaging, remote sensing, surveillance and digital entertainment \cite{singhal2025study,wali2023recent}. 
Given a degraded observation, the objective is to recover the underlying clean image, where the degradation may arise from various factors during image acquisition, transmission, or processing
\cite{kamyab2022survey,9878048,fan2020neural,Elad2023image,jia2024variational}.
Such restoration tasks can typically be formulated as inverse problems under the observation model
\begin{equation}
    y = Hx + \zeta,
    \label{eq:forward_model}
\end{equation}
where $x\in\mathbb{R}^d$ denotes the unknown high-quality image, $y\in\mathbb{R}^m$ is the degraded observation, $H:\mathbb{R}^d\to\mathbb{R}^m$ denotes the degradation operator, and $\zeta$ describes the underlying additive noise.
The inherent ill-posedness and high dimensionality of the problem pose significant challenges to accurate and stable reconstruction.



From a Bayesian perspective \cite{TOPOLNICKI2026109913,latz2025deepgaussianprocesspriors,hein2025physics}, the unknown image $x$ and the observation $y$ are treated as random variables. 
By Bayes' theorem, the corresponding maximum a posteriori (MAP) estimator is then given by
\begin{equation}
\hat{x}_{\mathrm{MAP}}
=
\arg\max_{x\in\mathbb{R}^d}\log p(x\mid y)
=
\arg\min_{x\in\mathbb{R}^d}
\bigl\{-\log p(y\mid x)-\log p(x)\bigr\},
\end{equation}
where $\log p(y\mid x)$ represents the log-likelihood of observation $y$, $\log p(x)$ delivers the prior of clean image $x$ and is independent of degraded image $y$.
Nevertheless, the image prior $p(x)$ is generally unknown or intractable in this framework. To illustrate this point, the negative log-prior is typically replaced by an explicit regularization term, leading to the optimization problem
\begin{equation}
\label{eq:min-optimization}
    \arg \min_{x\in\mathbb{R}^d}
    \bigl\{f(x)+g(x)\bigr\},
\end{equation}
where $f(x)=-\log p(y\mid x)$ denotes the data-fidelity term.
In particular, under additive Gaussian noise
$\zeta \sim \mathcal{N}(0,\sigma^2 \,\mathrm{I}_d)$, the data-fidelity term takes the form $f(x)=\tfrac{1}{2\sigma^2} \|Hx-y\|^2.$



In recent years, the regularized minimization problem \eqref{eq:min-optimization} has received considerable attention.
Early works addressed this problem by using proximal splitting methods, which handle the data-fidelity and regularization terms separately.
For example, forward-backward splitting (FBS)
\cite{prox_sample_1,prox_sample_2} alternates between a proximal operation on $g$ and a gradient descent step on $f$, when $f$ is differentiable. 
In more general settings, half-quadratic-splitting (HQS) \cite{HQS}, alternating direction method of multipliers (ADMM) \cite{ADMM} or douglas-rachford splitting (DRS) \cite{douglas1956numerical} use proximal operations for both $f$ and $g$.\\
{\small\bfseries Plug-and-Play (PnP) method.}
Since designing an effective regularizer for complex priors is challenging, the authors of \cite{venkatakrishnan2013plug}
introduced the Plug-and-Play (PnP) methods, which replace the proximal operator with an off-the-shelf denoiser such as
DnCNN \cite{Zhang_DnCNN} or DRUNet \cite{zhang2021plug}.
%
Over the last few years, much attention has been received on various PnP methods and their convergence analysis (see, e.g., \cite{meinhardt2017learning,terris2020building,pesquet2021learning,ryu2019plug,hurault2022proximal} and references therein).
For example, the authors of \cite{Xu2026ANewConvergence} first provide a new convergence theory for plug-and-play proximal gradient descent (PnP-PGD) under prior mismatch where the denoiser is trained on a different data distribution to the inference task at hand. 
By incorporating neural network denoisers, PnP methods have achieved promising results across various image restoration tasks.
However, classical PnP methods typically rely on generic discriminative denoisers, for which an explicit regularization function may not be available. 
This limits both the optimization interpretation of the method and its ability to represent complex image distributions.
\\

{\small\bfseries PnP-Flow method.}
Generative models provide a natural alternative by modeling the underlying data distribution, and have achieved state-of-the-art performance in image generation, inverse problems and a broad range of related applications \cite{dhariwal2021diffusion,ho2020denoising,songs2021Denoising,chung2023diffusion,cardoso2024monte,lipman2023flow,liu2023Flow}. In particular, flow matching learns
a time-dependent vector field whose ODE flow transports
a base distribution to the data distribution.
Owing to simpler training objectives and more efficient sampling, flow matching has recently emerged as a powerful framework for continuous-time generative modeling.
%
These developments have motivated the use of generative models as image priors within the PnP framework. 
Very recently, Martin et al. \cite{pnp_flow} introduced PnP-Flow matching method by defining a time-dependent denoiser using a pretrained flow matching model.
%
Under regularity assumptions and bounded support of the base distribution, they established convergence under the additional assumptions that the iterates remain bounded and the time sequence satisfies $\sum_{k=0}^{\infty}(1-t_k)<\infty$.
%
%
Despite its good reconstruction performance,
PnP-Flow still has several limitations.
%
On the one hand, the condition imposed on the time sequence is difficult to be satisfied in practice.
%
For instance, the linear schedule $t_k=k/N$ used in \cite{pnp_flow} does not satisfy the required uniform boundedness with respect to the iteration
budget $N$. 
%
On the other hand, the convergence guarantee is obtained based on bounded iterates, a property that is not explicitly guaranteed by the original algorithm.
%
Motivated by this we ask the following question:
\\

{
\it 
Can one design an improved PnP-Flow algorithm that enjoys rigorous convergence guarantees under mild assumptions, while achieving improved reconstruction efficiency and quality?
%
%
}
\\

{\small\bfseries An improved HP-PnP-Flow algorithm.}
In this work, we address this problem by proposing a hybrid-projection PnP-Flow (HP-PnP-Flow) algorithm, based on a different time schedule and an additional
pixel-space projection operator.
More precisely, the hybrid time schedule given by \eqref{eq:hybrid_schedule} combines geometric and linear phases, while the projection operator defined by \eqref{eq:def-Projection} explicitly constrains the iterates to the prescribed pixel range.
Under mild and reasonable assumptions, 
we establish the convergence of HP-PnP-Flow (see Theorem \ref{thm:hp_pnp_flow_convergence}), without requiring the a priori boundedness assumptions on the iterates  imposed in the original PnP-Flow framework in \cite{pnp_flow}. Also, the new hybrid time schedule ensures the resulting time sequence satisfies the uniform boundedness assumption $\sum_{k=0}^{\infty}(1-t_k)<\infty$.
Notably, numerical experiments demonstrate the efficiency and high quality of the new algorithm.
Ablation studies further validate the individual effects of the initial time offset $t_0$ and the hybrid time schedule, with their combination yielding the best overall performance.
In summary, our main contributions are as follows:
\begin{itemize}
    \item We propose an improved PnP-Flow algorithm, called  HP-PnP-Flow, by incorporating a novel hybrid time schedule and an additional pixel-space projection operator. 

    \item We prove the convergence of the newly proposed HP-PnP-Flow algorithm under mild and reasonable assumptions, establishing HP-PnP-Flow as a theoretically grounded algorithm. 

    \item Extensive experiments on two datasets and five tasks demonstrate  that our algorithm  significantly outperforms PnP-Flow in terms of reconstruction quality and convergence efficiency.
\end{itemize}

The rest of this article is organized as follows. 
The next section collects some notations, introduces background on Plug-and-Play (PnP), flow matching and PnP-Flow throughout this paper. 
Section \ref{sec:method} presents the proposed method and main results of this paper.
Numerical experiments are conducted in Section \ref{sec:experiments} to demonstrate the effectiveness of the proposed algorithm.
Finally, some concluding remarks are given in the last section.

\section{Preliminaries}
\label{sec:preliminaries}

\subsection{Notation}
Throughout this paper, we use $\mathbb{N}$ to denote the set of all 
positive integers and let $d\in \mathbb{N}$. 
For any positive integer $m$, we denote $[m]:= \{1,...,m\}$ and $[m]_0:= \{0, 1,...,m\}$.
We let $\mathrm{I}_d \in \mathbb{R}^{d\times d}$ denote the identity matrix
and let $\|\cdot\|$ denote the Euclidean norm of vectors in $\mathbb{R}^d$. 
For a function $f: \mathbb{R}^d \to \mathbb{R}$, we write $\partial_i f$ to denote the $i$-th partial derivative of $f$. The gradient $\nabla f$ is the vector of partial derivative $(\partial_1 f,\partial_2 f,\cdots,\partial_d f)$. 
For a vector field $F=(F_1,F_2,\cdots,F_d): \mathbb{R}^d \to \mathbb{R}^d$, we denote its divergence by $\nabla \cdot F=\sum_{i=1}^{d}\partial_i F_i.$
Let $\mathcal{P}\left(\mathbb{R}^d\right)$ be the space of all probability distributions on $\left(\mathbb{R}^d, \mathcal{B}(\mathbb{R}^d)\right)$. 
By $\cdot_\#$ we denote the push-forward operation.

\subsection{Plug-and-Play (PnP)}

In this work we focus on solving the optimization problem \eqref{eq:min-optimization}. 
When $g$ is nonsmooth, standard gradient descent is generally not directly applicable. 
Proximal splitting methods instead handle the smooth
term $f$ by a gradient step and the regularization term $g$ through its proximal operator. 
%
We next recall the definition of the proximal operator.

\begin{defn}[Proximal operator]
    \label{def:prox}
    Let $g: \R^d \to \R \cup \{+\infty\}$ be a proper lower semicontinuous convex function. The proximal operator of $g$ with step
    size $\gamma > 0$ is the mapping $\prox_{\gamma g}: \R^d \to \R^d$
    defined by
    \begin{equation}
        \prox_{\gamma g}(z)
        := 
        \argmin_{x \in \R^d} 
        \left\{
        g(x) 
        +
        \tfrac{1}{2\gamma}
        \|x - z\|^2 
        \right\}.
    \end{equation}
\end{defn}

The quadratic term makes the proximal subproblem strongly convex, so the minimizer is unique. Intuitively, $\prox_{\gamma g}(z)$ finds a point that
remains close to $z$ while reducing the regularization term $g$.


The Forward-Backward Splitting (FBS) algorithm
\cite{combettes2005signal,prox_sample_1,prox_sample_2} alternates between a
forward gradient step on $f$ and a backward proximal step on $g$:
\begin{equation}
    \begin{cases}
        u_k = x_k - \gamma_k \nabla f(x_k), \\[2pt]
        x_{k+1} = \prox_{\gamma_k g}(u_k).
    \end{cases}
\end{equation}
Here, $\gamma_k \in (0,2/L)$ denotes the step size, where $L$ is the Lipschitz constant of $\nabla f$. Under standard assumptions for FBS, this step size condition ensures convergence of the iterates to a minimizer (see \cite{prox_sample_1} for details).
However, designing an explicit regularizer $g$ that accurately captures complex image priors is often challenging in practice.
The Plug-and-Play (PnP) framework \cite{venkatakrishnan2013plug} replaces the proximal step in this iteration with a pretrained denoiser $\denoiser_\sigma: \R^d \to \R^d$, where $\sigma$ controls the denoising strength, thereby incorporating an implicit image prior without requiring an explicit regularizer $g$.
The resulting PnP-FBS iteration is given by

\begin{equation}
    \begin{cases}
        u_k = x_k - \gamma_k \nabla f(x_k), 
        \\[2pt]
        x_{k+1} = \denoiser_{\sigma}(u_k).
    \end{cases}
    \label{eq:pnp_fbs}
\end{equation}
PnP combines the flexibility of optimization-based methods with the expressive power of neural networks, leading to strong performance across a wide range of image restoration tasks.

\subsection{Flow Matching}
Flow matching \cite{lipman2023flow,Albergo2023Building} is a generative modeling framework that learns a time-dependent vector field to transport samples from a base 
distribution $\nu \in \mathcal{P}(\mathbb{R}^d ) $ (e.g., standard  Gaussian) to the data distribution $\mu \in \mathcal{P}(\mathbb{R}^d)$.
The associated flow $\phi:[0,1] \times  \R^d\to\R^d$ is governed by the ordinary differential equation (ODE):
\begin{equation}
\label{eq:flow_ode}
    \frac{\dd}{\dd t}\,\phi_t(x) 
    = 
    v_t(\phi_t(x)),
    \qquad \phi_0(x) = x_0.
\end{equation}
Once the velocity field $v$ is learned,  sampling from the data distribution only requires solving ODE \eqref{eq:flow_ode} to attain $x_1 \sim \mu$ (see \cite{chen2018Neural} for details).
In addition, the flow $\phi_t$ transports the initial distribution
$p_0$ into a probability density path $p_t=(\phi_t)_\#p_0$, which satisfies the continuity (Liouville) equation
\begin{equation}
    \partial_t p_t(x)
    +
    \nabla\cdot\bigl(p_t(x)v_t(x)\bigr)
    =
    0.
\end{equation}
This equation characterizes the evolution of the density induced by the time-dependent vector field $v_t$ (see \cite{ambrosio2005gradient}).
%
%
The goal of flow matching is to learn the velocity field of the flow ODE \eqref{eq:flow_ode}. 
This learning process consists in minimizing the loss
function
\begin{equation}
    \mathcal{L}_{\mathrm{FM}}(\theta)
    :=
    \mathbb{E}_{t\sim\mathcal{U}[0,1],\,x\sim p_t}
    \left[
        \left\|
        v_t^\theta(x)-v_t(x)
        \right\|^2
    \right],
\end{equation}
where $v_t^\theta$ is parameterized by a neural network with weights $\theta$. In practice, however, the marginal velocity field $v_t(x)$ is
generally unavailable, making the flow matching objective intractable to evaluate directly. To circumvent this difficulty, flow matching instead
relies on a tractable conditional formulation.

Given a data sample $x_1 \sim p_1$, one can construct a conditional probability path $p_t(x \mid x_1)$ that interpolates between the base distribution $p_0$ at $t = 0$ and the target distribution $p_1$ at $t = 1$, together with an associated conditional vector field
$v_t(\cdot\mid x_1)$ \cite{pnp_flow,liu2023Flow}.
For the linear interpolation
\begin{equation}
    x_t
    = 
    (1 - t)\,x_0 + t\,x_1, 
    \qquad x_0 \sim p_0,\; x_1 \sim p_1,
\end{equation}
the corresponding conditional vector field along the interpolation path is 
$v_t(x_t \mid x_1)
= x_1 - x_0$. 
A key result of \cite{lipman2023flow} is that the conditional flow matching (CFM) objective has the same gradient with respect to the model parameters as the corresponding marginal flow matching objective.
Consequently, minimizing the CFM objective allows the neural vector field $v^\theta$ to learn the associated marginal vector field, while requiring only samples of the endpoint pair $(x_0,x_1)$ and the corresponding conditional vector field. For the linear interpolation introduced above, the CFM loss takes the form
\begin{equation}
    \mathcal{L}_{\text{CFM}}(\theta)
    := 
    \mathbb{E}_{t \sim \mathcal{U}[0,1],\,(x_0, x_1) \sim \pi}
    \Bigl[ 
    \bigl\| 
    v^\theta_t(x_t) - (x_1 - x_0) 
    \bigr\|^2 
    \Bigr],
    \label{eq:cfm_loss}
\end{equation}
where $\pi$ denotes a coupling between the base distribution $p_0$ and the target distribution $p_1$, i.e., a joint distribution on $(x_0,x_1)$ whose marginals are $p_0$ and $p_1$, respectively.
The independent coupling $\pi=p_0\otimes p_1$ corresponds to sampling $x_0$ and $x_1$ independently from $p_0$ and $p_1$, respectively. 
Such independent pairing may lead to intersecting transport paths and thereby increase the complexity of the velocity field to be learned. 

For the linear interpolation above, the conditional velocity is
$x_1-x_0$, and the corresponding marginal vector field is given by
\[
v_t(x)
=
\mathbb{E}[x_1-x_0\mid x_t=x].
\]
A fundamental result of flow matching \cite{lipman2023flow} shows that this marginal vector field can be learned through the tractable conditional flow matching objective.
To illustrate the transport process induced by the learned flow matching model, Figure~\ref{fig:flow_matching} visualizes the evolution of samples along the learned ODE trajectories on CelebA.

\begin{figure}[htp]
    \centering
    \includegraphics[width=0.95\textwidth]
    {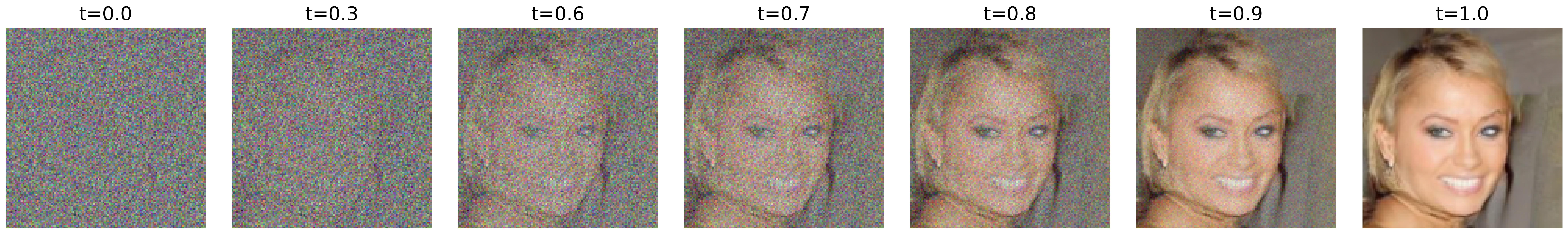}
    \caption{Evolution of samples along the learned flow matching trajectories on CelebA at different time points. The ODE \eqref{eq:flow_ode} is integrated using the forward Euler method with a step size of $0.01$.}
    \label{fig:flow_matching}
\end{figure}

\subsection{PnP-Flow: Plug-and-Play with flow matching}
Given a pretrained flow matching vector field $v^\theta$, an estimate of the clean endpoint $x_1$ can be constructed from an intermediate state $x_t$ along the prescribed interpolation path.
We first give the following definition.

\begin{defn}[Flow matching denoiser {\cite{pnp_flow}}]
\label{def:fm_denoiser}
Given a pretrained flow matching vector field $v^\theta$, the
time-dependent denoiser $D_t:\R^d\to\R^d$ is defined by
\begin{equation}
\label{eq:def-denoiser}
    D_t(x)
    :=
    x+(1-t)v^\theta_t(x),
    \qquad t\in[0,1].
\end{equation}
\end{defn}

Next, we explain the motivation behind the choice of the denoiser in \eqref{eq:def-denoiser}.
For the linear interpolation
\[
    x_t=(1-t)x_0+t \, x_1,
    \qquad (x_0,x_1)\sim\pi.
\]
For this interpolation, the minimizer $v_t^*$ of the CFM loss \eqref{eq:cfm_loss} over all
admissible vector fields is given by
\[
    v_t^*(x)
    =
    \mathbb{E}_{(x_0,x_1)\sim\pi}
    \left[
        x_1-x_0
        \,\middle|\,
        x_t=x
    \right].
\]
In practice, $v_t^*$ is replaced by the learned vector field $v^\theta$, yielding the practical denoiser $D_t$. Therefore, the corresponding ideal denoiser satisfies, for any $x \in \R^d$ and $t \in [0,1]$,
\begin{equation}
\begin{aligned}
\label{eq:fm_mmse}
D_t(x)
&=
x+(1-t)v_t^*(x)\\
&=
\mathbb E_{(x_0, x_1) \sim \pi}
\left[
x_t+(1-t)(x_1-x_0)
\,\middle|\,
x_t=x
\right] \\
&=
\mathbb E_{(x_0, x_1) \sim \pi}
\left[
x_1
\,\middle|\,
x_t=x
\right].
\end{aligned}
\end{equation}
Hence, $D_t(x)=\mathbb{E}_{(x_0, x_1) \sim \pi}[x_1\mid x_t=x]$ is the minimum mean-square error (MMSE) estimator of the clean endpoint $x_1$ given $x_t=x$, which is also used in    \cite{pokle2024trainingfree,zhang2024flow}.\\



\noindent
\textbf{PnP-Flow algorithm.} 
Recently in \cite{pnp_flow}, the authors incorporate the time-dependent denoiser $D_t$ defined in \eqref{eq:def-denoiser} into the FBS framework, leading to
\begin{equation}
\label{eq:pnp-flow}
\begin{cases}
u_k = x_k - \gamma_k \nabla f(x_k), 
\\[2pt]
y_k = (1-t_k)\xi_k + t_k u_k,
\\[2pt]
x_{k+1}=D_{t_k}(y_k),
\end{cases}
\end{equation}
where $\{\xi_k\}_{k\in[N-1]_0}$ is a sequence of i.i.d. latent samples drawn from the base distribution $p_0$.
This iteration differs from the classical PnP-FBS scheme
\eqref{eq:pnp_fbs} in that the denoiser $D_t$ explicitly depends on the time, and the gradient update is first reprojected along the interpolation path before denoising.
For clarity, we sketch the above PnP-Flow algorithm  in {\bf Algorithm \ref{alg:pnpflow}}.

\FloatBarrier
\begin{algorithm}[htbp]
  \centering
  \caption{PnP-Flow iteration.}
  \label{alg:pnpflow}
  \begin{algorithmic}[1]
    \Require 
    Initialize $x_0$,
             the total iterations $N$,
             step sizes $\{\gamma_k\}_{k \in[N-1]_0}$,
             time sequence $\{t_k\}_{k \in[N-1]_0}$ with $t_0 = 0$.
    \Ensure Reconstructed image $x_N$
    \For{$k = 0$ \textbf{to} $N - 1$}
      \State $u_k = x_k - \gamma_k \nabla f(x_k)$
        \Comment{Gradient step}
      \State $y_k = (1 - t_k)\,{\xi}_k + t_k\,u_k$,
        \quad ${\xi}_k \sim p_0$
        \Comment{Interpolation}
      \State $x_{k+1} = D_{t_k}(y_k)$
        \Comment{Flow matching denoising}
    \EndFor
    \State \Return $x_N$
  \end{algorithmic}
\end{algorithm}

\emph{Choice of the step size.}
To balance the contributions of the data-fidelity update and the flow matching denoiser, the authors of \cite{pnp_flow} adopt the following time-dependent step size:
\begin{equation}
    \gamma_k=(1-t_k)^\alpha,
    \qquad \alpha\in(0,1].
\end{equation}
Thus, the step size gradually decreases with $t$, reducing the influence of the data-fidelity update in the later stages of the iteration; see
\cite{pnp_flow} for further motivation of this choice. This choice yields favorable numerical performance in practice.

Recalling the definition of the denoiser
$D_t(x)
:=
x+(1-t)v^\theta_t(x)$, 
we observe that the factor $1-t$ progressively attenuates the correction induced by the flow-matching prior as $t \to 1$. Consequently, $D_t$ approaches the identity mapping, with $D_1=\mathrm{I}_d$.
This provides a flow-time-dependent mechanism for controlling the strength of the learned prior, in contrast to conventional PnP methods based on predefined denoising operators \cite{venkatakrishnan2013plug,chan2016plug}.
Substituting the definition of $D_{t_k}$ into the last equation of \eqref{eq:pnp-flow} yields the following compact update rule:
\begin{equation}
    x_{k+1} 
    = 
    (1 - t_k)\,{\xi}_k
    + t_k\bigl[x_k - \gamma_k \nabla f(x_k)\bigr]
    + (1 - t_k)\,v^\theta_{t_k}(y_k),
\end{equation}
where the first term accounts for the contribution
of the base noise in the interpolation step, the second term incorporates the gradient-based data-fidelity update, and the third term provides the correction induced by the learned flow-matching prior.


Under some regularity assumptions, PnP-Flow algorithm \eqref{eq:pnp-flow} admits the following convergence result (see \cite[Proposition 4]{pnp_flow}).

\begin{prop}[Convergence of PnP-Flow]
    \label{prop:pnpflow_convergence}
    Assume that $f: \R^d \to \R$ is continuously differentiable. The
    learned vector field $v^\theta: [0, 1] \times \R^d \to \R^d$ is continuous,
    and the base distribution $p_0$ has bounded support. Let the time
    sequence $\{t_k\}_{k \in \N}$ satisfy $\sum_{k=0}^\infty (1 - t_k) <
    +\infty$ and set $\gamma_k := 1 - t_k, \, k \in \N$. If the sequence $\{x_k\}_{k \in
    \N}$ produced by PnP-Flow algorithm \eqref{eq:pnp-flow} is bounded, then it is convergent.
\end{prop}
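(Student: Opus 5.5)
The plan is to show that $\{x_k\}$ is a Cauchy sequence by bounding $\|x_{k+1}-x_k\|$ by a constant multiple of $1-t_k$. The summability $\sum_k (1-t_k)<\infty$ then finishes the argument. Start from the compact update rule stated just before Proposition~\ref{prop:pnpflow_convergence} and subtract $x_k$. Using $\gamma_k=1-t_k$, this gives
\begin{equation*}
x_{k+1}-x_k = (1-t_k)\Bigl[\xi_k - x_k - t_k\nabla f(x_k) + v^\theta_{t_k}(y_k)\Bigr].
\end{equation*}
The second step is to check that every term inside the bracket is uniformly bounded in $k$.

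\textbf{Bounding the bracket.} By assumption the iterates lie in a compact set $K$. Since the support of $p_0$ is bounded, the samples satisfy $\|\xi_k\|\le R$ for some $R$, surely or almost surely. Because $f$ is $C^1$, $\nabla f$ is continuous and therefore bounded on $K$, say by $G$. Also $t_k\in[0,1]$, so $\|u_k\|\le \sup_K\|x\|+G$. This gives the bound
\begin{equation*}
\|y_k\|\le (1-t_k)R+t_k\|u_k\|\le \max\bigl(R,\ \sup_K\|x\|+G\bigr)=:M,
\end{equation*}
so every $(t_k,y_k)$ lies in the compact set $[0,1]\times \bar B(0,M)$. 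Since $v^\theta$ is jointly continuous on $[0,1]\times\R^d$, it is bounded on this set, say by $V$. Altogether
\begin{equation*}
\|x_{k+1}-x_k\|\le C(1-t_k),\qquad C:=R+\sup_K\|x\|+G+V.
\end{equation*}

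\textbf{Conclusion.} For $m>n$, the triangle inequality gives $\|x_m-x_n\|\le C\sum_{k=n}^{m-1}(1-t_k)$. This is the tail of a convergent series, so it tends to zero. Hence $\{x_k\}$ is Cauchy and converges in $\R^d$. The argument holds pathwise for every realization of $\{\xi_k\}$ with $\xi_k\in\operatorname{supp}p_0$, that is, almost surely.

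\textbf{Main obstacle.} The one delicate point is making sure every argument fed to $v^\theta_{t_k}$ stays in a fixed compact set. This is exactly where the bounded-iterates hypothesis is used: it controls $u_k$ through $\nabla f$ on $K$, while the bounded support of $p_0$ controls the noise part of $y_k$. Joint continuity of $v^\theta$ in $(t,x)$ is needed so that the bound $V$ is uniform in $t_k$. Without that, a time-varying $t_k$ could break the bound. Everything else is routine.
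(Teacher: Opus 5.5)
Your proof is correct and matches the standard argument: the paper does not prove this proposition itself but cites \cite[Proposition 4]{pnp_flow}. It uses the same scheme in its own proof of Theorem~\ref{thm:hp_pnp_flow_convergence}, namely bounding $\|x_{k+1}-x_k\|$ by a uniform constant times $1-t_k$ (via bounded iterates, bounded support of $p_0$, and continuity of $\nabla f$ and $v^\theta$) and concluding by the Cauchy criterion. Your explicit check that $(t_k,y_k)$ stays in a fixed compact set, so that joint continuity of $v^\theta$ yields a bound uniform in $k$, is precisely the step the paper leaves implicit.
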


Despite its effectiveness, PnP-Flow algorithm \eqref{eq:pnp-flow} has two limitations.
\begin{enumerate}
\renewcommand{\labelenumi}{\arabic{enumi})}
\item First, the convergence result in Proposition \ref{prop:pnpflow_convergence}
requires the time sequence to satisfy
$
\sum_{k=0}^{\infty}(1-t_k)<+\infty.
$
However, the practical implementation of PnP-Flow algorithm \eqref{eq:pnp-flow} employs the 
linear schedule
$
t_k=\frac{k}{N},
\ k \in [N]_0.
$
For this time sequence,
$
\sum_{k=0}^{N}(1-t_k)
=
\sum_{k=0}^{N}\tfrac{N-k}{N}
=
\tfrac{N+1}{2},
$
which goes to infinity as $N\to\infty$. Hence, the family of linear time sequences is not uniformly bounded with respect to $N$, and therefore does not satisfy the convergence condition.

\item Second, PnP-Flow algorithm \eqref{eq:pnp-flow} sets
$t_0=0$, in which case the first interpolation step reduces to $y_0={\xi}$ and is therefore independent of the data-fidelity update $x_0-\gamma_0\nabla f(x_0)$. While this choice makes the first iteration
independent of the initialization (see \cite{pnp_flow}), it also prevents the measurement information encoded by the data-fidelity gradient from entering the first denoising step.
\end{enumerate}
Next we aim to propose an improved PnP-Flow algorithm to address the above issues.

\section{An efficient Hybrid-Projection PnP-Flow}
\label{sec:method}




Before presenting our proposed algorithm, we first introduce the following hybrid time schedule.\\

\emph{Hybrid time schedule.}
To address the first issue mentioned above, we introduce a switching point $S := \lfloor K N \rfloor$, where $N$ denotes the total number of iterations and $K \in (0, 1)$ determines the switching point between the two phases. 
Regarding the second issue, we introduce a tunable initial offset $t_0\in[0,1)$, which allows the data-fidelity update to contribute from the first iteration when $t_0>0$.
More precisely, for $\lambda\in(0,1)$, we define the hybrid time schedule by
\begin{equation}
\label{eq:hybrid_schedule}
    t_{k}^{(N)}
    :=
    \begin{cases}
        1 - (1 - t_0)\,\lambda^{k}, 
        & 0 \leq k \leq S ,  \\[6pt]
        t_{S}^{(N)}+
        \tfrac{k - S}
        {\,N  - S\,}\,(1 - t_{S}^{(N)}), 
        & S \leq k \leq N .
    \end{cases}
\end{equation}

We notice that the time schedule $\{t_{k}^{(N)}\}_{k \in [N]_0}$ defined by \eqref{eq:hybrid_schedule} consists of a geometric growth and linear growth phase, differing from the linear time schedule used in \cite{pnp_flow}. 
When $0 \leq k \leq S$, $t_{k}^{(N)}$ approaches $1$ geometrically, so that the noise weight $1-t_{k}^{(N)}$ decays exponentially. This rapidly reduces the influence of the base noise and promotes fast recovery of the dominant image structure. When $S \leq k \leq N$,  $t_{k}^{(N)}$ increases linearly from $t_S^{(N)}$ to $1$, providing a more gradual transition toward the final reconstruction and facilitating stable refinement.

The following result establishes a uniform bound and vanishing-tail property for the hybrid time schedule with respect to the iteration budget $N$.

\begin{prop}[Uniform boundedness and vanishing-tail property for the hybrid time schedule]
    \label{prop:hybrid_convergence}
    The time schedule $\{t_{k}^{(N)}\}_{k \in [N]_0}$ defined by
    \eqref{eq:hybrid_schedule} satisfies
    $\sup_{N \geq 1}
    \sum_{k=0}^{N} \big(1 - t_{k}^{(N)}\big) < +\infty$.
    Moreover, for any $q\in (0,1]$,
    $
    \lim_{N\to\infty}
    \sum_{k=S}^{N}\big(1-t_k^{(N)}\big)^q=0.
    $
\end{prop}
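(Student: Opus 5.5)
The plan is to split the sum at the switching point $S=\lfloor KN\rfloor$ and bound each phase separately, using only the explicit formula \eqref{eq:hybrid_schedule}. Write $1-t_k^{(N)}=(1-t_0)\lambda^k$ for $0\le k\le S$. For $S\le k\le N$ we have $1-t_k^{(N)}=(1-t_S^{(N)})\bigl(1-\tfrac{k-S}{N-S}\bigr)=(1-t_0)\lambda^S\,\tfrac{N-k}{N-S}$. The two definitions agree at $k=S$, so there is no ambiguity. (If $S=N$ were possible the second phase would be empty. But $K<1$ gives $S\le KN<N$, so $N-S\ge 1$. In fact $N-S\ge (1-K)N$.)

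\textbf{Uniform boundedness.} The geometric phase contributes
\[
\sum_{k=0}^{S}(1-t_0)\lambda^k\le \frac{1-t_0}{1-\lambda},
\]
independently of $N$. The linear phase contributes
\[
(1-t_0)\lambda^S\sum_{k=S}^{N}\frac{N-k}{N-S}=(1-t_0)\lambda^S\,\frac{N-S+1}{2}.
\]
Since $S\ge KN-1$, this is at most $(1-t_0)\lambda^{KN-1}\,\tfrac{N+1}{2}$. The sequence $\lambda^{KN}(N+1)$ tends to $0$ as $N\to\infty$ because exponential decay beats linear growth. It is therefore bounded by some constant $C(\lambda,K)$. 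Adding the two bounds gives $\sup_{N\ge1}\sum_{k=0}^N(1-t_k^{(N)})<\infty$.

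\textbf{Vanishing tail.} For $q\in(0,1]$ we have
\[
\sum_{k=S}^{N}\bigl(1-t_k^{(N)}\bigr)^q=(1-t_0)^q\lambda^{qS}\sum_{k=S}^{N}\Bigl(\tfrac{N-k}{N-S}\Bigr)^q\le (1-t_0)^q\lambda^{qS}(N-S+1),
\]
since each ratio lies in $[0,1]$. Using $S\ge KN-1$ and $N-S+1\le N+1$, the right-hand side is at most $(1-t_0)^q\lambda^{-q}\,(N+1)\,(\lambda^{qK})^N$. Here $\lambda^{qK}\in(0,1)$ is a fixed base, so this tends to $0$ as $N\to\infty$ by the standard fact $N r^N\to0$ for $r\in(0,1)$.

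The only delicate point, and the one I expect to be the main obstacle, is making sure the exponential factor really does control the length of the linear phase, which is of order $(1-K)N$. This works precisely because $S$ grows linearly in $N$ with $K>0$, so $\lambda^{qS}$ decays exponentially in $N$. I will handle the floor explicitly through $S\ge KN-1$ and state the elementary limit $Nr^N\to0$. Everything else is routine summation.
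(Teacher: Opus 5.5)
Your proof is correct and follows essentially the same route as the paper: you split at $S$, sum the geometric phase, compute the linear phase exactly as $\frac{N-S+1}{2}(1-t_0)\lambda^S$, and bound the tail by $(1-t_0)^q(N-S+1)\lambda^{qS}$. The only difference is in controlling $(N-S+1)\lambda^S$: the paper extracts an explicit uniform constant via $(S+1)\lambda^S\le (1-\lambda)^{-2}$ and $N+1<\frac{S+1}{K}+1$, whereas you use $S\ge KN-1$ together with $Nr^N\to 0$, which gives a non-explicit bound but states more carefully than the paper why $(N-S+1)\lambda^{qS}\to 0$ (and you also check $N-S\ge 1$, so the linear phase is well defined).
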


\begin{proof}
\textbf{Proof.}
For the geometric phase $0\leq k\leq S-1$, the definition
\eqref{eq:hybrid_schedule} gives
\[
\sum_{k=0}^{S-1}(1-t_{k}^{(N)})
=
(1-t_0)\sum_{k=0}^{S-1}\lambda^k
=
(1-t_0)\tfrac{1-\lambda^S}{1-\lambda}.
\]
For the linear phase, i.e., $S\leq k\leq N$, we similarly obtain from \eqref{eq:hybrid_schedule} that
\[
\sum_{k=S}^{N}(1-t_{k}^{(N)})
=
(1-t_S^{(N)})
\sum_{k=S}^{N}
\left(
1-\tfrac{k-S}{N-S}
\right)
=
\tfrac{N-S+1}{2}(1-t_S^{(N)}).
\]
Together with the definition of $t_S$, we have 
\[
\sum_{k=0}^{N}(1-t_{k}^{(N)})
=
(1-t_0)
\Big(
\tfrac{1-\lambda^S}{1-\lambda}
+
\tfrac{N-S+1}{2}\lambda^S
\Big).
\]
For a fixed $K\in(0,1)$, since $S=\lfloor KN\rfloor$, we have
$N-S+1\leq N+1<\frac{S+1}{K}+1$. Moreover,
\[
(S+1)\lambda^S
\leq
\sum_{j=0}^{\infty}(j+1)\lambda^j
=
\tfrac{1}{(1-\lambda)^2},
\]
and $\lambda^S < 1$. Therefore,
\[
(N-S+1)\lambda^S
\leq
\tfrac{1}{K(1-\lambda)^2}+1.
\]
Combining the above estimates yields
\[
\begin{aligned}
\sum_{k=0}^{N}(1-t_k^{(N)})
&\leq
(1-t_0)
\left[
\tfrac{1}{1-\lambda}
+
\tfrac{1}{2K(1-\lambda)^2}
+
\tfrac{1}{2}
\right].
\end{aligned}
\]
The right-hand side is independent of $N$. Taking the supremum
over $N\geq1$ gives
\[
\sup_{N\geq1}
\sum_{k=0}^{N}\big(1-t_k^{(N)}\big)
\leq
(1-t_0)
\left[
\tfrac{1}{1-\lambda}
+
\tfrac{1}{2K(1-\lambda)^2}
+
\tfrac{1}{2}
\right]
<+\infty.
\]
The first assertion is thus validated. Next, we aim to prove the second assertion. For the linear phase, we have
\begin{equation}
\label{eq:second-assertion}
\sum_{k=S}^{N}
\big(1-t_k^{(N)}\big)^q
=
(1-t_0)^q 
\lambda^{q S}
\sum_{k=S}^{N}
\big(
\tfrac{N-k}{N-S}
\big)^q
\leq 
(1-t_0)^q 
(N-S+1)
\lambda^{q S}.
\end{equation}
Since $S=\lfloor KN\rfloor \to\infty$
as $N\to\infty$, $K\in(0,1)$,
and $\lambda\in(0,1)$, it follows that
\[
\lim_{N\to\infty}(N-S+1)\lambda^{q S}=0.
\]
Combining this with \eqref{eq:second-assertion} yields
\[
\lim_{N\to\infty}
\sum_{k=S}^{N}
\big(1-t_k^{(N)}\big)^q=0.
\]
This completes the second assertion.
\end{proof}

Image intensities are commonly represented in $[0,1]$ and are typically normalized to $[-1,1]$ during training. However, PnP-Flow algorithm \eqref{eq:pnp-flow} does not guarantee that the intermediate iterates remain within this normalized range. Pixel values outside the valid range may lead to numerical instability and degrade reconstruction quality. \\

\emph{Pixel-space projection.}
To overcome this difficulty, we impose a box constraint by applying a component-wise
projection operator at the end of each iteration.

\begin{defn}[Projection operator]
    \label{def:projection}
    For any vector $x \in \R^d$, the projection operator
    $\mathcal{P}: \R^d \to [-1, 1]^d$ is defined component-wise as, for any $i \in [d]$,
    \begin{equation}
    \label{eq:def-Projection}
        \mathcal{P}(x)_i 
        := 
        \max\bigl(-1,\; \min(1,\; x_i)\bigr).
    \end{equation}
\end{defn}

The projection operator possesses two properties, which are crucial to the subsequent analysis:
\begin{enumerate}[label=(\roman*)]
    \item \textit{Non-expansiveness.} For any $a, b \in \R^d$,
        $\|\mathcal{P}(a) - \mathcal{P}(b)\| \leq \|a - b\|$, i.e.,
        the projection operator $\mathcal{P}$ is $1$-Lipschitz.
    \item 
    \textit{Identity on the constraint set.} $\mathcal{P}(x) = x$ if and
        only if $x \in [-1, 1]^d$; the projection modifies only those
        components that violate the bound.
\end{enumerate}


By combining the hybrid time schedule \eqref{eq:hybrid_schedule} with the projection operator $\mathcal{P}$ defined in \eqref{eq:def-Projection}, we are able to propose our new efficient Hybrid-Projection PnP-Flow (HP-PnP-Flow) algorithm.\\

\noindent\textbf{HP-PnP-Flow algorithm.}
In this paper, we introduce an improved variant of PnP-Flow,
termed HP-PnP-Flow:
\begin{equation}
\begin{aligned}
\label{eq:hp-pnp-flow}
\begin{cases}
u_k = x_k - \gamma_k \nabla f(x_k), 
\\[2pt]
y_{k} = (1-t_k){\xi}_k
+
t_k u_k,
\\[2pt]
x_{k+1}=\mathcal{P}\bigl(D_{t_k}(y_k)\bigr),
\end{cases}
\end{aligned}
\end{equation}
where $\xi_k \sim p_0$.
For a more intuitive presentation, the complete iteration is given in {\bf Algorithm \ref{alg:hp_pnp_flow}}.

\FloatBarrier
\begin{algorithm}[htbp]
  \centering
  \caption{HP-PnP-Flow iteration.}
  \label{alg:hp_pnp_flow}
  \begin{algorithmic}[1]
    \Require Initialize $x_0$,
             the total iterations $N$,
             step sizes $\{\gamma_k\}_{k \in[N-1]_0}$,
             time sequence $\{t_k\}_{k \in[N-1]_0}$ with $t_0 \in [0,1)$.
    \Ensure Reconstructed image $x_N$
    \For{$k = 0$ \textbf{to} $N - 1$}
      \State $u_k = x_k - \gamma_k \nabla f(x_k)$
        \Comment{Gradient step}
      \State $y_k = (1 - t_k)\,{\xi}_k + t_k\,u_k$,
        \quad ${\xi}_k \sim p_0$
        \Comment{Interpolation}
      \State $x_{k+1} = \mathcal{P}\bigl(D_{t_k}(y_k)\bigr)$
        \Comment{Denoising + projection}
    \EndFor
    \State \Return $x_N$
  \end{algorithmic}
\end{algorithm}

\emph{Choice of the step size.} By introducing a scaling parameter $\eta \in (0,+\infty)$, we adopt the following time-dependent step size in this paper:
\begin{equation}
    \label{eq:gamma_hp}
    \gamma_k=\eta(1-t_k)^\alpha,
    \qquad \alpha\in(0,1].
\end{equation}
In the special case $\eta=1$, the above step size with hyperparameter reduces to the usual step size in the literature \cite{pnp_flow}. 
The scaling hyperparameter $\eta$ provides additional control over the overall step magnitude without altering the decay profile determined by $\alpha$, thereby allowing the step size to be adapted to different degradation tasks. 
%
We next establish the convergence of the HP-PnP-Flow algorithm.

\begin{thm}[Convergence of HP-PnP-Flow]
\label{thm:hp_pnp_flow_convergence}
Assume that $f:\R^d\to\R$ is continuously differentiable, that the learned vector field $v^\theta:[0,1]\times\R^d\to\R^d$ is continuous, and that the base distribution $p_0$ has bounded support. For each iteration budget $N$, let $\{x_k^{(N)}\}_{k\in [N]_0}$ be generated by HP-PnP-Flow
\eqref{eq:hp-pnp-flow} with the hybrid time schedule
$\{t_k^{(N)}\}_{k\in [N]_0}$ defined by
\eqref{eq:hybrid_schedule} and
$
\gamma_k^{(N)}
$ defined by \eqref{eq:gamma_hp}.
Assume that the same initialization and the same latent sequence
$\{\xi_k\}_{k\in\N}$ are used for all $N$.
Then the sequence $\{x_N^{(N)}\}_{N \in \N}$ generated by HP-PnP-Flow \eqref{eq:hp-pnp-flow} is convergent.
\end{thm}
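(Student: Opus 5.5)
The plan is to compare every $x_N^{(N)}$ with a single $N$-independent reference sequence, and then show that the part of the trajectory that does depend on $N$ contributes a vanishing amount.

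\textbf{Step 1: the reference sequence.} For $0\le k\le S=\lfloor KN\rfloor$ the hybrid schedule \eqref{eq:hybrid_schedule} gives $t_k^{(N)}=1-(1-t_0)\lambda^k$, which does not depend on $N$. The step size $\gamma_k^{(N)}=\eta(1-t_k^{(N)})^\alpha$ is then also independent of $N$ on this range. Both formulas in \eqref{eq:hybrid_schedule} agree at $k=S$. Define the reference sequence $\{z_k\}_{k\in\N}$ by running \eqref{eq:hp-pnp-flow} with the infinite geometric schedule $\tau_k:=1-(1-t_0)\lambda^k$, step sizes $\eta(1-\tau_k)^\alpha$, the common initialization, and the common latent sequence $\{\xi_k\}$. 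By induction on $k$, $x_k^{(N)}=z_k$ for all $0\le k\le S$. In particular $x_S^{(N)}=z_{S}$ with $S=\lfloor KN\rfloor\to\infty$.

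\textbf{Step 2: a uniform one-step increment bound.} This is the heart of the argument, and it is exactly where the projection replaces the boundedness hypothesis of Proposition~\ref{prop:pnpflow_convergence}. Since $\mathcal{P}$ maps into $[-1,1]^d$, every iterate $x_k$ with $k\ge1$ lies in the box. The initialization is a single fixed point, so it can be absorbed into the constants. Also $\gamma_k\le\eta$ and $t_k\in[0,1]$, and $\nabla f$ is continuous, so $\nabla f(x_k)$ is bounded by $G:=\max\{\|\nabla f(x)\| : x\in[-1,1]^d\cup\{x_0\}\}$. Hence $u_k$ lies in a fixed compact set. Because $p_0$ has bounded support, $\|\xi_k\|\le R$, and therefore $y_k$ lies in a fixed compact set $B$. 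Continuity of $v^\theta$ on the compact set $[0,1]\times B$ gives $\|v^\theta_{t_k}(y_k)\|\le V$.

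Now use that $x_k$ is in the box, so $x_k=\mathcal{P}(x_k)$, together with the non-expansiveness of $\mathcal{P}$. Writing $D_{t_k}(y_k)-x_k=(1-t_k)(\xi_k-x_k)-t_k\gamma_k\nabla f(x_k)+(1-t_k)v^\theta_{t_k}(y_k)$, we get
\[
\|x_{k+1}-x_k\|\le\|D_{t_k}(y_k)-x_k\|\le C\big[(1-t_k)+(1-t_k)^\alpha\big],
\]
with a constant $C$ depending only on $R$, $\sqrt d$, $G$, $\eta$ and $V$. In particular $C$ is independent of both $k$ and $N$. The bound holds for the reference sequence and for every $x^{(N)}$.

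\textbf{Step 3: convergence of $z_k$, and the tail estimate.} For the reference sequence, $\sum_k[(1-\tau_k)+(1-\tau_k)^\alpha]=\sum_k[(1-t_0)\lambda^k+(1-t_0)^\alpha\lambda^{\alpha k}]<\infty$. So $\{z_k\}$ is Cauchy and converges to some $z^*$. Then we split
\[
\|x_N^{(N)}-z^*\|\le\sum_{k=S}^{N-1}\|x_{k+1}^{(N)}-x_k^{(N)}\|+\|z_S-z^*\|\le C\sum_{k=S}^{N}\Big[\big(1-t_k^{(N)}\big)+\big(1-t_k^{(N)}\big)^\alpha\Big]+\|z_{\lfloor KN\rfloor}-z^*\|.
\]
By the vanishing-tail part of Proposition~\ref{prop:hybrid_convergence}, applied with $q=1$ and $q=\alpha\in(0,1]$, the sum tends to $0$. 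The last term tends to $0$ because $\lfloor KN\rfloor\to\infty$. Hence $x_N^{(N)}\to z^*$.

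The main obstacle is Step 2: obtaining an increment bound whose constant is uniform in $N$. This needs the a priori confinement of the iterates, which the projection supplies, and it also needs care with the initial point $x_0$, which may lie outside the box. Step 1 also needs care in checking that the common latent sequence makes the first $S$ iterates genuinely identical across $N$.
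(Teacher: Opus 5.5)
Your proof is correct, and it uses the same three ingredients as the paper:
\begin{itemize}
\item the projection-based increment bound for $k\ge1$ (the paper uses $(1-t_k)\le(1-t_k)^\alpha$ to write it as $M(1-t_k)^\alpha$, while you keep both terms and apply Proposition~\ref{prop:hybrid_convergence} with $q=1$ and $q=\alpha$);
\item the fact that the first $\lfloor KN\rfloor$ iterates are shared across budgets;
\item the vanishing tail of the linear phase.
\end{itemize}

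The difference is only in how you organize these. The paper runs a direct Cauchy estimate between two budgets $N_1<N_2$. It anchors both at the common iterate $x_{S_1}$ and bounds the extra geometric segment $[S_1,S_2)$ of the $N_2$-trajectory separately. You instead compare every $x_N^{(N)}$ with the single $N$-independent geometric trajectory $z_k$. This folds the paper's $[S_1,S_2)$ estimate into the convergence of $z_k$. It also identifies the limit as $\lim_k z_k$, the limit of the purely geometric scheme, so the limit is independent of $K$; the paper's Cauchy argument does not show this.

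As you anticipated, the step $\|x_{k+1}-x_k\|\le\|D_{t_k}(y_k)-x_k\|$ requires $x_k\in[-1,1]^d$, so it fails at $k=0$ if $x_0$ lies outside the box. This does no harm, because you only use the bound for $k\ge S\ge1$ and for tails of $z_k$.
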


\begin{proof}
\textbf{Proof.}
Since $x_k^{(N)}=\mathcal{P}(D_{t^{(N)}_{k-1}}(y^{(N)}_{k-1})) \in [-1,1]^d$, using the triangle inequality, identity on the constraint set and the Lipschitz condition of $\mathcal{P}$ gives, for any $ k \geq 1$,
\begin{equation*}
        \big\|x^{(N)}_{k+1} - x^{(N)}_k\big\|
        =
        \big\|
        \mathcal{P}(D_{t^{(N)}_{k}}(y^{(N)}_k)) - 
        \mathcal{P}(x^{(N)}_k )
        \big\|
        \leq 
        \big\|D_{t^{(N)}_k}(y^{(N)}_k) - x^{(N)}_k\big\|.
\end{equation*}
From 
HP-PnP-Flow algorithm \eqref{eq:hp-pnp-flow}, one can see that
\begin{equation}
\begin{aligned}
& 
\big\|
D_{t^{(N)}_k}(y^{(N)}_k) - x^{(N)}_k
\big\| \\
&= 
\bigl\| 
(1 - t^{(N)}_k)\,{\xi}_k
+ t^{(N)}_k\,(x^{(N)}_k - \eta (1 - t^{(N)}_k)^\alpha \,\nabla f(x^{(N)}_k))
+ (1 - t^{(N)}_k)\,v_{t^{(N)}_k}^\theta(y^{(N)}_k) - x^{(N)}_k 
\bigr\| 
\\[4pt]
& \leq 
\big(1 - t^{(N)}_k\big)\,
\bigl\|
{\xi}_k - x_k^{(N)}
+ v_{t^{(N)}_k}^\theta(y^{(N)}_k) 
\bigr\|
+
\eta \big(1 - t^{(N)}_k\big)^\alpha\,
\big\|
t^{(N)}_k\,\nabla f(x^{(N)}_k)
\big\|.   
\end{aligned}
\end{equation}
Under the continuity assumptions on $\nabla f$ and $v^\theta$, the boundedness of $\{x^{(N)}_k\}$ in $[-1, 1]^d$ and the bounded support of ${\xi}_k$, there exists a constant $M >0$, independent of $k$ and $N$, such that 
\[
\big\|
{\xi}_k - x^{(N)}_k
+ v_{t^{(N)}_k}^\theta(y^{(N)}_k) 
\big\|
+
\eta \,
\big\|
t^{(N)}_k\,\nabla f(x^{(N)}_k)
\big\|
\leq 
M.
\]
As a result, we have, for any $k \geq 1$,
\begin{equation}
\label{eq:error}
\big\|x_{k+1}^{(N)}-x_k^{(N)}\big\|
\leq 
M\big(1-t_k^{(N)}\big)^\alpha.
\end{equation}
According to Proposition \ref{prop:hybrid_convergence},
for any $\epsilon>0$, there exists $N^*\geq 1$
such that, for any $N\geq N^*$,
\begin{equation}
\label{eq:remainder-bound}
\sum_{k=S}^{N}
\bigl(1-t_k^{(N)}\bigr)^\alpha
<
\tfrac{\epsilon}{3M},
\qquad S=\lfloor KN\rfloor.
\end{equation}
Let $N_2>N_1\geq N^*$ and set
$S_i=\lfloor KN_i\rfloor$ for  $i \in [1,2]$.
Since $S_1\leq S_2$ and the same initialization and latent sequence are used for all iteration budgets, we have
\begin{equation}
\label{eq:x_n1-x_n2}
x_{S_1}^{(N_1)}=x_{S_1}^{(N_2)}.
\end{equation}
Using the triangle inequality,
\eqref{eq:x_n1-x_n2}, and \eqref{eq:error},
we obtain
\begin{equation}
\begin{aligned}
\big\|
x_{N_1}^{(N_1)}-x_{N_2}^{(N_2)}
\big\|
& \leq
\big\|x_{N_1}^{(N_1)}-x_{S_1}^{(N_1)}\big\|
+
\big\|x_{N_2}^{(N_2)}-x_{S_1}^{(N_2)}\big\|\\
& \leq
M\sum_{k=S_1}^{N_1-1}
\bigl(1-t_k^{(N_1)}\bigr)^\alpha
+
M\sum_{k=S_1}^{N_2-1}
\bigl(1-t_k^{(N_2)}\bigr)^\alpha\\
& =
M\sum_{k=S_1}^{N_1-1}
\bigl(1-t_k^{(N_1)}\bigr)^\alpha
+
M\sum_{k=S_1}^{S_2-1}
\bigl(1-t_k^{(N_2)}\bigr)^\alpha+
M\sum_{k=S_2}^{N_2-1}
\bigl(1-t_k^{(N_2)}\bigr)^\alpha.
\end{aligned}
\end{equation}
By \eqref{eq:remainder-bound}, the first
and third terms are each bounded by
$\epsilon/3$. For the second term,
the geometric phase of the hybrid schedule
gives
\[
\begin{aligned}
M\sum_{k=S_1}^{S_2-1}
\bigl(1-t_k^{(N_2)}\bigr)^\alpha
&=
M(1-t_0)^\alpha
\sum_{k=S_1}^{S_2-1}\lambda^{\alpha k}
\leq 
\tfrac{M(1-t_0)^\alpha}{1-\lambda^\alpha}
\lambda^{S_1 ^\alpha}.
\end{aligned}
\]
Since $S_1=\lfloor KN_1\rfloor\to\infty$
as $N_1\to\infty$, and $\lambda \in (0,1)$, there exists
$N_0 \geq 1$ sufficiently large such that, for any $N_1 \geq N_0$,
\[
\tfrac{M(1-t_0)^\alpha}{1-\lambda^\alpha}
\lambda^{\alpha\lfloor KN_1\rfloor}
<\tfrac{\epsilon}{3}.
\]
Consequently, for any $N_2>N_1\geq \max\{N_0, N^\star\}$,
\[
\big\|
x_{N_1}^{(N_1)}-x_{N_2}^{(N_2)}
\big\|<\epsilon.
\]
Hence, $\{x_N^{(N)}\}_{N\in\mathbb N}$
is a Cauchy sequence in $\mathbb R^d$
and therefore converges.
\end{proof}

\textbf{Comparison to PnP-Flow.}
Our paper is clearly motivated by \cite{pnp_flow} and aims to achieve more promising performance on various inverse problems. In the following, we compare HP-PnP-Flow in this work with PnP-Flow and summarize the main improvements. 

\begin{enumerate}[label=(\roman*)]
    \item \textit{Time schedule design.}  The linear schedule used in \cite{pnp_flow} is not uniformly summable with respect to the iteration budget $N$, whereas the proposed hybrid schedule \eqref{eq:hybrid_schedule} satisfies the uniform summability properties established in Proposition \ref{prop:hybrid_convergence}.

    \item \textit{Pixel-space projection.} The convergence analysis of PnP-Flow in \cite[Proposition 4]{pnp_flow} assumes bounded iterates of $\{x_k\}$. In our algorithm, the projection $\mathcal{P}$ enforces $x_k\in[-1,1]^d$ at every iteration, thereby guaranteeing boundedness of the iterates directly from the algorithmic construction.

    \item \textit{Step size design.} The step size in \cite{pnp_flow} is chosen as $\gamma_k=(1-t_k)^\alpha$, whereas we adopt the scaled form
    $\gamma_k=\eta(1-t_k)^\alpha$ with $\eta\in (0,+\infty)$, allowing the overall step magnitude to be adjusted independently of the decay profile.

    \item \textit{Initial time offset.}
    The PnP-Flow algorithm \cite{pnp_flow} adopts $t_0=0$, rendering the first interpolation step independent of the data-fidelity update. We instead introduce a tunable initial offset $t_0 \in [0,1)$ in our algorithm, which enables measurement information to influence the first denoising step when $t_0>0$.
\end{enumerate}

\section{Numerical experiments}
\label{sec:experiments}
In this section, we evaluate the performance of HP-PnP-Flow algorithm \eqref{eq:hp-pnp-flow} on five canonical image restoration tasks, including denoising, deblurring, super-resolution, random-pixel inpainting, and box inpainting, using two publicly available datasets.
We compare our algorithm with several baseline methods using PSNR, SSIM, and LPIPS, and perform ablation studies to assess the effects of the hybrid time schedule and the initial time offset.

\subsection{Experimental Setup}

\noindent
\textbf{Datasets and preprocessing.}
We use two datasets with complementary characteristics.

\begin{itemize}
    \item \textbf{CelebA} contains approximately $200,000$ celebrity face images (see \cite{liu2015deep}). All images are center-cropped and resized to $128 \times 128$ pixels. 
    The dataset provides well-structured facial
     patterns and diverse texture details, making it suitable for evaluating image restoration methods.
    \item \textbf{AFHQ-Cat} is the cat subset of the Animal Faces HQ dataset (see \cite{choi2020stargan}), comprising roughly $5\,000$ images. Images are resized to $256 \times 256$ pixels. The higher resolution and fine fur detail provide a more challenging test of generalization.
\end{itemize}
Both datasets use standard training, validation, and test splits. All quantitative evaluations are performed on the test set. Pixel values are
normalized to $[-1, 1]$.\\

\noindent\textbf{Degradation tasks.}
We consider five degradation tasks. In all cases, additive Gaussian noise is included, with the task-dependent noise level specified below.
\begin{itemize}
    \item \textbf{Denoising:}
    $H=\mathrm{I}_d$, with $\sigma=0.2$.

    \item \textbf{Deblurring:}
    $H$ is a Gaussian blurring operator with
    $b=1.0$ on CelebA and $b=3.0$ on AFHQ-Cat, with $\sigma=0.05$.

    \item \textbf{Super-resolution:}
    $H$ is a bicubic downsampling operator with scale factor
    $\times 2$ on CelebA and $\times 4$ on AFHQ-Cat, with $\sigma=0.05$.

    \item \textbf{Random inpainting:}
    $H$ is a random binary masking operator retaining $30\%$ of the pixels,
    with $\sigma=0.01$.

    \item \textbf{Box inpainting:}
    $H$ masks a central $40\times40$ region on CelebA and an
    $80\times80$ region on AFHQ-Cat, with $\sigma=0.05$.
\end{itemize}


\noindent\textbf{Baseline methods.}
We compare our algorithm against several existing methods. 
PnP-Flow \cite{pnp_flow} is used as our primary baseline.
We additionally include three flow-matching-based restoration methods, namely OT-ODE \cite{pokle2024trainingfree}, D-Flow \cite{D-Flow}, and
Flow-Priors \cite{zhang2024flow}, together with PnP-GS \cite{pnp_gs}, a PnP-based baseline.
All baselines are evaluated using their optimal hyperparameters and officially released weights under identical degradation settings and test splits for
fair comparison.\\

\noindent\textbf{Implementation details.}
All experiments are conducted on an NVIDIA Tesla V100S GPU using PyTorch. Random seeds are fixed for reproducibility.
For the flow-matching-based methods, including our proposed method, we use the U-Net model weights and pretrained checkpoints from \cite{pnp_flow}.
The only exception is PnP-GS, which uses its own pretrained denoiser.

We note a difference in the evaluation protocol. Martin et al. \cite{pnp_flow} report results averaged over $25$ batches of $4$ images each, whereas we recompute all evaluation metrics using $100$ independent runs for each image. 
As a result, the values reported here may differ slightly from those in the original publication.
For each baseline method, we use the optimal hyperparameters reported in \cite{pnp_flow}, which were selected by grid search on the validation set
to maximize PSNR. 
Detailed hyperparameter settings for HP-PnP-Flow and all baseline methods are provided in Appendix \ref{sec:appendix_hyperparams}.

\subsection{Main Numerical Results}
Next we report quantitative comparisons of all methods across multiple image restoration tasks using PSNR, SSIM, and LPIPS metrics.
By ''N/A'' we mean that a method is not applicable to the task.
For example, PnP-GS relies on a discriminative denoiser that cannot perform generative completion and thus cannot handle the box inpainting forward model.
PnP-Flow(m) and HP-PnP-Flow(m) denote variants that average the denoising results over $m$ independent samples of $\xi$ at each iteration. 
Bold and underlining values indicate the best and the second-best performance, respectively.

\FloatBarrier
\begin{table}[!htbp]
    \centering
    \tiny
    \setlength{\tabcolsep}{0.6mm}
    \caption{Quantitative results on CelebA (100 test images).}
    \label{tab:celeba_results}
    \begin{tabularx}{\linewidth}{@{}l*{15}{>{\centering\arraybackslash}X}@{}}
        \toprule
        & \multicolumn{3}{c}{\textbf{Denoise}}
        & \multicolumn{3}{c}{\textbf{Deblur}}
        & \multicolumn{3}{c}{\textbf{Super-res.}}
        & \multicolumn{3}{c}{\textbf{Rand.\ inp.}}
        & \multicolumn{3}{c}{\textbf{Box inp.}} \\
        \cmidrule(lr){2-4} \cmidrule(lr){5-7} \cmidrule(lr){8-10} \cmidrule(lr){11-13} \cmidrule(lr){14-16}
        \textbf{Method}
        & PSNR$\uparrow$ & SSIM$\uparrow$ & LPIPS$\downarrow$
        & PSNR$\uparrow$ & SSIM$\uparrow$ & LPIPS$\downarrow$
        & PSNR$\uparrow$ & SSIM$\uparrow$ & LPIPS$\downarrow$
        & PSNR$\uparrow$ & SSIM$\uparrow$ & LPIPS$\downarrow$
        & PSNR$\uparrow$ & SSIM$\uparrow$ & LPIPS$\downarrow$ \\
        \midrule
        Degraded
        & 20.01 & 0.349 & 0.365
        & 27.79 & 0.739 & 0.126
        & 10.51 & 0.184 & 0.824
        & 12.26 & 0.197 & 1.024
        & 22.42 & 0.745 & 0.209 \\
        PnP-GS
        & \underline{32.59} & 0.910 & 0.035
        & 33.84 & 0.922 & 0.040
        & 30.35 & 0.902 & 0.056
        & 28.42 & 0.845 & 0.087
        & \multicolumn{3}{c}{N/A} \\
        OT-ODE
        & 30.49 & 0.858 & \underline{0.032}
        & 32.87 & 0.918 & \textbf{0.028}
        & 31.56 & 0.910 & \textbf{0.020}
        & 28.82 & 0.874 & 0.050
        & 29.92 & 0.923 & 0.033 \\
        D-Flow
        & 25.99 & 0.604 & 0.090
        & 31.16 & 0.852 & \underline{0.037}
        & 30.39 & 0.847 & 0.025
        & 33.65 & 0.944 & \underline{0.015}
        & 30.93 & 0.901 & \underline{0.024} \\
        Flow-Priors
        & 29.30 & 0.769 & 0.129
        & 31.47 & 0.858 & 0.055
        & 28.31 & 0.716 & 0.099
        & 32.95 & 0.943 & 0.019
        & 30.42 & 0.862 & 0.046 \\
        PnP-Flow(1)
        & 31.79 & 0.905 & 0.044
        & 34.33 & 0.935 & 0.038
        & 31.00 & 0.902 & 0.044
        & 33.06 & 0.943 & 0.018
        & 30.52 & 0.936 & 0.035 \\
        PnP-Flow(5)
        & 32.30 & 0.912 & 0.055
        & \textbf{34.65} & \textbf{0.940} & 0.046
        & 31.42 & 0.907 & 0.054
        & 33.94 & 0.953 & 0.022
        & 31.07 & 0.942 & 0.041 \\
        HP-PnP-Flow(1)
        & 32.39 & \underline{0.915} & \textbf{0.026}
        & 34.34 & 0.933 & 0.037
        & \underline{32.75} & \underline{0.929} & \underline{0.021}
        & \underline{33.98} & \underline{0.955} & \textbf{0.013}
        & \underline{31.25} & \underline{0.945} & \textbf{0.024} \\
        HP-PnP-Flow(5)
        & \textbf{32.91} & \textbf{0.923} & 0.035
        & \underline{34.65} & \underline{0.938} & 0.044
        & \textbf{33.27} & \textbf{0.936} & 0.025
        & \textbf{34.76} & \textbf{0.962} & 0.015
        & \textbf{32.02} & \textbf{0.954} & 0.027 \\
        \bottomrule
    \end{tabularx}
\end{table}

\FloatBarrier
\begin{table}[!htbp]
    \centering
    \tiny
    \setlength{\tabcolsep}{0.6mm}
    \caption{Quantitative results on AFHQ-Cat (100 test images).}
    \label{tab:afhqcat_results}
    \begin{tabularx}{\linewidth}{@{}l*{15}{>{\centering\arraybackslash}X}@{}}
        \toprule
        & \multicolumn{3}{c}{\textbf{Denoise}}
        & \multicolumn{3}{c}{\textbf{Deblur}}
        & \multicolumn{3}{c}{\textbf{Super-res.}}
        & \multicolumn{3}{c}{\textbf{Rand.\ inp.}}
        & \multicolumn{3}{c}{\textbf{Box inp.}} \\
        \cmidrule(lr){2-4} \cmidrule(lr){5-7} \cmidrule(lr){8-10} \cmidrule(lr){11-13} \cmidrule(lr){14-16}
        \textbf{Method}
        & PSNR$\uparrow$ & SSIM$\uparrow$ & LPIPS$\downarrow$
        & PSNR$\uparrow$ & SSIM$\uparrow$ & LPIPS$\downarrow$
        & PSNR$\uparrow$ & SSIM$\uparrow$ & LPIPS$\downarrow$
        & PSNR$\uparrow$ & SSIM$\uparrow$ & LPIPS$\downarrow$
        & PSNR$\uparrow$ & SSIM$\uparrow$ & LPIPS$\downarrow$ \\
        \midrule
        Degraded
        & 20.00 & 0.294 & 0.531
        & 24.39 & 0.530 & 0.436
        & 11.59 & 0.212 & 0.867
        & 13.24 & 0.214 & 1.082
        & 21.53 & 0.727 & 0.215 \\
        PnP-GS
        & \textbf{33.03} & 
        \underline{0.896} & 
        \underline{0.078}
        & 28.38 & 0.771 & 0.331
        & 24.26 & 0.642 & 0.350
        & 29.80 & 0.838 & 0.127
        & \multicolumn{3}{c}{N/A} \\
        OT-ODE
        & 30.51 & 0.819 & 0.080
        & 26.50 & 0.680 & \textbf{0.149}
        & 26.73 & 0.738 & \textbf{0.108}
        & 30.01 & 0.850 & 0.086
        & 24.51 & 0.874 & 0.095 \\
        D-Flow
        & 26.42 & 0.570 & 0.182
        & 28.58 & 0.745 & \underline{0.168}
        & 25.19 & 0.616 & 0.190
        & 32.83 & 0.900 & 0.044
        & 27.02 & 0.841 & \underline{0.081} \\
        Flow-Priors
        & 29.68 & 0.757 & 0.164
        & 27.28 & 0.727 & 0.188
        & 24.01 & 0.572 & 0.271
        & 32.98 & 0.910 & 0.051
        & 26.13 & 0.809 & 0.130 \\
        PnP-Flow(1)
        & 31.65 & 0.867 & 0.139
        & 28.74 & 0.781 & 0.285
        & 27.58 & 0.781 & 0.158
        & 33.75 & 0.924 & \underline{0.038}
        & 26.40 & 0.896 & 0.111 \\
        PnP-Flow(5)
        & 31.89 & 0.869 & 0.170
        & \underline{29.03} & \textbf{0.786} & 0.313
        & \underline{28.02} & \underline{0.792} & 0.168
        & \underline{34.47} & \underline{0.933} & 0.045
        & 27.15 & 0.901 & 0.127 \\
        HP-PnP-Flow(1)
        & 32.51 & 0.892 & \textbf{0.066}
        & 28.88 & 0.780 & 0.284
        & 27.93 & 0.783 & \underline{0.150}
        & 33.91 & 0.928 & \textbf{0.035}
        & \underline{27.29} & \underline{0.912} & \textbf{0.072} \\
        HP-PnP-Flow(5)
        & \underline{32.97} & \textbf{0.900} & 0.086
        & \textbf{29.10} & \underline{0.784} & 0.313
        & \textbf{28.42} & \textbf{0.796} & 0.161
        & \textbf{34.60} & \textbf{0.936} & 0.040
        & \textbf{28.05} & \textbf{0.921} & 0.082 \\
        \bottomrule
    \end{tabularx}
\end{table}

\FloatBarrier

Tables \ref{tab:celeba_results} and \ref{tab:afhqcat_results} present quantitative
comparisons of different methods, showing that our algorithm achieves the best or second-best performance in most settings across the two datasets and three evaluation metrics.
One can observe that, the largest PSNR improvement over PnP-Flow is observed in super-resolution on CelebA, where HP-PnP-Flow(1) improves upon PnP-Flow(1) by $1.75$ dB under the same single-sampling setting.
%
Compared with the single-sample variant HP-PnP-Flow(1),
HP-PnP-Flow(5) provides an additional PSNR improvement of
$0.2-0.8$\,dB.
Notably, HP-PnP-Flow(1) outperforms PnP-Flow(5) on most restoration tasks despite using only a single sample per iteration.
Compared with the other baselines, HP-PnP-Flow shows more balanced performance across different restoration tasks. 
While D-Flow and Flow-Priors are particularly effective on inpainting, and PnP-GS and OT-ODE perform well on denoising and deblurring, our algorithm remains competitive across all five degradation settings on both datasets.

In addition, our algorithm achieves these results with a substantially smaller iteration budget on most tasks. For instance, on AFHQ-Cat, HP-PnP-Flow uses $60$ iterations
for box inpainting versus $100$ for PnP-Flow, and $100$ iterations for deblurring versus $500$ for PnP-Flow (see Appendix \ref{sec:appendix_hyperparams} for details).


\subsection{Comparison to PnP-Flow}
\label{sec:hp_vs_pnp}
In the following, we compare HP-PnP-Flow \eqref{eq:hp-pnp-flow} with PnP-Flow \eqref{eq:pnp-flow} under the single-sampling setting, and analyze their iteration-wise reconstruction trajectories across all five restoration tasks and both datasets. Visual comparisons are provided in
Appendix \ref{sec:appendix_iteration_comparison}.

One can observe that HP-PnP-Flow exhibits faster convergence during the early iterations. 
For example, from Figures \ref{fig:app_denoise_celeba} and
\ref{fig:app_denoise_afhq}, we observe that, at iteration $20$, HP-PnP-Flow improves upon PnP-Flow by $7.58$ and $7.96$\,dB in PSNR, and by $0.1611$ and $0.219$ in SSIM, on CelebA and AFHQ-Cat, respectively. Similar early-stage advantages are also observed on the other four tasks.
%
In terms of equivalent PnP-Flow iterations, the reconstruction quality achieved by HP-PnP-Flow after $20$ iterations is comparable to that of PnP-Flow after approximately $40$-$120$ iterations, depending on the task. 
The largest efficiency gap is observed for random inpainting ($20$ vs.\ $80$-$120$ iterations), whereas the smallest gap occurs for deblurring ($20$ vs.\ $40$-$60$ iterations).

HP-PnP-Flow also achieves better final reconstruction quality. Across the five tasks and both datasets , its converged PSNR exceeds that of PnP-Flow by $0.38$-$1.74$\,dB, with the largest improvements observed for super-resolution and random inpainting (see Appendix \ref{sec:appendix_iteration_comparison} for details). 
On AFHQ-Cat, HP-PnP-Flow requires only $100$ iterations for deblurring and super-resolution, compared with $500$ iterations for PnP-Flow, and $60$ iterations for box inpainting, compared with $100$ for PnP-Flow. 
Taken together, the faster early-stage convergence and reduced iteration budget enable HP-PnP-Flow to reach a
given PSNR level in substantially fewer iterations than PnP-Flow.

\subsection{Qualitative Comparison}
In this subsection we provide visual comparisons of all methods on a single test image from CelebA and AFHQ-Cat across the five degradation tasks. 
Each row corresponds to one task, ordered from top to
bottom as denoising, deblurring, super-resolution, random inpainting, and box inpainting. 
The PSNR value of each reconstruction is reported below
the corresponding image.

\FloatBarrier
\begin{figure}[htbp]
    \centering
    \setlength{\tabcolsep}{0.3pt}
    \renewcommand{\arraystretch}{1.0}
    \begin{tabular}{*{10}{>{\centering\arraybackslash\footnotesize}m{0.098\textwidth}}}
        \makecell[c]{Clean\vphantom{\makecell[c]{Flow-\\Priors}}} &
        \makecell[c]{Degraded\vphantom{\makecell[c]{Flow-\\Priors}}} &
        \makecell[c]{PnP-GS\vphantom{\makecell[c]{Flow-\\Priors}}} &
        \makecell[c]{OT-ODE\vphantom{\makecell[c]{Flow-\\Priors}}} &
        \makecell[c]{D-Flow\vphantom{\makecell[c]{Flow-\\Priors}}} &
        \makecell[c]{Flow-\\Priors} &
        \makecell[c]{PnP-\\Flow(1)} &
        \makecell[c]{PnP-\\Flow(5)} &
        \makecell[c]{HP-PnP-\\Flow(1)} &
        \makecell[c]{HP-PnP-\\Flow(5)} \\[4pt]

        \makecell[c]{\includegraphics[width=\linewidth]{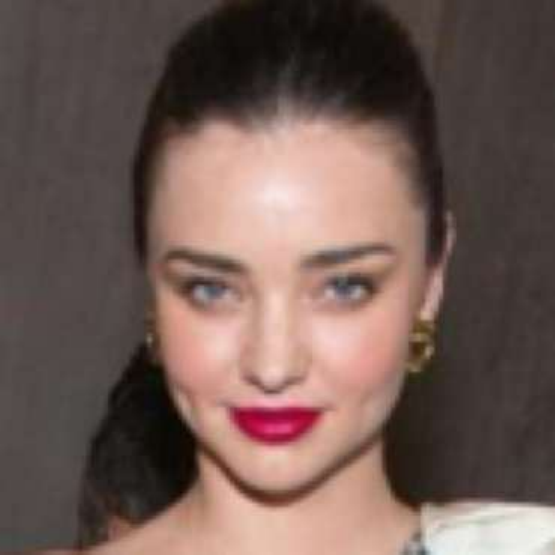}\\\vphantom{\tiny PSNR: 00.00}} &
        \makecell[c]{\includegraphics[width=\linewidth]{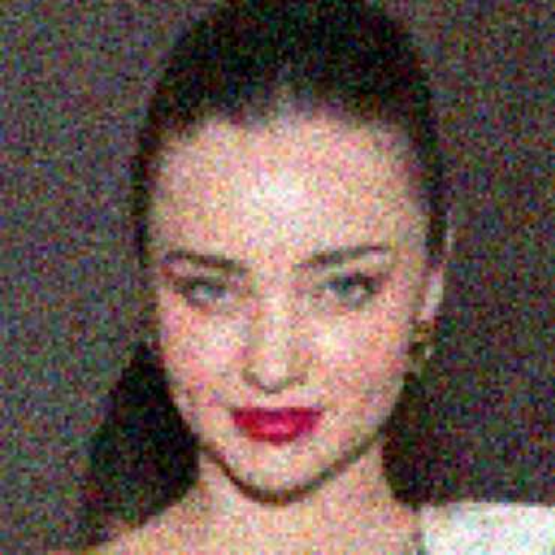}\\\tiny PSNR: 20.01} &
        \makecell[c]{\includegraphics[width=\linewidth]{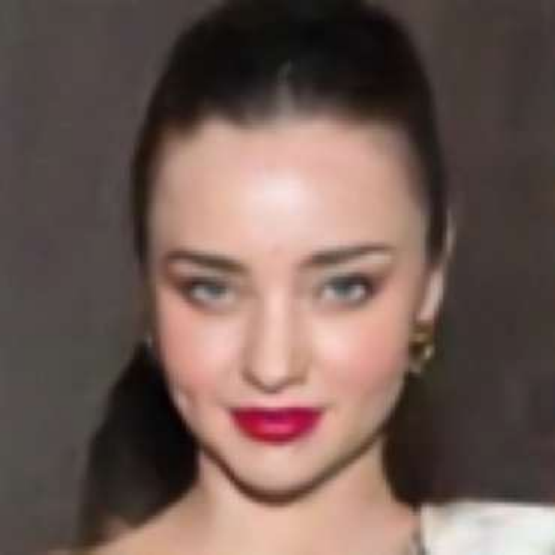}\\\tiny PSNR: 34.37} &
        \makecell[c]{\includegraphics[width=\linewidth]{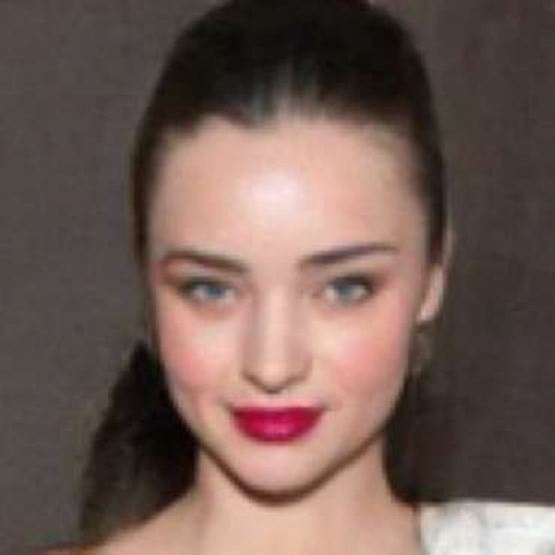}\\\tiny PSNR: 32.10} &
        \makecell[c]{\includegraphics[width=\linewidth]{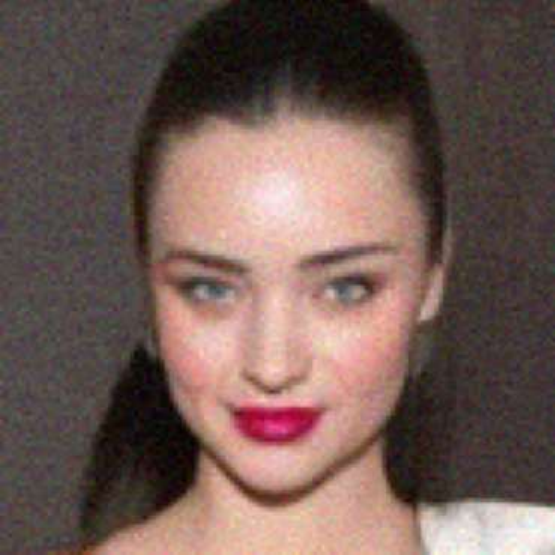}\\\tiny PSNR: 27.06} &
        \makecell[c]{\includegraphics[width=\linewidth]{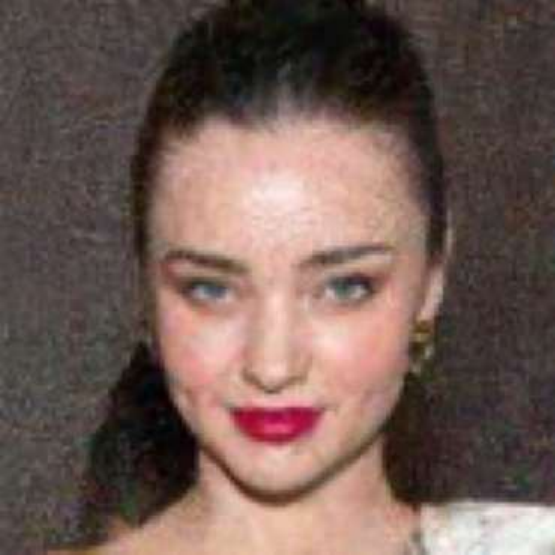}\\\tiny PSNR: 29.97} &
        \makecell[c]{\includegraphics[width=\linewidth]{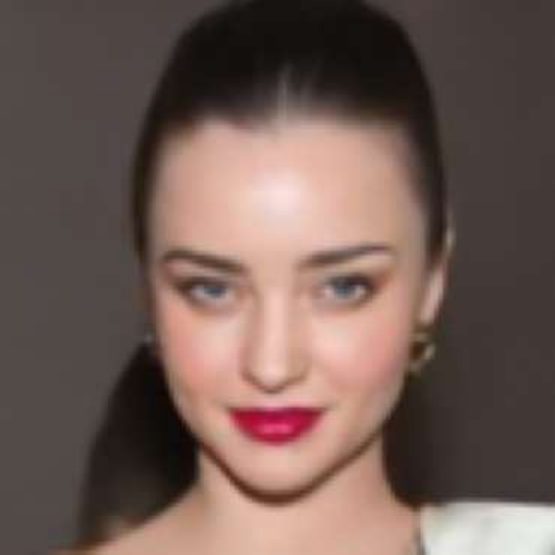}\\\tiny PSNR: 33.96} &
        \makecell[c]{\includegraphics[width=\linewidth]{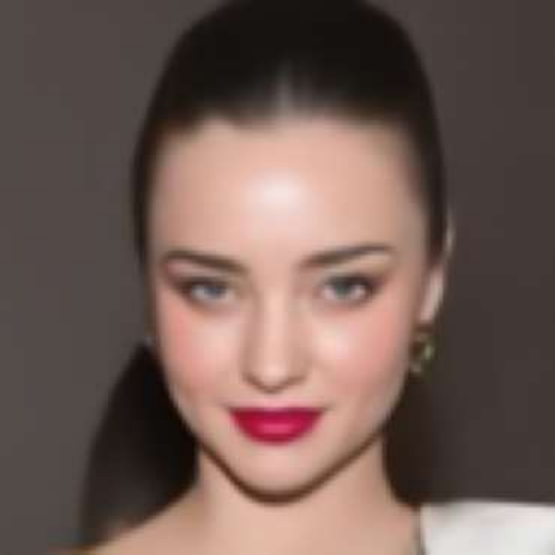}\\\tiny PSNR: 34.29} &
        \makecell[c]{\includegraphics[width=\linewidth]{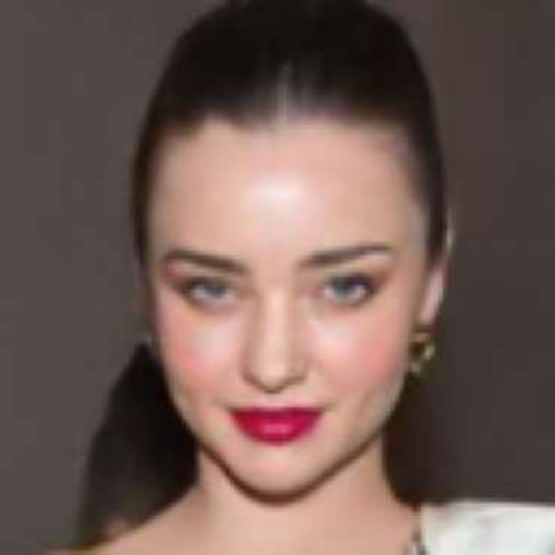}\\\tiny PSNR: 34.47} &
        \makecell[c]{\includegraphics[width=\linewidth]{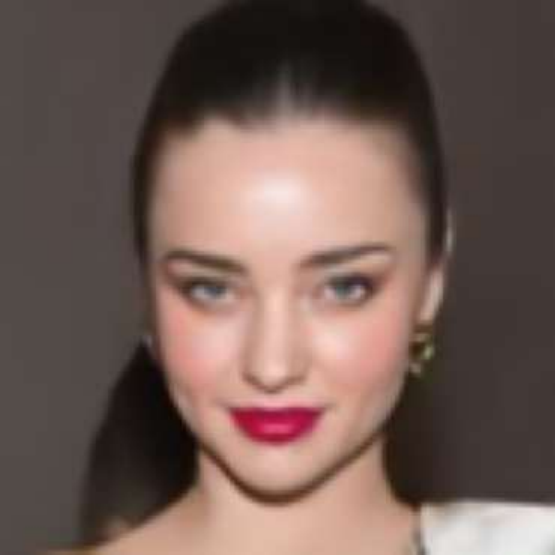}\\\tiny PSNR: 34.86}
        \\[5pt]
        
        \makecell[c]{\includegraphics[width=\linewidth]{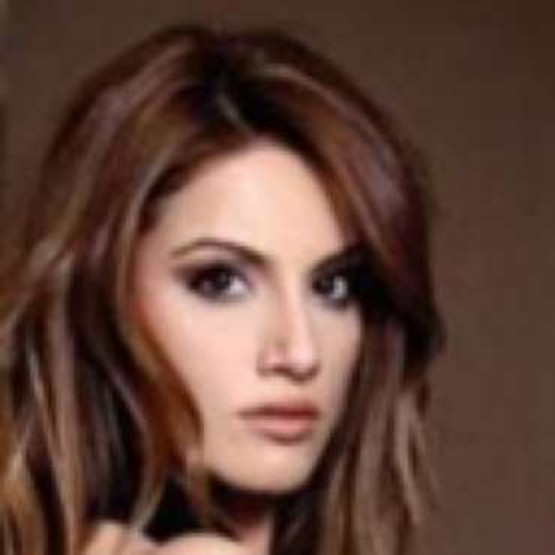}\\\vphantom{\tiny PSNR: 00.00}} &
        \makecell[c]{\includegraphics[width=\linewidth]{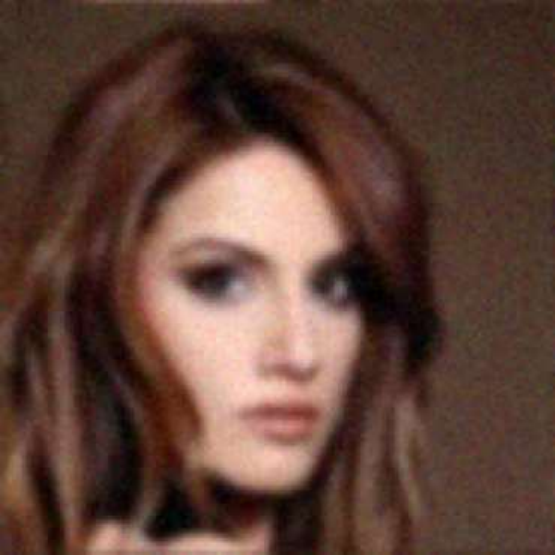}\\\tiny PSNR: 28.72} &
        \makecell[c]{\includegraphics[width=\linewidth]{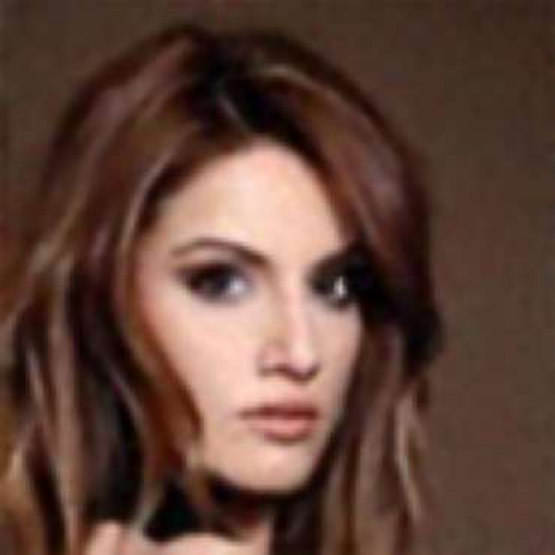}\\\tiny PSNR: 34.92} &
        \makecell[c]{\includegraphics[width=\linewidth]{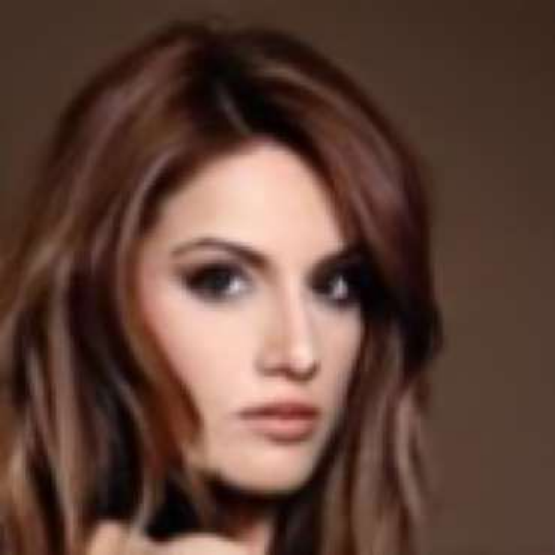}\\\tiny PSNR: 33.78} &
        \makecell[c]{\includegraphics[width=\linewidth]{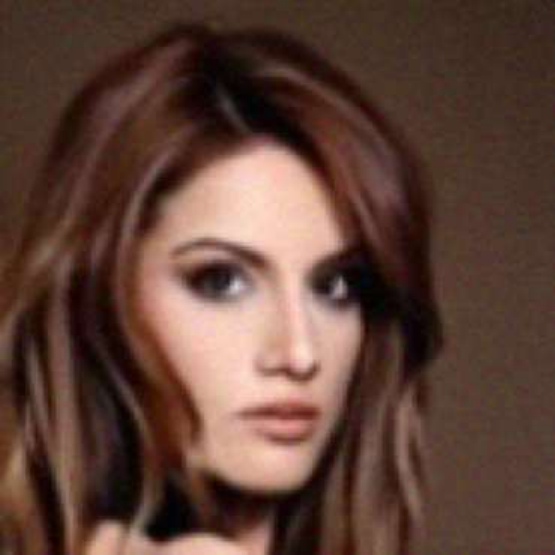}\\\tiny PSNR: 31.21} &
        \makecell[c]{\includegraphics[width=\linewidth]{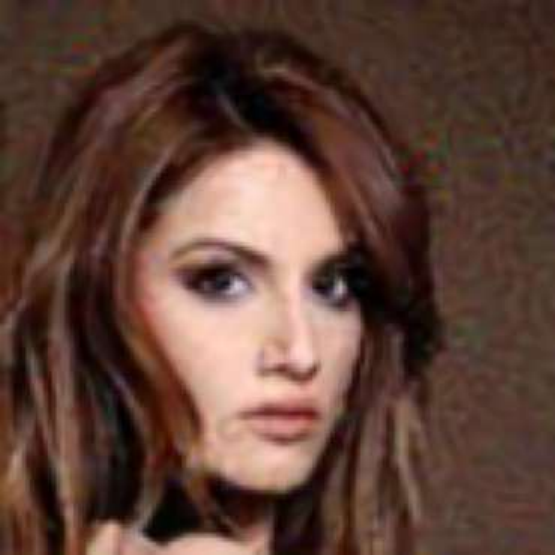}\\\tiny PSNR: 32.28} &
        \makecell[c]{\includegraphics[width=\linewidth]{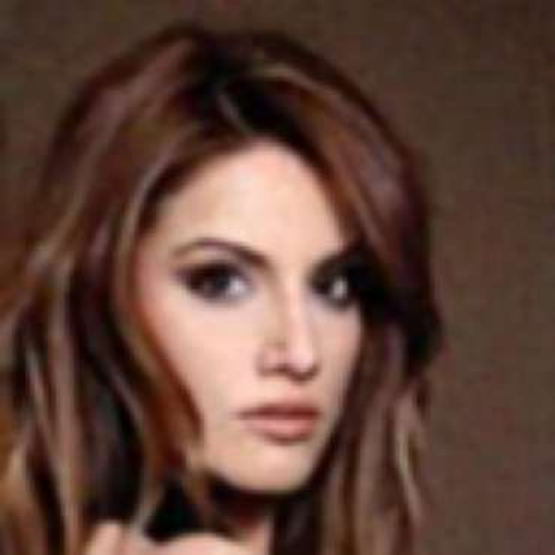}\\\tiny PSNR: 35.29} &
        \makecell[c]{\includegraphics[width=\linewidth]{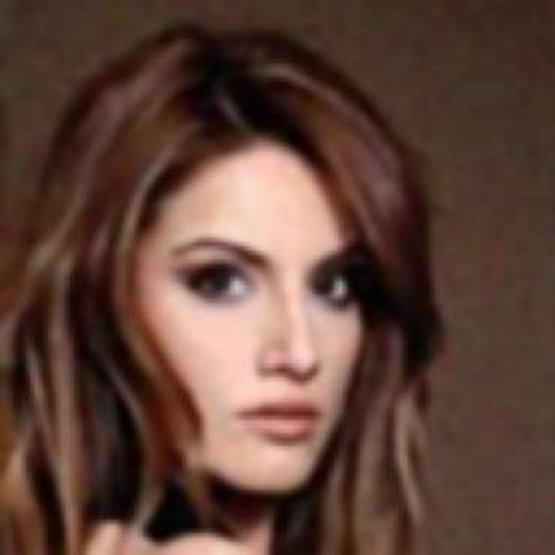}\\\tiny PSNR: 35.72} &
        \makecell[c]{\includegraphics[width=\linewidth]{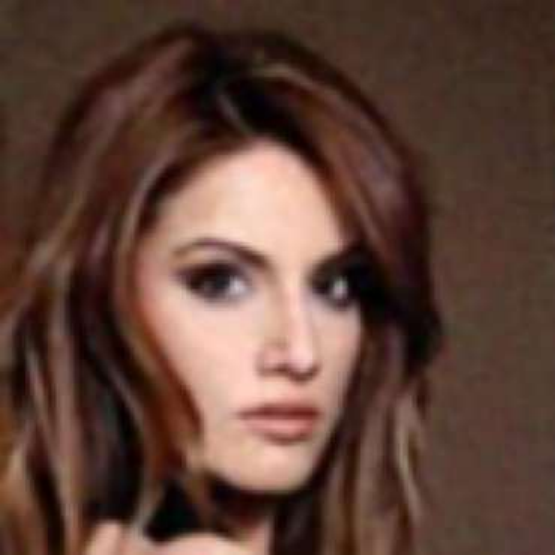}\\\tiny PSNR: 35.25} &
        \makecell[c]{\includegraphics[width=\linewidth]{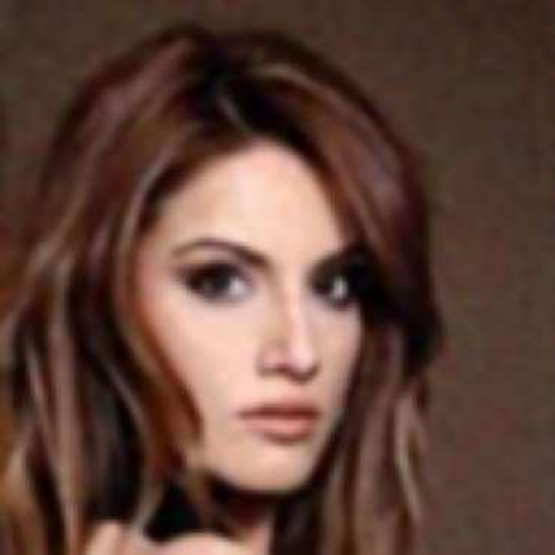}\\\tiny PSNR: 35.70}
        \\[5pt]
        
        \makecell[c]{\includegraphics[width=\linewidth]{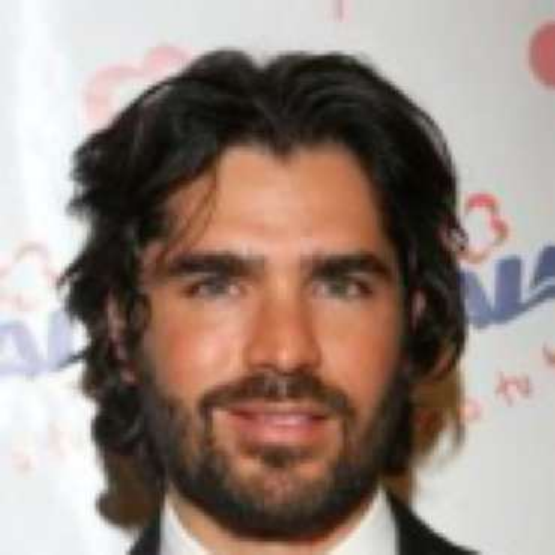}\\\vphantom{\tiny PSNR: 00.00}} &
        \makecell[c]{\includegraphics[width=\linewidth]{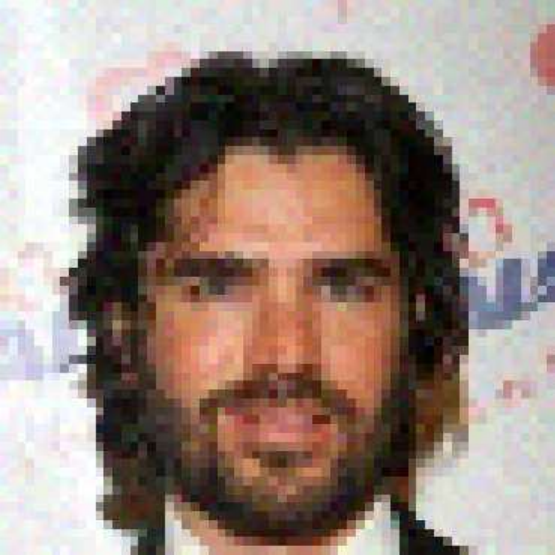}\\\tiny PSNR: 7.95} &
        \makecell[c]{\includegraphics[width=\linewidth]{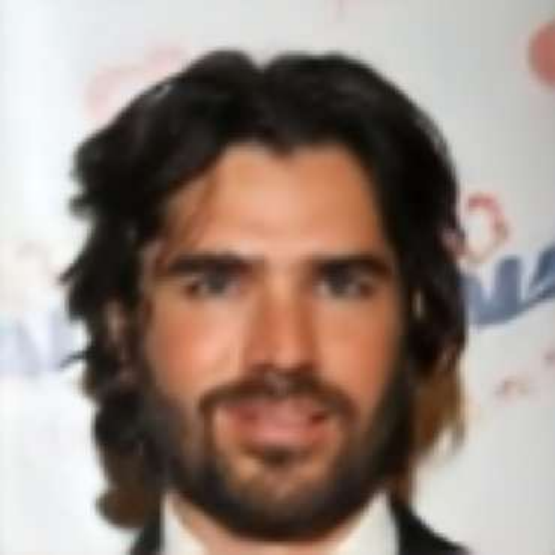}\\\tiny PSNR: 30.81} &
        \makecell[c]{\includegraphics[width=\linewidth]{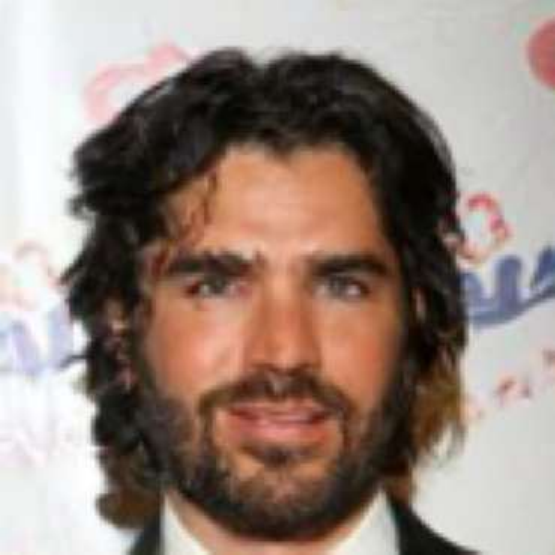}\\\tiny PSNR: 29.63} &
        \makecell[c]{\includegraphics[width=\linewidth]{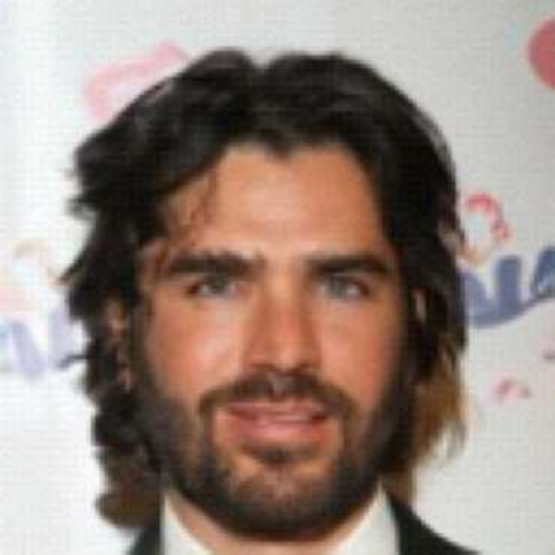}\\\tiny PSNR: 29.20} &
        \makecell[c]{\includegraphics[width=\linewidth]{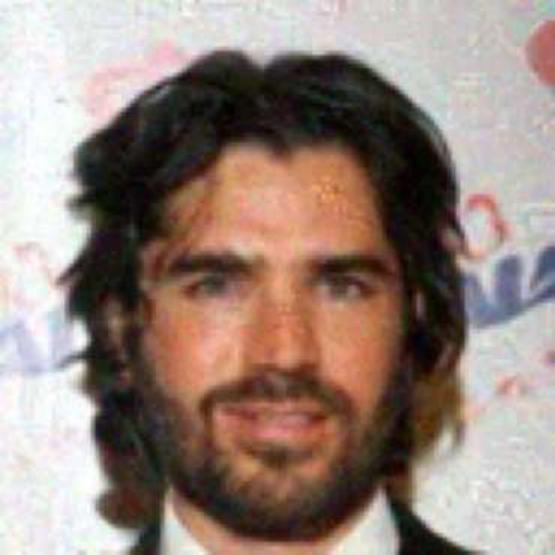}\\\tiny PSNR: 28.19} &
        \makecell[c]{\includegraphics[width=\linewidth]{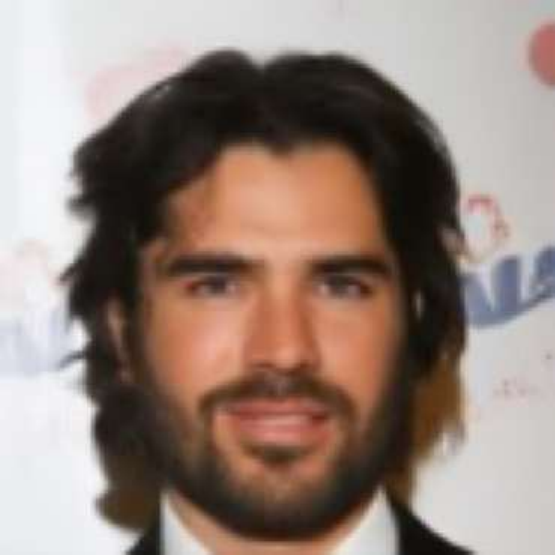}\\\tiny PSNR: 29.56} &
        \makecell[c]{\includegraphics[width=\linewidth]{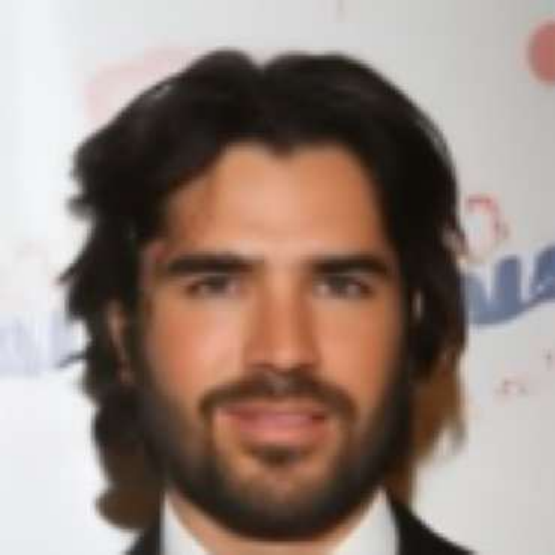}\\\tiny PSNR: 29.99} &
        \makecell[c]{\includegraphics[width=\linewidth]{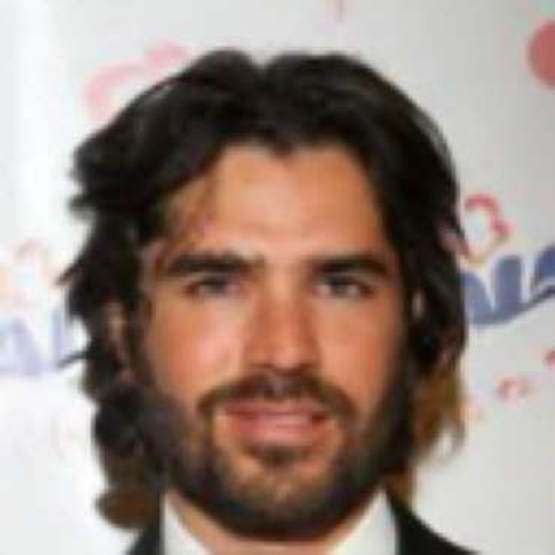}\\\tiny PSNR: 31.57} &
        \makecell[c]{\includegraphics[width=\linewidth]{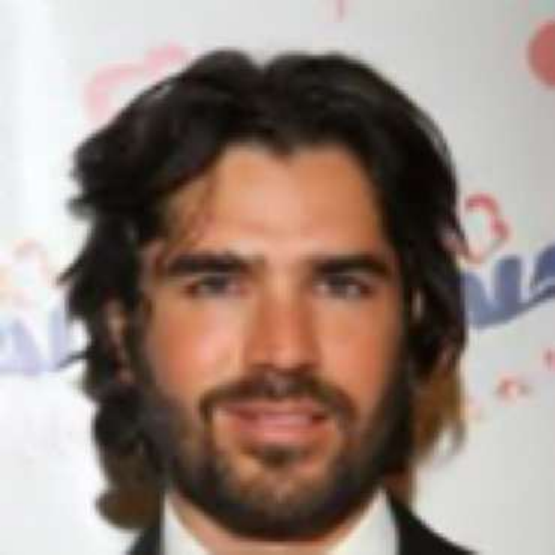}\\\tiny PSNR: 32.04}
        \\[5pt]
        
        \makecell[c]{\includegraphics[width=\linewidth]{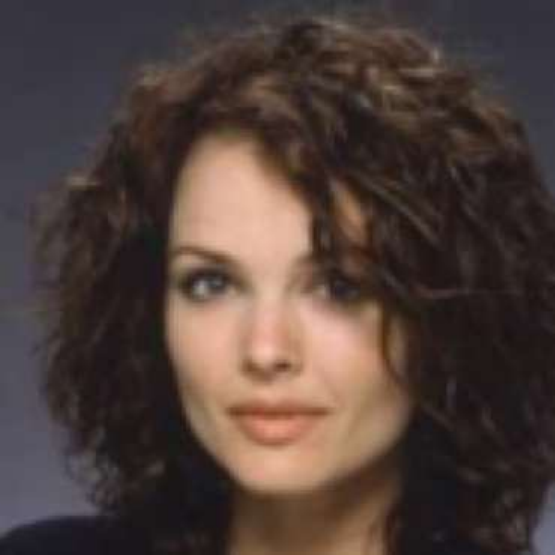}\\\vphantom{\tiny PSNR: 00.00}} &
        \makecell[c]{\includegraphics[width=\linewidth]{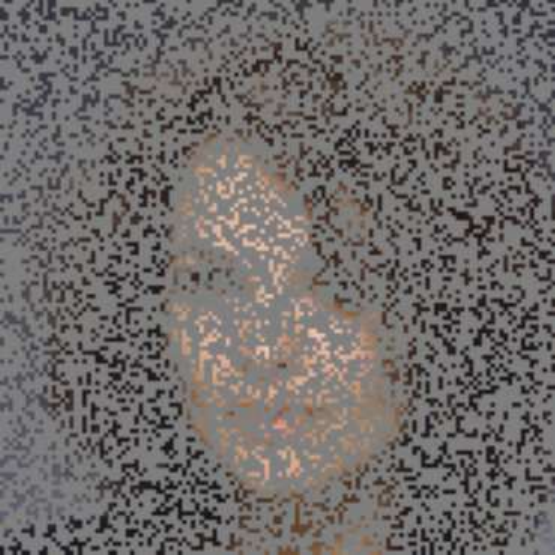}\\\tiny PSNR: 12.06} &
        \makecell[c]{\includegraphics[width=\linewidth]{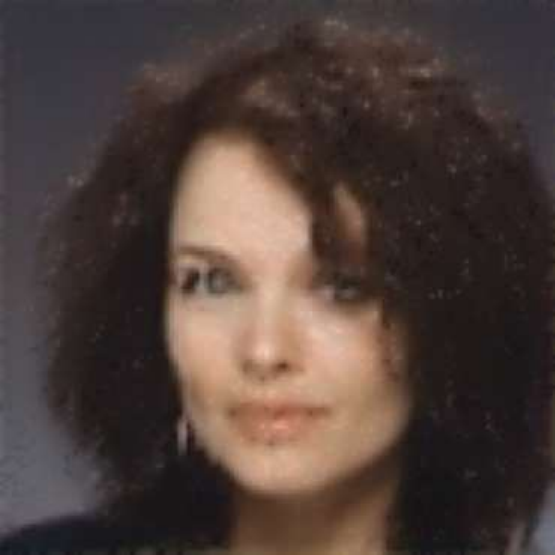}\\\tiny PSNR: 30.33} &
        \makecell[c]{\includegraphics[width=\linewidth]{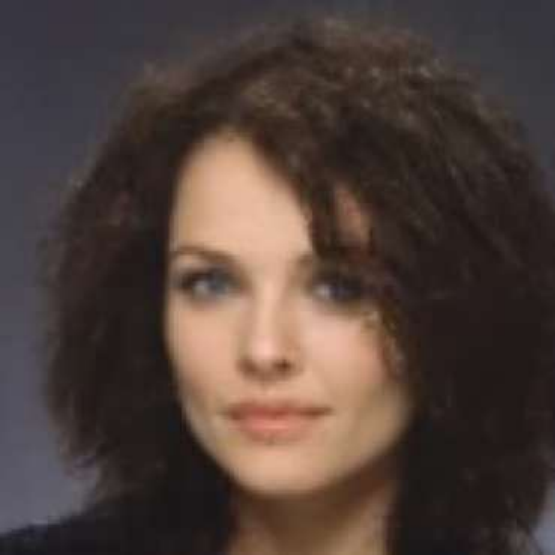}\\\tiny PSNR: 31.42} &
        \makecell[c]{\includegraphics[width=\linewidth]{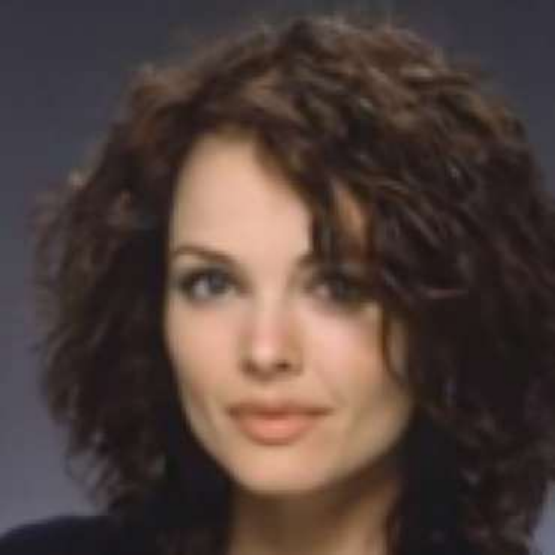}\\\tiny PSNR: 35.40} &
        \makecell[c]{\includegraphics[width=\linewidth]{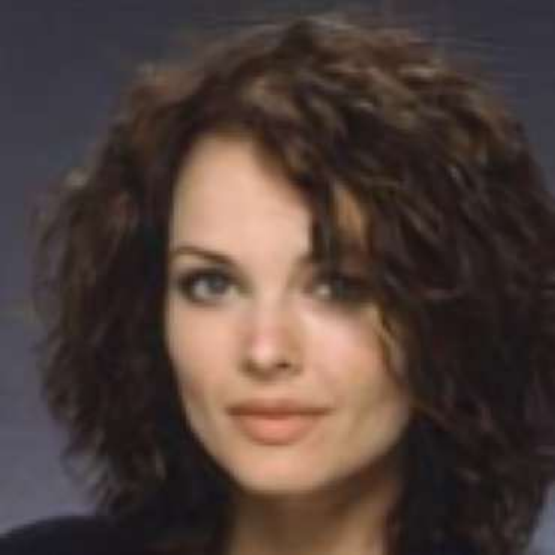}\\\tiny PSNR: 35.26} &
        \makecell[c]{\includegraphics[width=\linewidth]{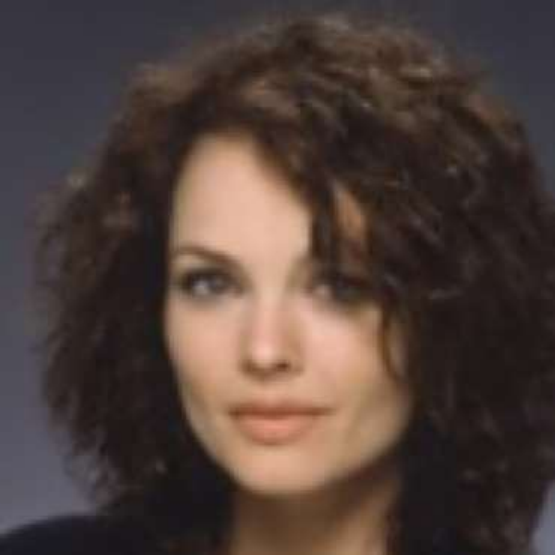}\\\tiny PSNR: 34.99} &
        \makecell[c]{\includegraphics[width=\linewidth]{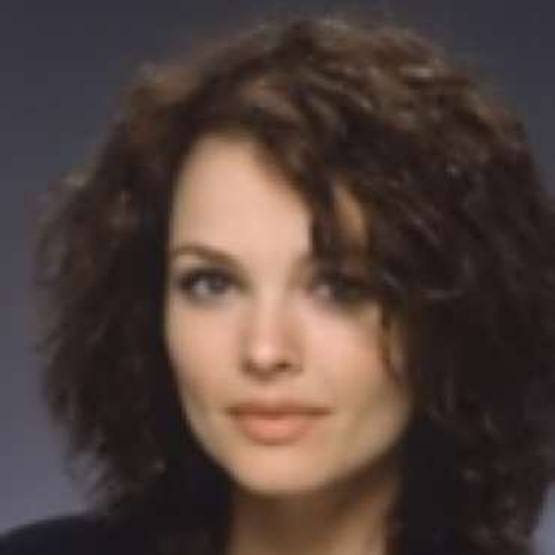}\\\tiny PSNR: 35.65} &
        \makecell[c]{\includegraphics[width=\linewidth]{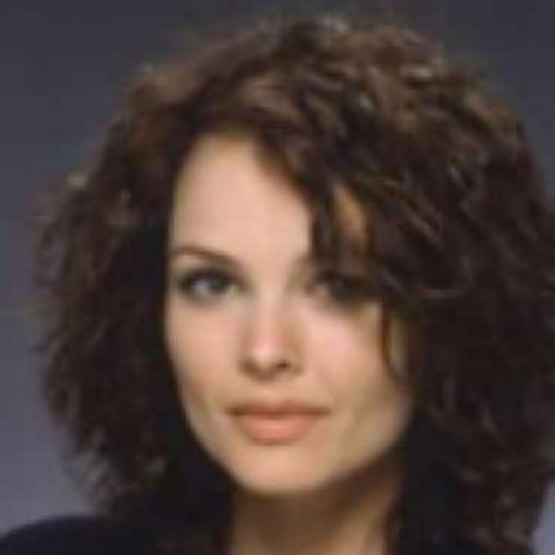}\\\tiny PSNR: 36.33} &
        \makecell[c]{\includegraphics[width=\linewidth]{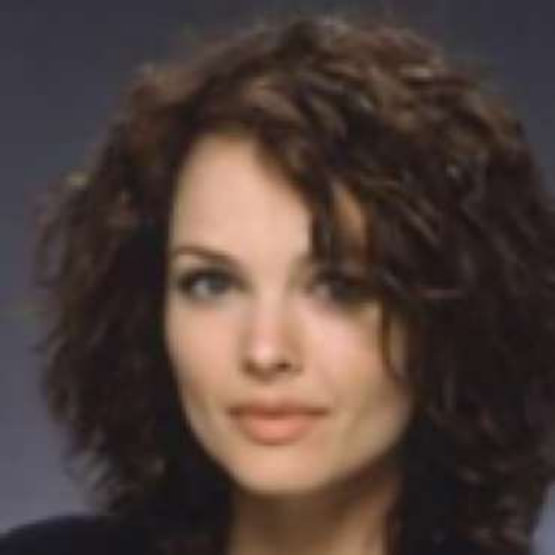}\\\tiny PSNR: 36.81}
        \\[5pt]
        
        \makecell[c]{\includegraphics[width=\linewidth]{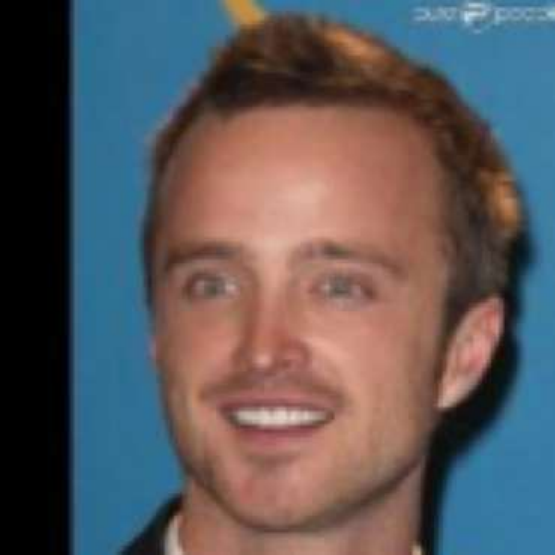}\\\vphantom{\tiny PSNR: 00.00}} &
        \makecell[c]{\includegraphics[width=\linewidth]{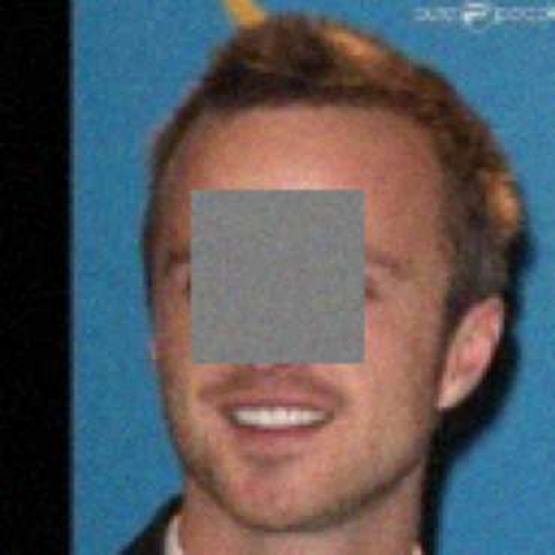}\\\tiny PSNR: 24.87} &
        \makecell[c]{\text{N/A}\\\vphantom{\tiny PSNR: 00.00}} &
        \makecell[c]{\includegraphics[width=\linewidth]{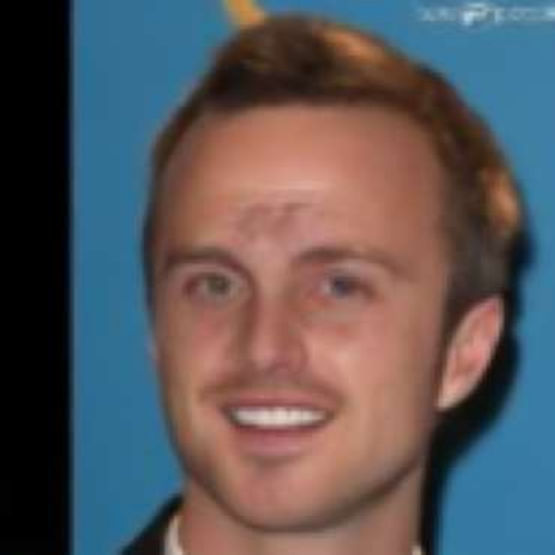}\\\tiny PSNR: 31.16} &
        \makecell[c]{\includegraphics[width=\linewidth]{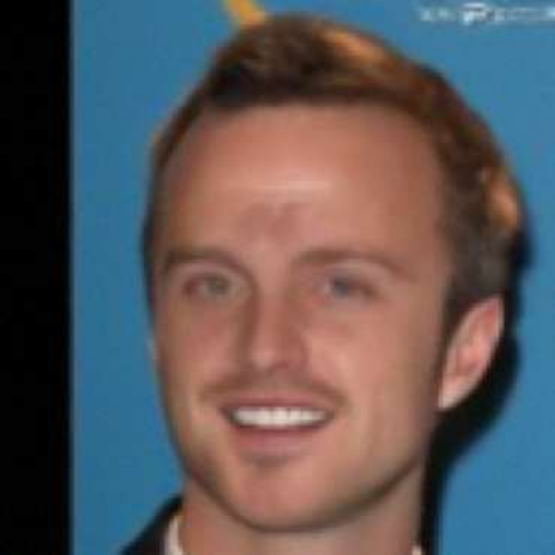}\\\tiny PSNR: 30.99} &
        \makecell[c]{\includegraphics[width=\linewidth]{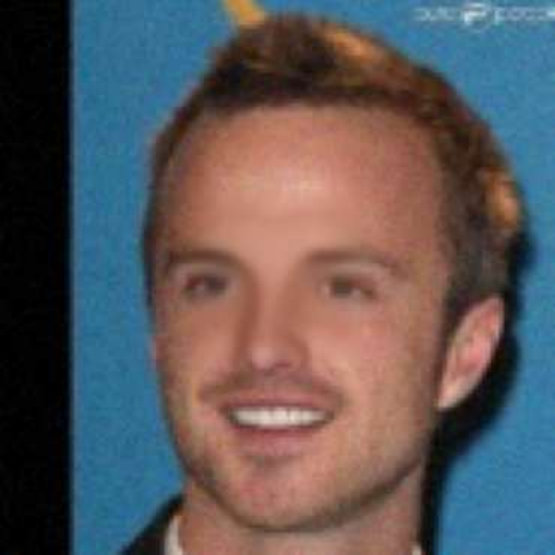}\\\tiny PSNR: 32.05} &
        \makecell[c]{\includegraphics[width=\linewidth]{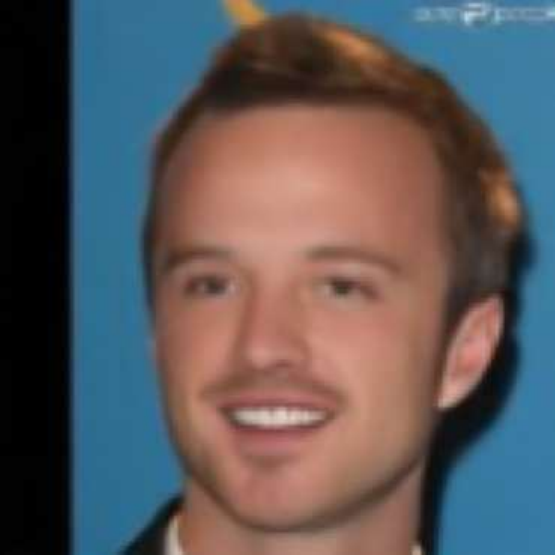}\\\tiny PSNR: 34.27} &
        \makecell[c]{\includegraphics[width=\linewidth]{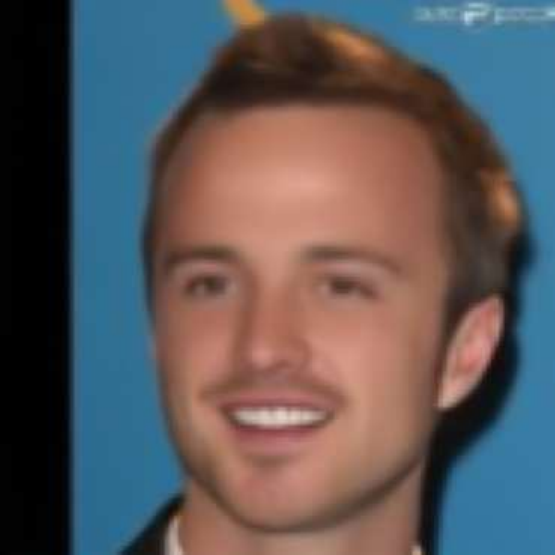}\\\tiny PSNR: 34.16} &
        \makecell[c]{\includegraphics[width=\linewidth]{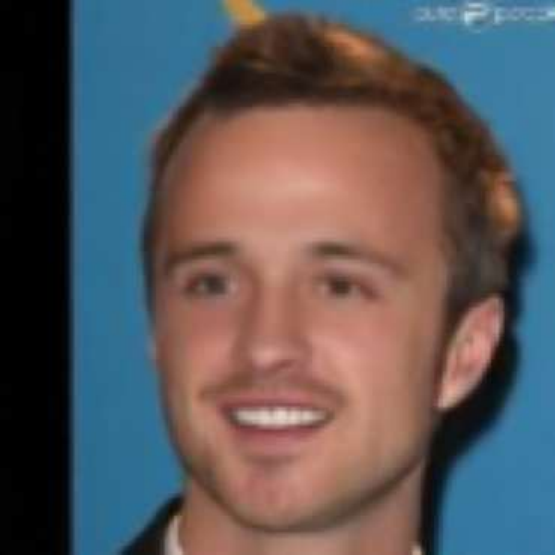}\\\tiny PSNR: 33.76} &
        \makecell[c]{\includegraphics[width=\linewidth]{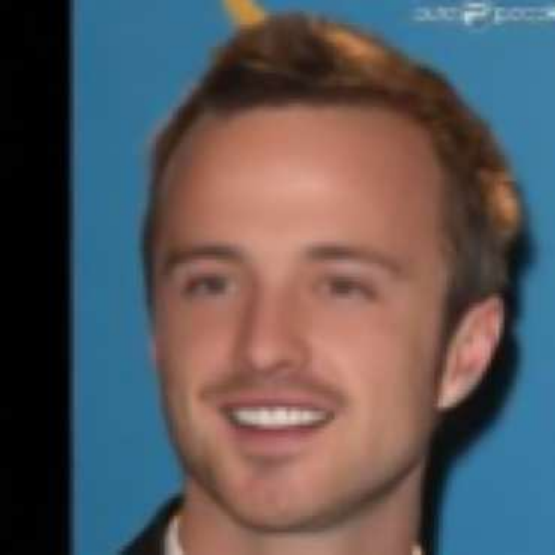}\\\tiny PSNR: 35.45}
        \\[3pt]
    \end{tabular}
    \caption{PSNR comparison of all methods on CelebA across five restoration tasks (from top to bottom: denoising, deblurring, super-resolution, random inpainting, and box inpainting).}
    \label{fig:restore_compare_celeba}
\end{figure}
\FloatBarrier

\FloatBarrier
\begin{figure}[htbp]
    \centering
    \setlength{\tabcolsep}{0.3pt}
    \renewcommand{\arraystretch}{1.0}
    \begin{tabular}{*{10}{>{\centering\arraybackslash\footnotesize}m{0.098\textwidth}}}
        \makecell[c]{Clean\vphantom{\makecell[c]{Flow-\\Priors}}} &
        \makecell[c]{Degraded\vphantom{\makecell[c]{Flow-\\Priors}}} &
        \makecell[c]{PnP-GS\vphantom{\makecell[c]{Flow-\\Priors}}} &
        \makecell[c]{OT-ODE\vphantom{\makecell[c]{Flow-\\Priors}}} &
        \makecell[c]{D-Flow\vphantom{\makecell[c]{Flow-\\Priors}}} &
        \makecell[c]{Flow-\\Priors} &
        \makecell[c]{PnP-\\Flow(1)} &
        \makecell[c]{PnP-\\Flow(5)} &
        \makecell[c]{HP-PnP-\\Flow(1)} &
        \makecell[c]{HP-PnP-\\Flow(5)} \\[4pt]
        
        \makecell[c]{\includegraphics[width=\linewidth]{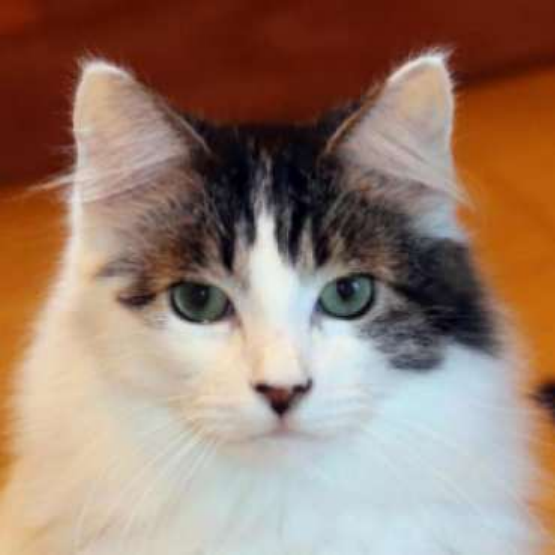}\\\vphantom{\tiny PSNR: 00.00}} &
        \makecell[c]{\includegraphics[width=\linewidth]{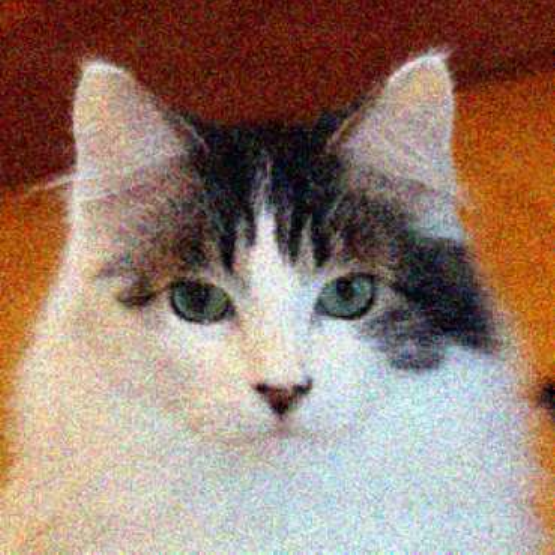}\\\tiny PSNR: 20.00} &
        \makecell[c]{\includegraphics[width=\linewidth]{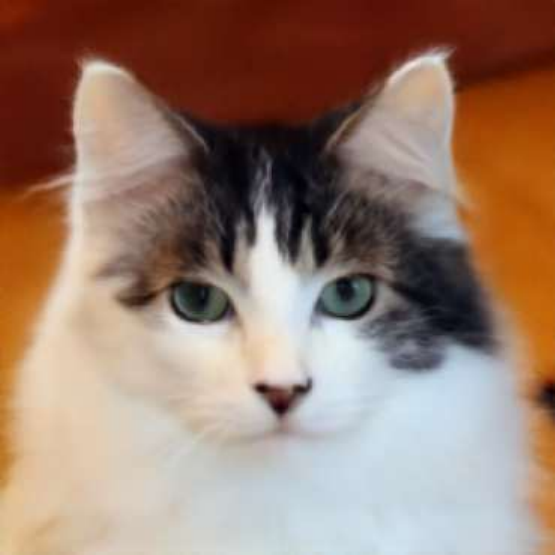}\\\tiny PSNR: 34.25} &
        \makecell[c]{\includegraphics[width=\linewidth]{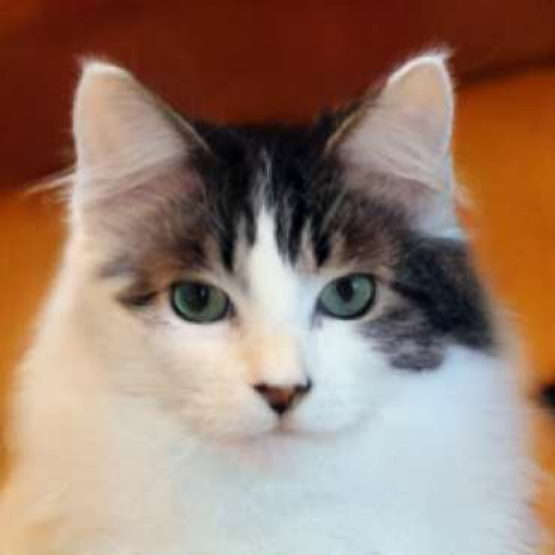}\\\tiny PSNR: 31.86} &
        \makecell[c]{\includegraphics[width=\linewidth]{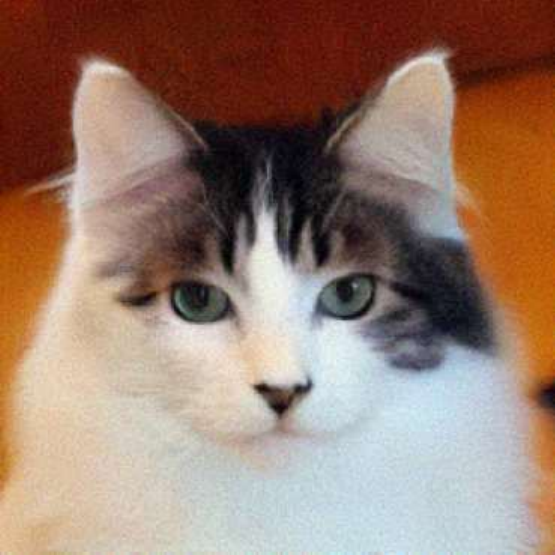}\\\tiny PSNR: 26.40} &
        \makecell[c]{\includegraphics[width=\linewidth]{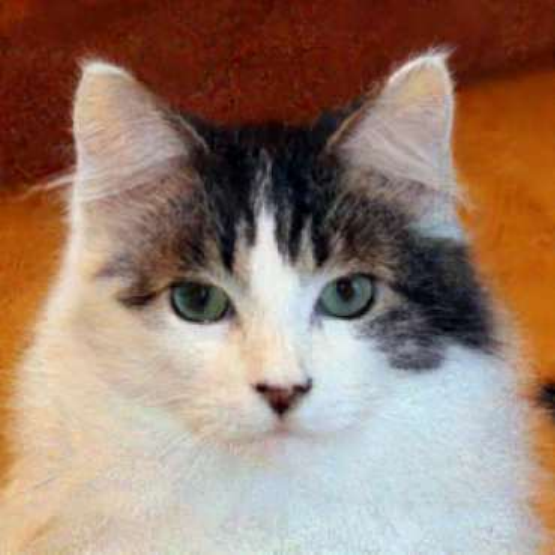}\\\tiny PSNR: 31.10} &
        \makecell[c]{\includegraphics[width=\linewidth]{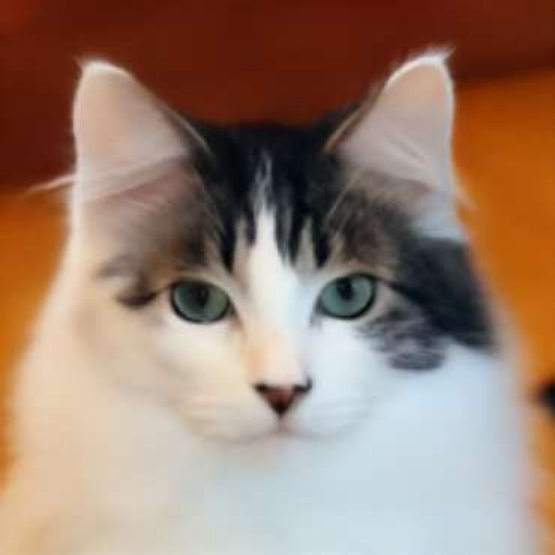}\\\tiny PSNR: 32.85} &
        \makecell[c]{\includegraphics[width=\linewidth]{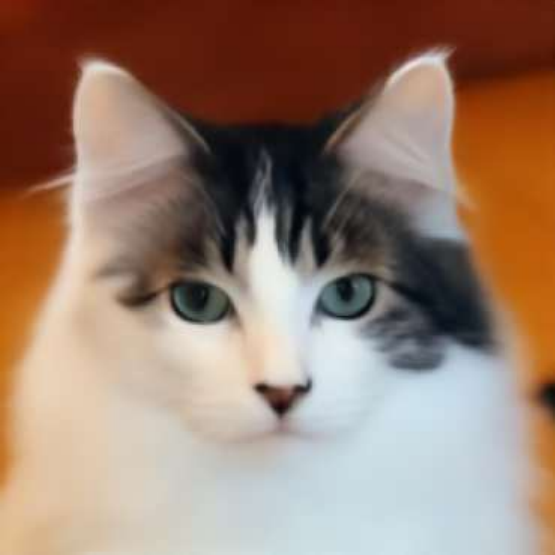}\\\tiny PSNR: 33.07} &
        \makecell[c]{\includegraphics[width=\linewidth]{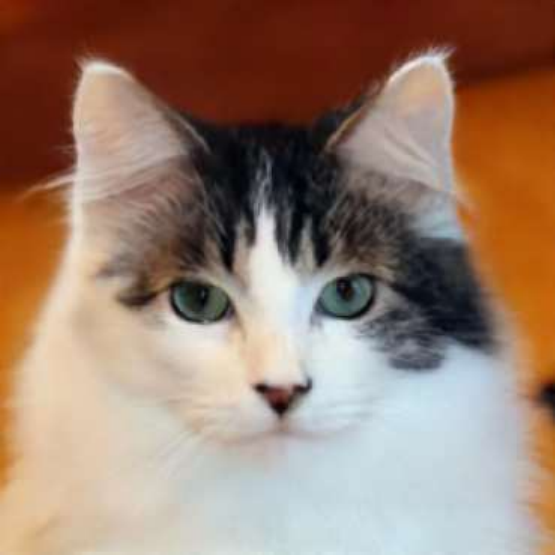}\\\tiny PSNR: 33.79} &
        \makecell[c]{\includegraphics[width=\linewidth]{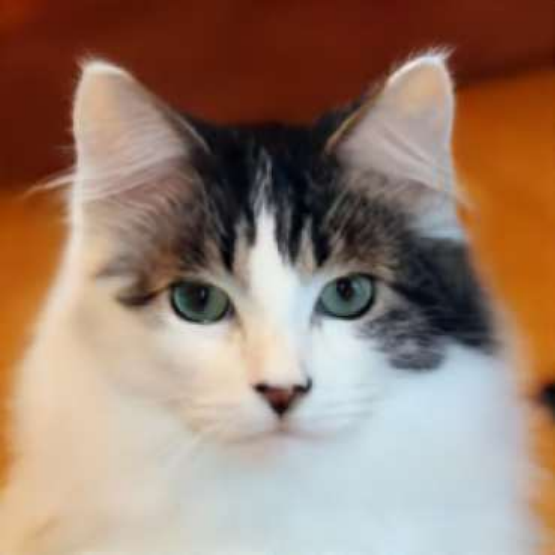}\\\tiny PSNR: 34.16}
        \\[5pt]
        
        \makecell[c]{\includegraphics[width=\linewidth]{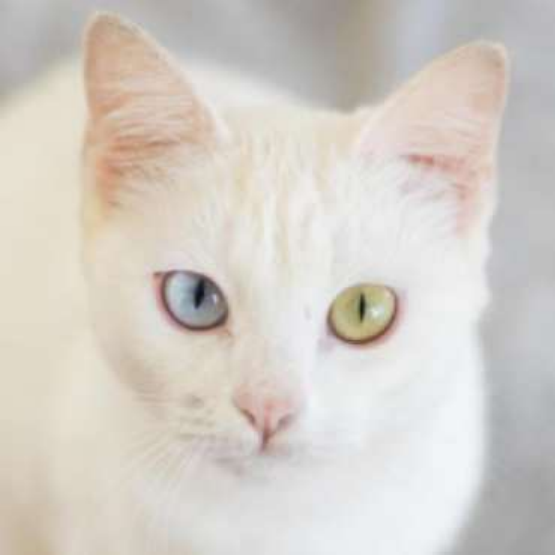}\\\vphantom{\tiny PSNR: 00.00}} &
        \makecell[c]{\includegraphics[width=\linewidth]{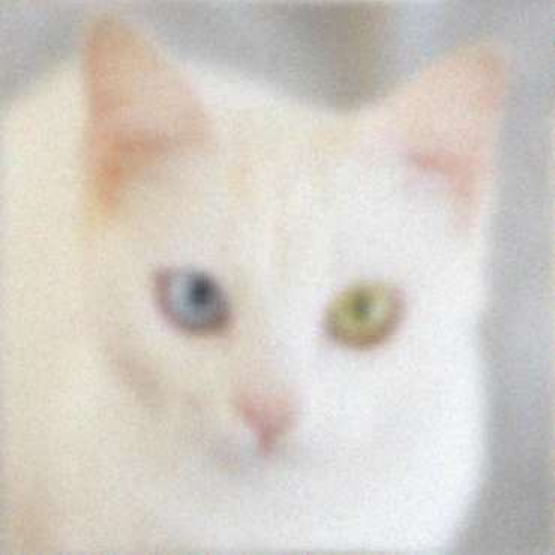}\\\tiny PSNR: 28.89} &
        \makecell[c]{\includegraphics[width=\linewidth]{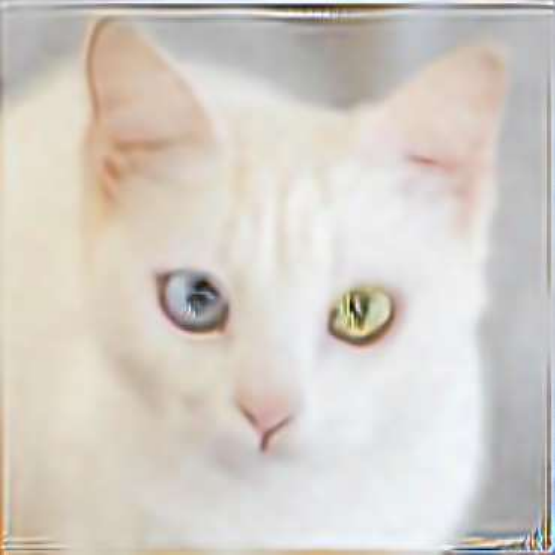}\\\tiny PSNR: 25.00} &
        \makecell[c]{\includegraphics[width=\linewidth]{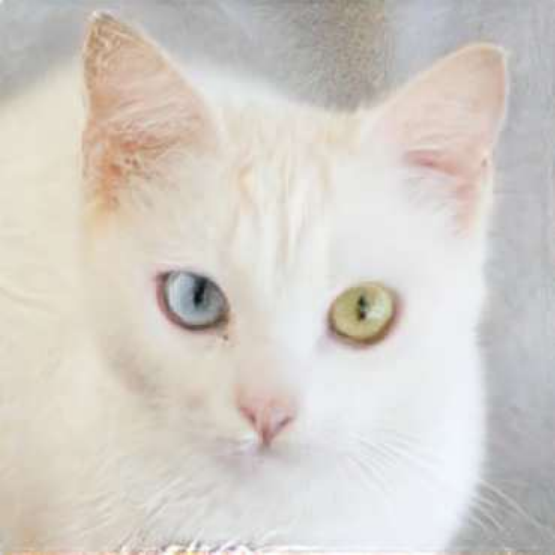}\\\tiny PSNR: 32.34} &
        \makecell[c]{\includegraphics[width=\linewidth]{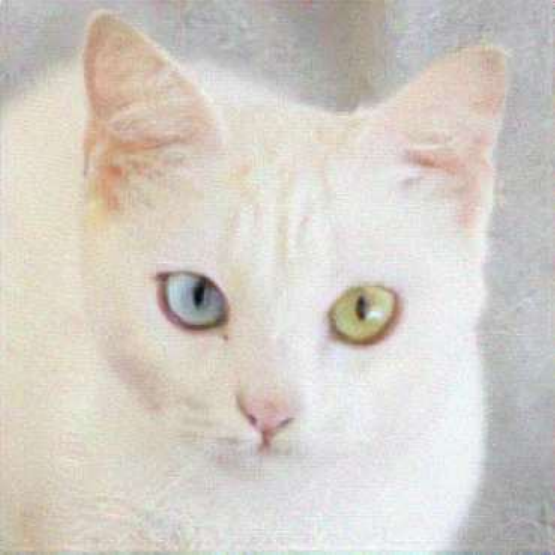}\\\tiny PSNR: 26.84} &
        \makecell[c]{\includegraphics[width=\linewidth]{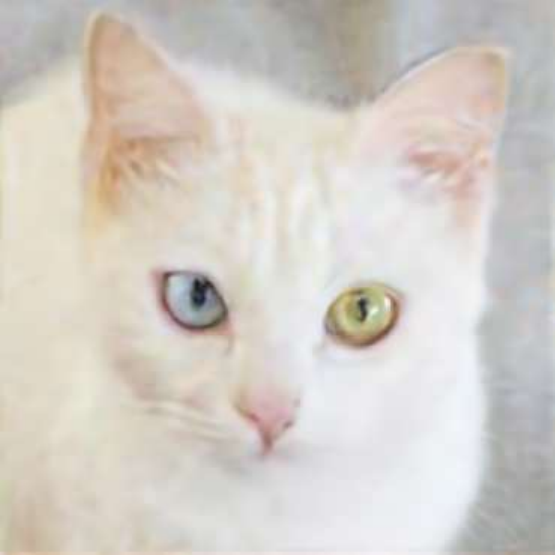}\\\tiny PSNR: 33.18} &
        \makecell[c]{\includegraphics[width=\linewidth]{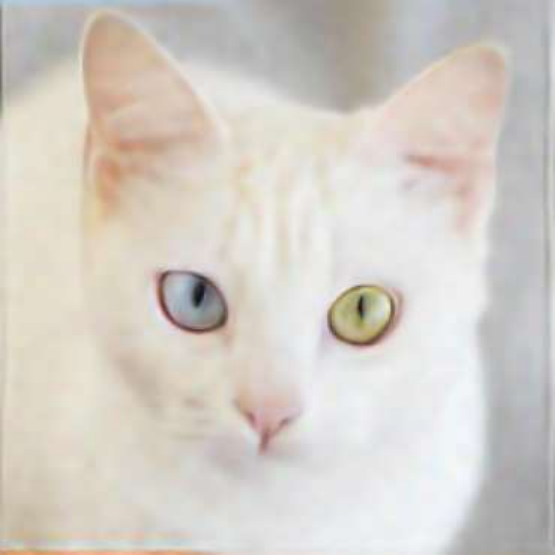}\\\tiny PSNR: 32.43} &
        \makecell[c]{\includegraphics[width=\linewidth]{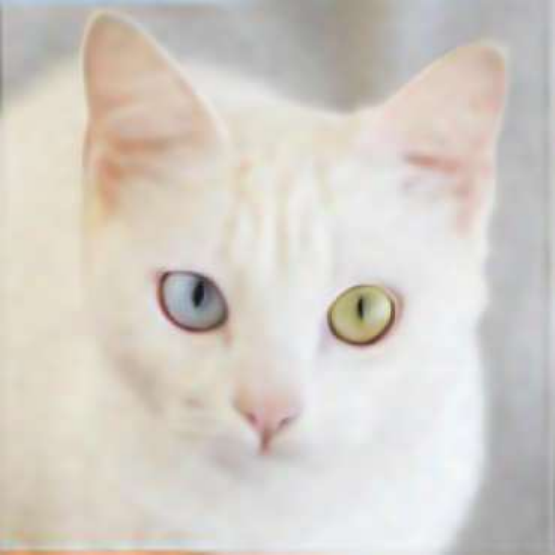}\\\tiny PSNR: 33.60} &
        \makecell[c]{\includegraphics[width=\linewidth]{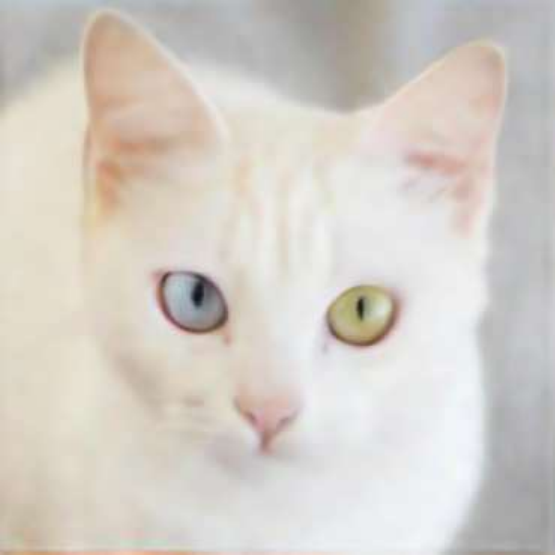}\\\tiny PSNR: 34.67} &
        \makecell[c]{\includegraphics[width=\linewidth]{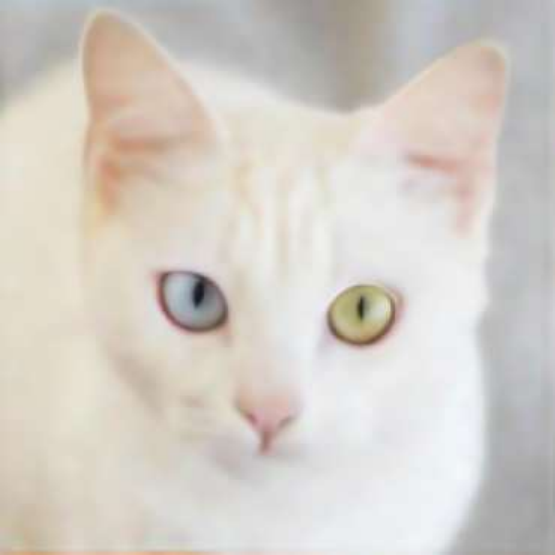}\\\tiny PSNR: 35.81}
        \\[5pt]
        
        \makecell[c]{\includegraphics[width=\linewidth]{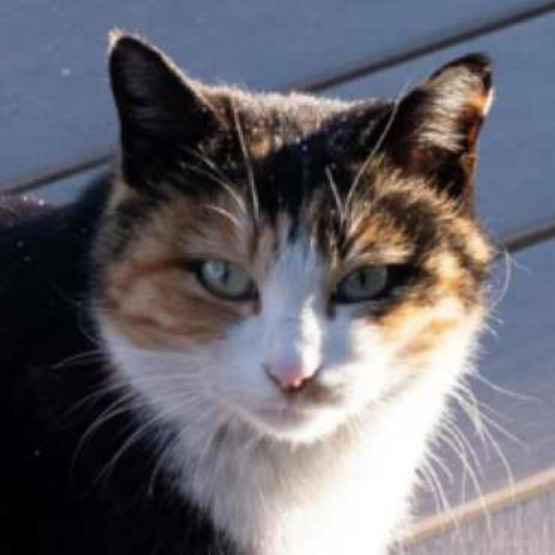}\\\vphantom{\tiny PSNR: 00.00}} &
        \makecell[c]{\includegraphics[width=\linewidth]{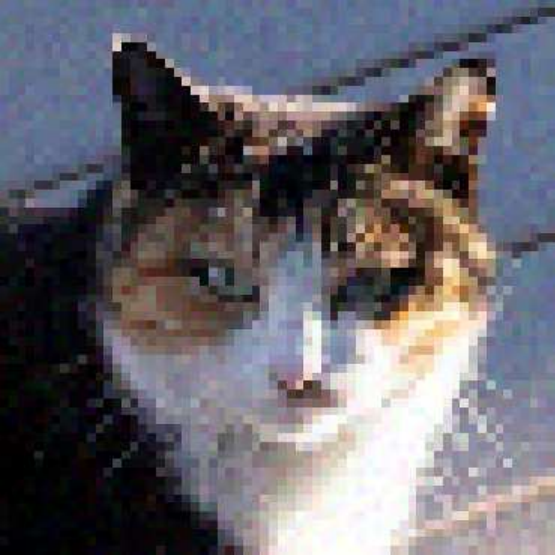}\\\tiny PSNR: 9.70} &
        \makecell[c]{\includegraphics[width=\linewidth]{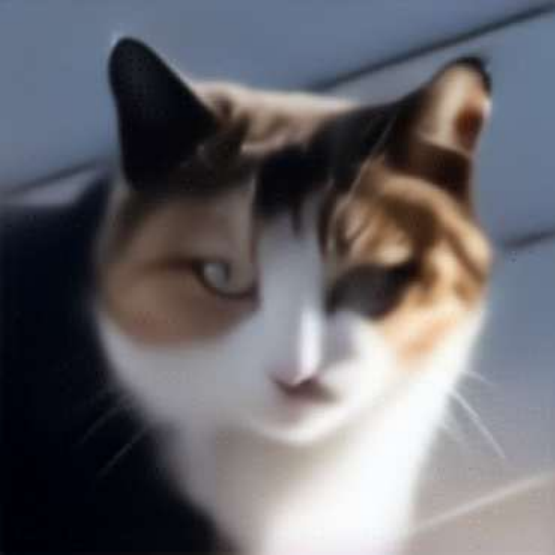}\\\tiny PSNR: 25.48} &
        \makecell[c]{\includegraphics[width=\linewidth]{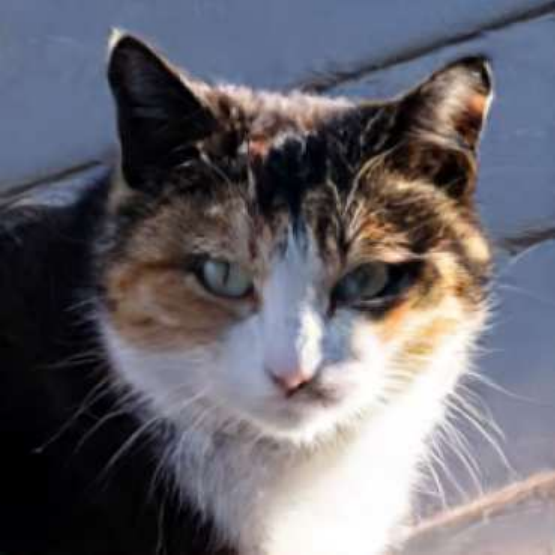}\\\tiny PSNR: 25.07} &
        \makecell[c]{\includegraphics[width=\linewidth]{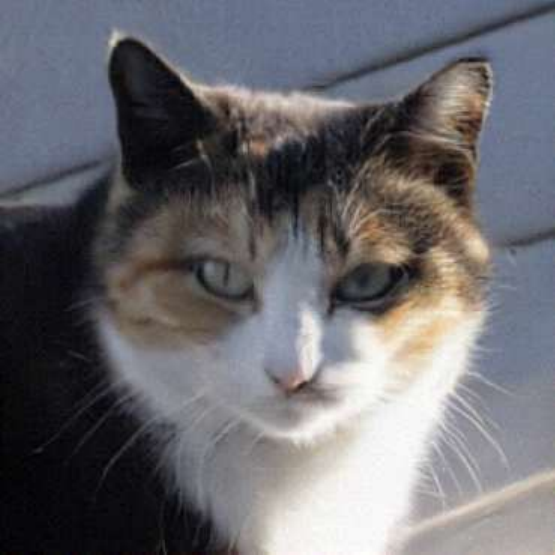}\\\tiny PSNR: 23.99} &
        \makecell[c]{\includegraphics[width=\linewidth]{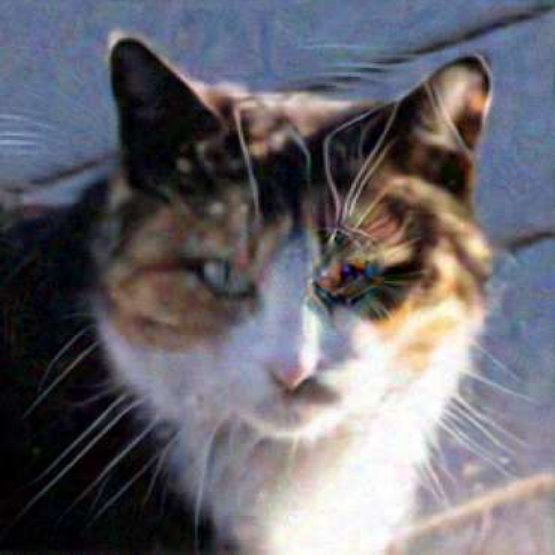}\\\tiny PSNR: 23.73} &
        \makecell[c]{\includegraphics[width=\linewidth]{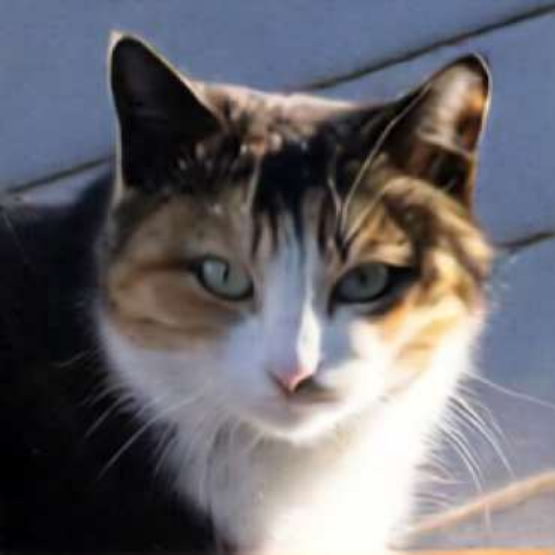}\\\tiny PSNR: 26.02} &
        \makecell[c]{\includegraphics[width=\linewidth]{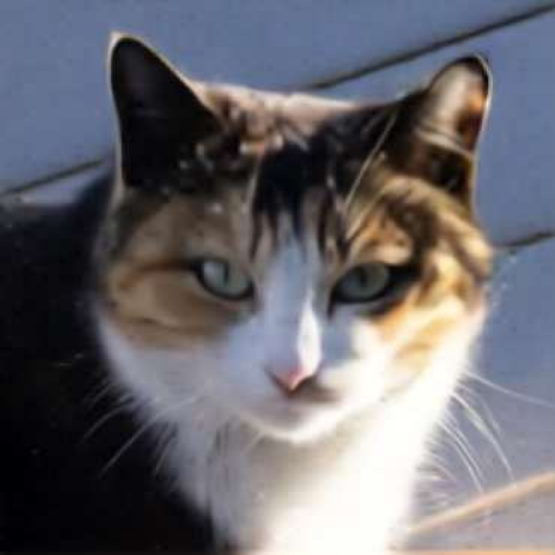}\\\tiny PSNR: 26.73} &
        \makecell[c]{\includegraphics[width=\linewidth]{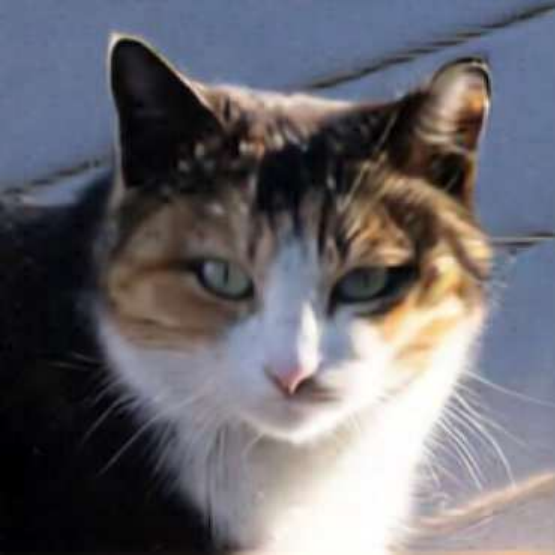}\\\tiny PSNR: 25.98} &
        \makecell[c]{\includegraphics[width=\linewidth]{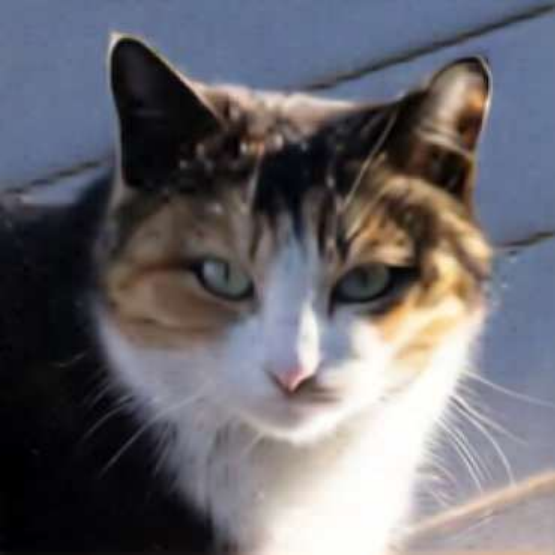}\\\tiny PSNR: 26.75}
        \\[5pt]
        
        \makecell[c]{\includegraphics[width=\linewidth]{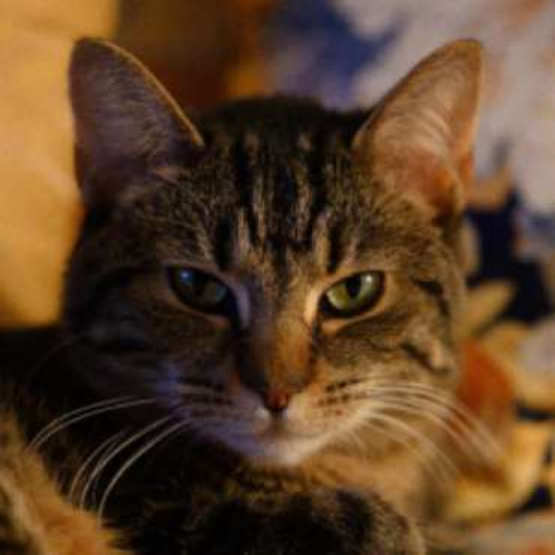}\\\vphantom{\tiny PSNR: 00.00}} &
        \makecell[c]{\includegraphics[width=\linewidth]{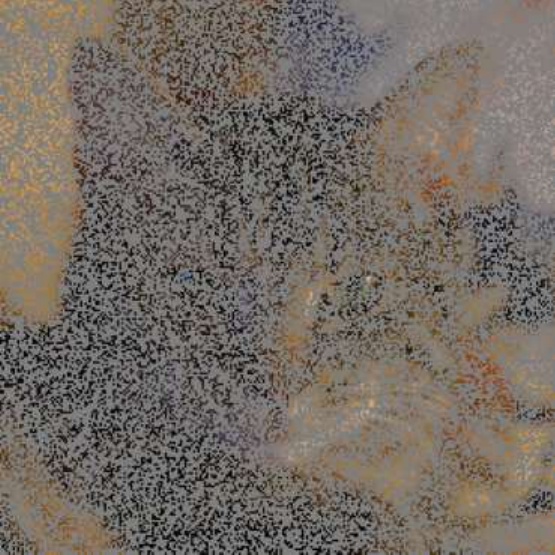}\\\tiny PSNR: 11.49} &
        \makecell[c]{\includegraphics[width=\linewidth]{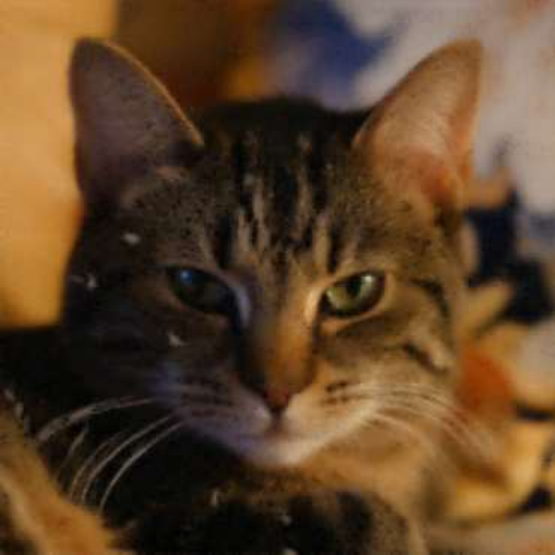}\\\tiny PSNR: 30.54} &
        \makecell[c]{\includegraphics[width=\linewidth]{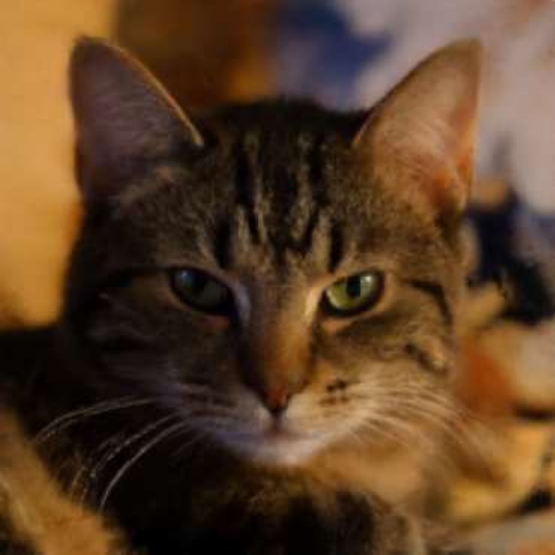}\\\tiny PSNR: 30.15} &
        \makecell[c]{\includegraphics[width=\linewidth]{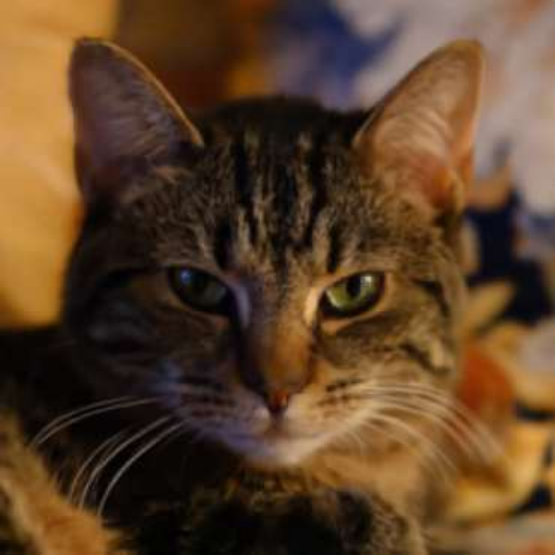}\\\tiny PSNR: 33.52} &
        \makecell[c]{\includegraphics[width=\linewidth]{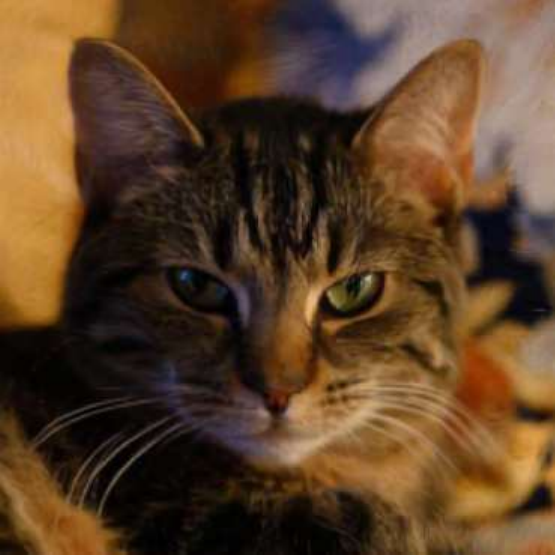}\\\tiny PSNR: 33.35} &
        \makecell[c]{\includegraphics[width=\linewidth]{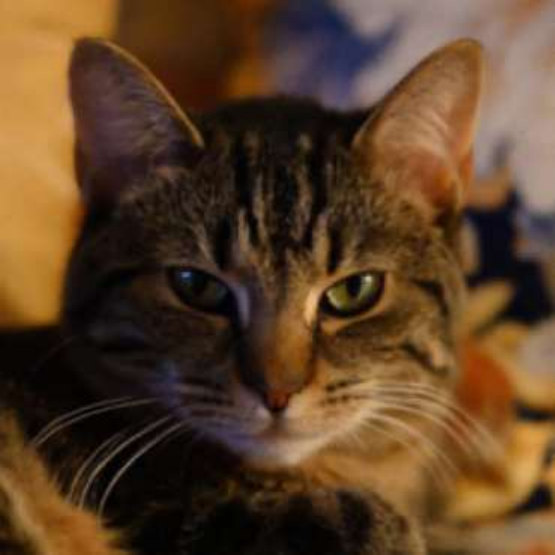}\\\tiny PSNR: 34.69} &
        \makecell[c]{\includegraphics[width=\linewidth]{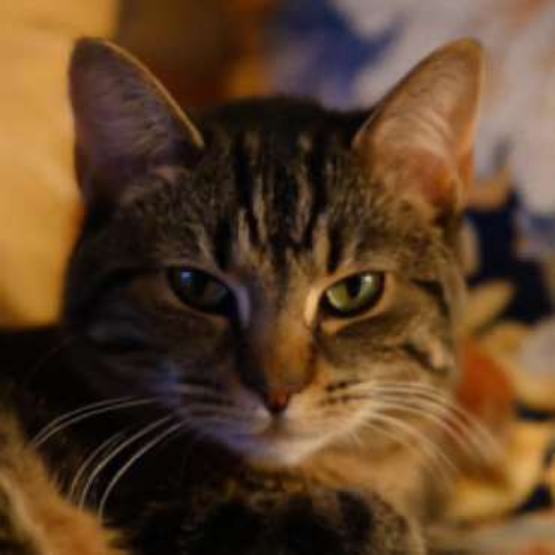}\\\tiny PSNR: 35.39} &
        \makecell[c]{\includegraphics[width=\linewidth]{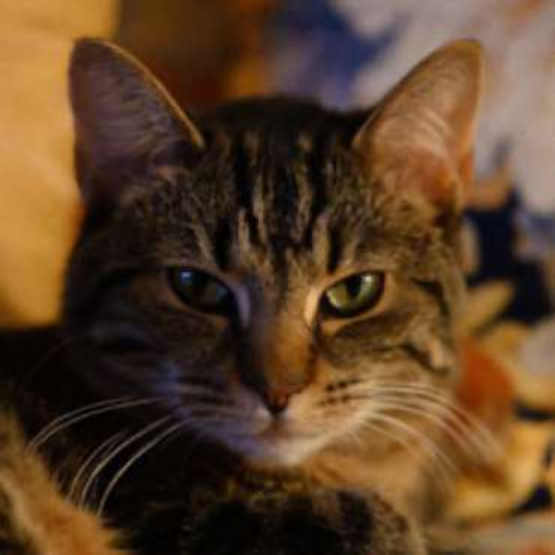}\\\tiny PSNR: 34.87} &
        \makecell[c]{\includegraphics[width=\linewidth]{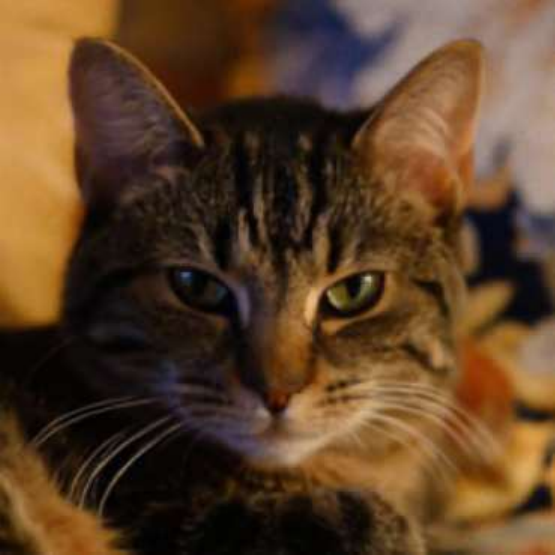}\\\tiny PSNR: 35.45}
        \\[5pt]
        
        \makecell[c]{\includegraphics[width=\linewidth]{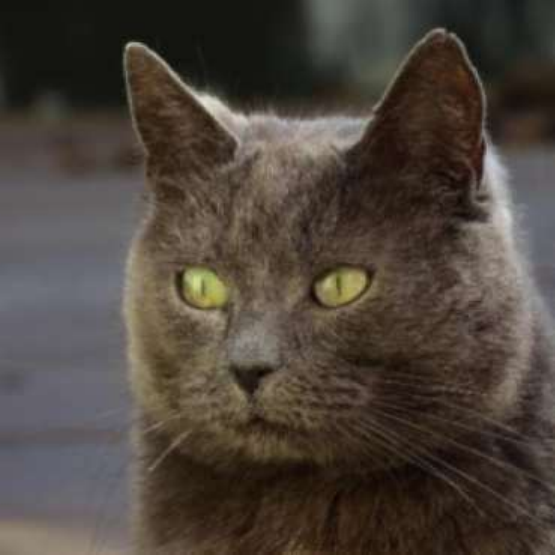}\\\vphantom{\tiny PSNR: 00.00}} &
        \makecell[c]{\includegraphics[width=\linewidth]{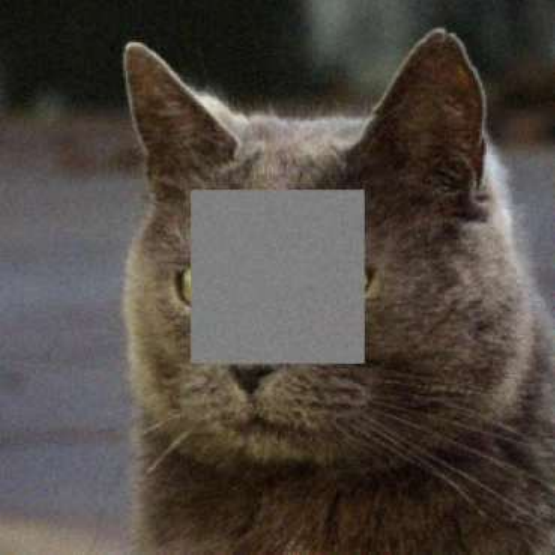}\\\tiny PSNR: 22.89} &
        \makecell[c]{\text{N/A}\\\vphantom{\tiny PSNR: 00.00}} & 
        \makecell[c]{\includegraphics[width=\linewidth]{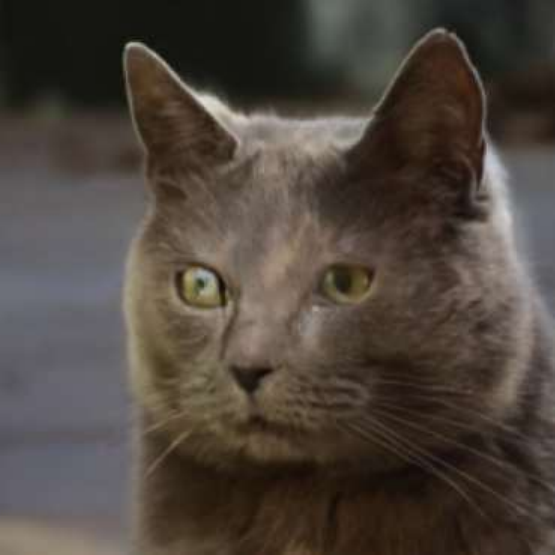}\\\tiny PSNR: 26.94} &
        \makecell[c]{\includegraphics[width=\linewidth]{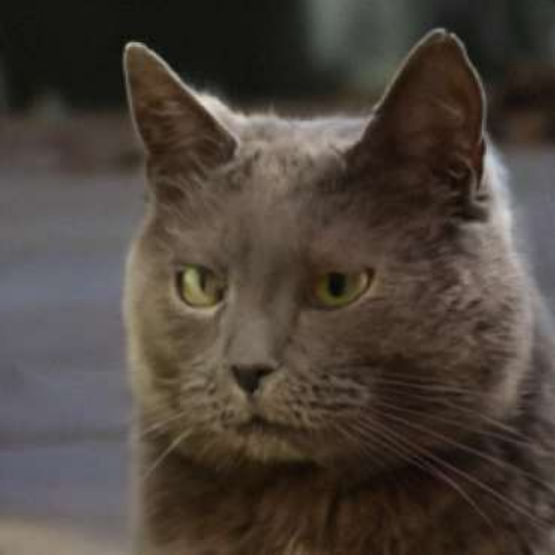}\\\tiny PSNR: 27.95} &
        \makecell[c]{\includegraphics[width=\linewidth]{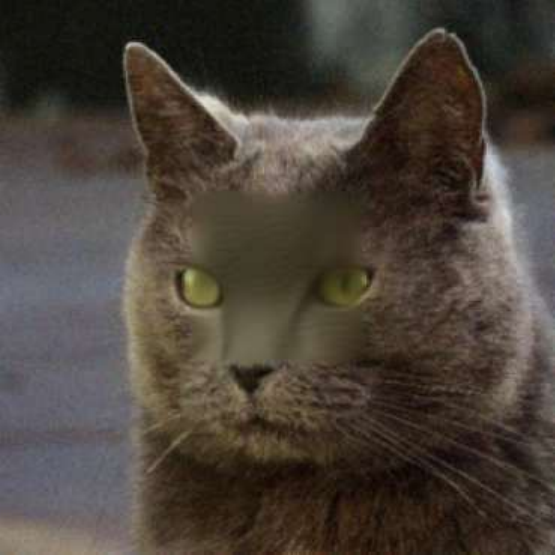}\\\tiny PSNR: 28.17} &
        \makecell[c]{\includegraphics[width=\linewidth]{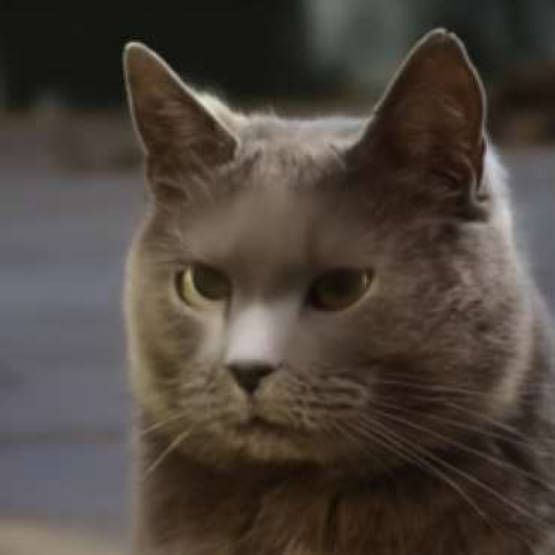}\\\tiny PSNR: 26.28} &
        \makecell[c]{\includegraphics[width=\linewidth]{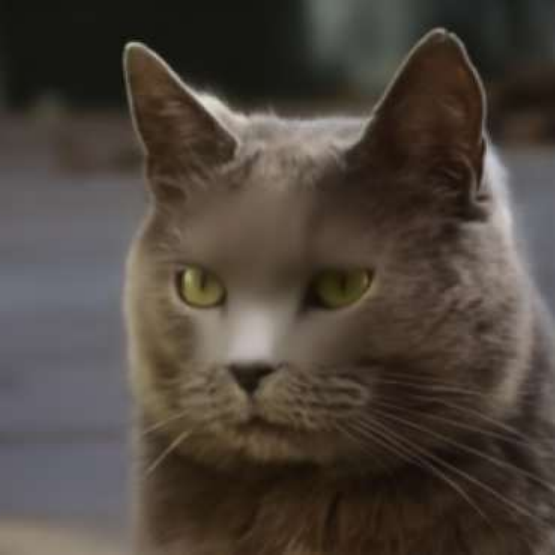}\\\tiny PSNR: 27.64} &
        \makecell[c]{\includegraphics[width=\linewidth]{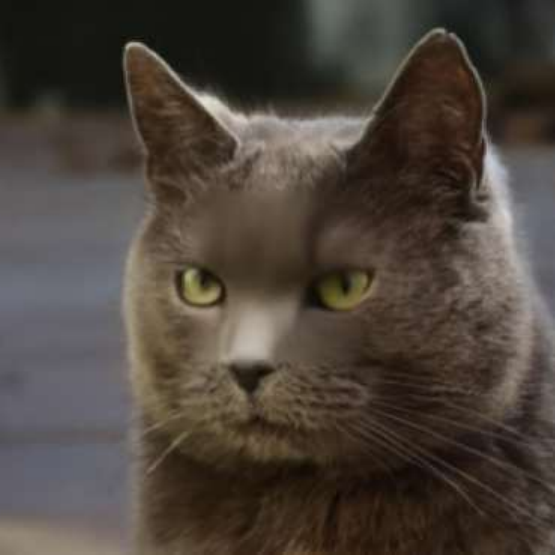}\\\tiny PSNR: 28.02} &
        \makecell[c]{\includegraphics[width=\linewidth]{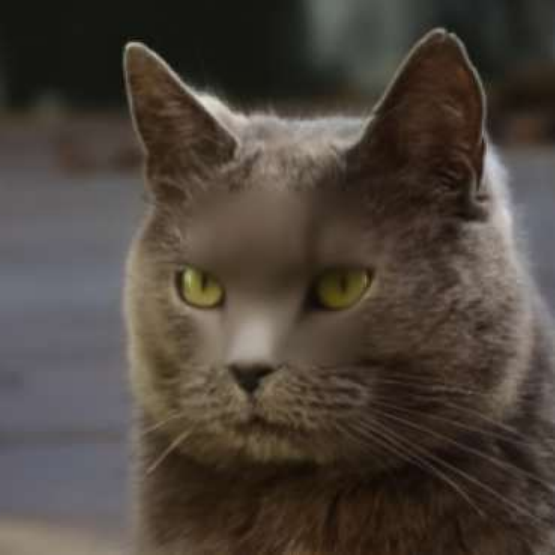}\\\tiny PSNR: 28.73}
        \\[3pt]
    \end{tabular}
    \caption{PSNR comparison of all methods on AFHQ-Cat across five restoration tasks (from top to bottom: denoising, deblurring, super-resolution, random inpainting, and box inpainting).}
    \label{fig:restore_compare_afhq}
\end{figure}
\FloatBarrier

Figures \ref{fig:restore_compare_celeba} and
\ref{fig:restore_compare_afhq} further illustrate the strong and consistent restoration performance of HP-PnP-Flow across all five degradation tasks on both datasets. 
On CelebA, our algorithm achieves clear PSNR improvements over PnP-Flow in super-resolution, random inpainting,
and box inpainting. 
Even under the single-sampling setting, HP-PnP-Flow(1) matches or surpasses most competing methods at their
best-performing settings, while typically requiring fewer iterations.

In terms of visual quality, HP-PnP-Flow produces consistently clean reconstructions across the five tasks, with few noticeable artifacts, although mild over-smoothing may occur in some cases. 
By comparison, PnP-GS is less competitive on generative restoration tasks, while D-Flow performs less favorably on denoising and deblurring. 
Flow-Priors tends to introduce more visible noise and artifacts, whereas OT-ODE preserves textures reasonably well but is less effective on the more ill-posed
restoration tasks. 
Overall, HP-PnP-Flow provides a favorable balance of
reconstruction quality and robustness across different degradation settings.


\subsection{Ablation Study}
\label{sec:ablation}
We conduct the ablation study on random inpainting with $70\%$ missing pixels and Gaussian noise $\sigma=0.01$, using $N=100$ iterations and set $\alpha=0.01$. 
Results are averaged over $10$ test images and reported as
the mean~$\pm$~standard deviation of PSNR (dB).\\


\noindent
\textbf{Initial offset $t_0$.} 
To isolate the effect of the initial time offset, we fix a linear schedule by setting $K=0$ and vary $t_0\in\{0, 0.2,0.5,0.7,0.9\}$, with $t_0=0$ corresponding to the
original PnP-Flow.

\FloatBarrier
\begin{figure}[htp]
    \centering
    \includegraphics[width=0.95\textwidth]{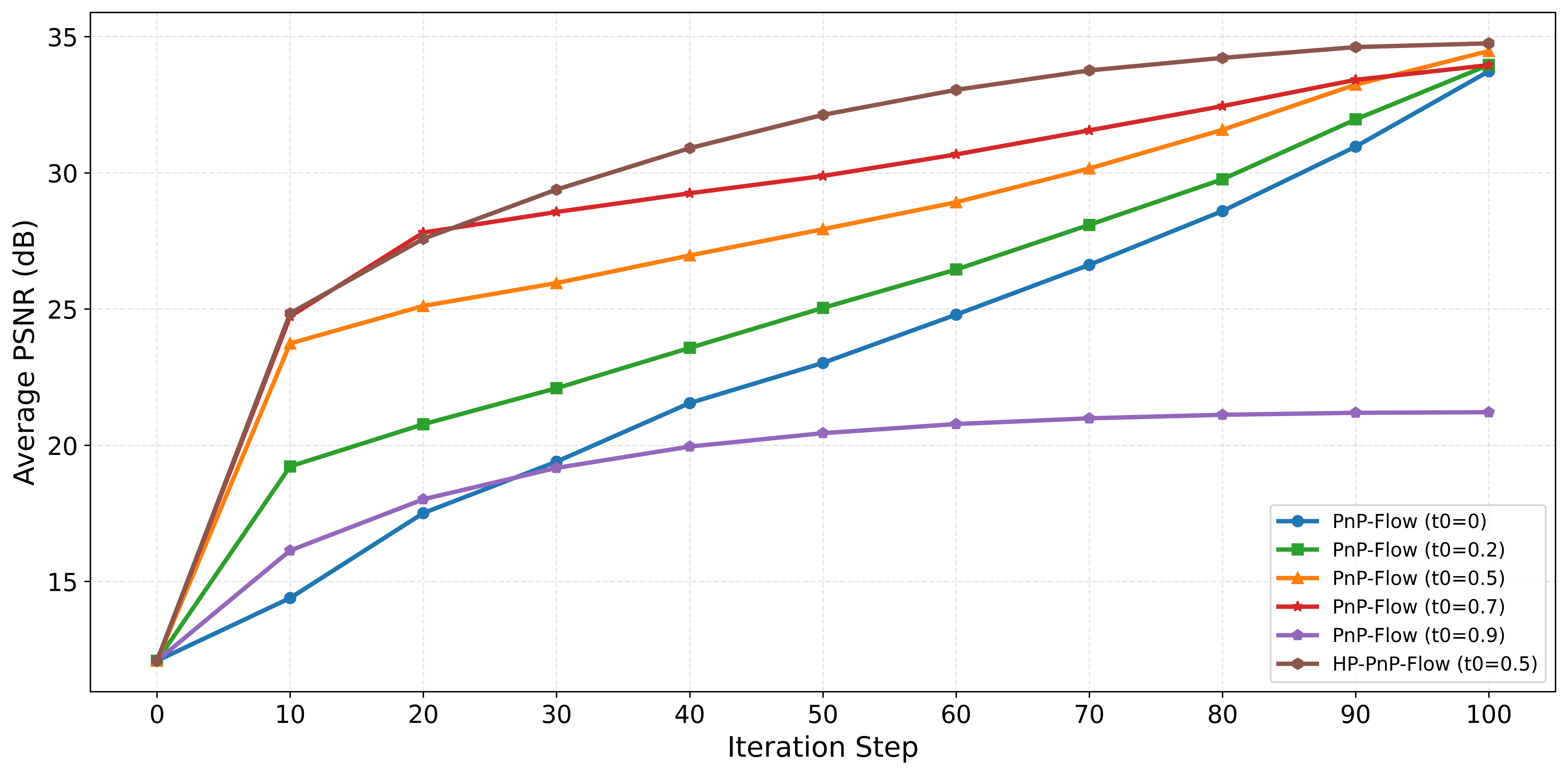}
    \caption{PSNR convergence for different initial offsets $t_0$.}
    \label{fig:ablation_t0}
\end{figure}
\FloatBarrier

\FloatBarrier
\begin{table}[htbp]
  \centering
  \caption{PSNR (dB) for different initial offsets $t_0$.}
  \label{tab:l0_ablation}
  \begin{tabular}{@{}l c c c c c c@{}}
    \toprule
    & PnP-Flow ($t_0=0$) & $t_0=0.2$ & $t_0=0.5$ & $t_0=0.7$ & $t_0=0.9$ & HP-PnP-Flow \\
    \midrule
    Mean PSNR & 33.73 & 33.96 & 34.47 & 33.95 & 21.21 & \textbf{34.75} \\
    Std.\ dev. & 1.07  & 1.12  & 1.19  & 2.61  & 1.56  & 1.34  \\
    \bottomrule
  \end{tabular}
\end{table}
\FloatBarrier

Figure \ref{fig:ablation_t0} shows the PSNR trajectories for different choices of $t_0$. 
Compared with the baseline $t_0=0$, a moderate positive
initial offset accelerates early-stage convergence. 
In particular, setting $t_0=0.5$ yields a mean PSNR of $34.47\,\mathrm{dB}$, improving upon PnP-Flow by $0.74\,\mathrm{dB}$ while maintaining low variability.
For $t_0\in[0.2,0.7]$, the final PSNR consistently exceeds that of the baseline, whereas performance deteriorates when $t_0$ becomes too large (e.g., $t_0=0.9$). 
In addition, as revealed by Table \ref{tab:l0_ablation}, our algorithm achieves a further $0.28\,\mathrm{dB}$ improvement over the best $t_0$-only variant.\\


\noindent\textbf{Transition coefficient $K$.}
For the ablation study, we additionally consider the endpoint cases. We define \(K=0\) as the purely linear schedule and \(K=1\) as the purely geometric schedule \(t_k=1-(1-t_0)\lambda^k\).
To evaluate the effect of the transition coefficient $K$, we fix $\lambda=0.965$ and vary $K\in\{0,0.2,0.5,0.7,1.0\}$.
The choice $K=0$ recovers a purely linear schedule used in \cite{pnp_flow}, while $K=1$ corresponds to a purely geometric schedule.


\FloatBarrier
\begin{figure}[htp]
    \centering
    \includegraphics[width=0.95\textwidth]{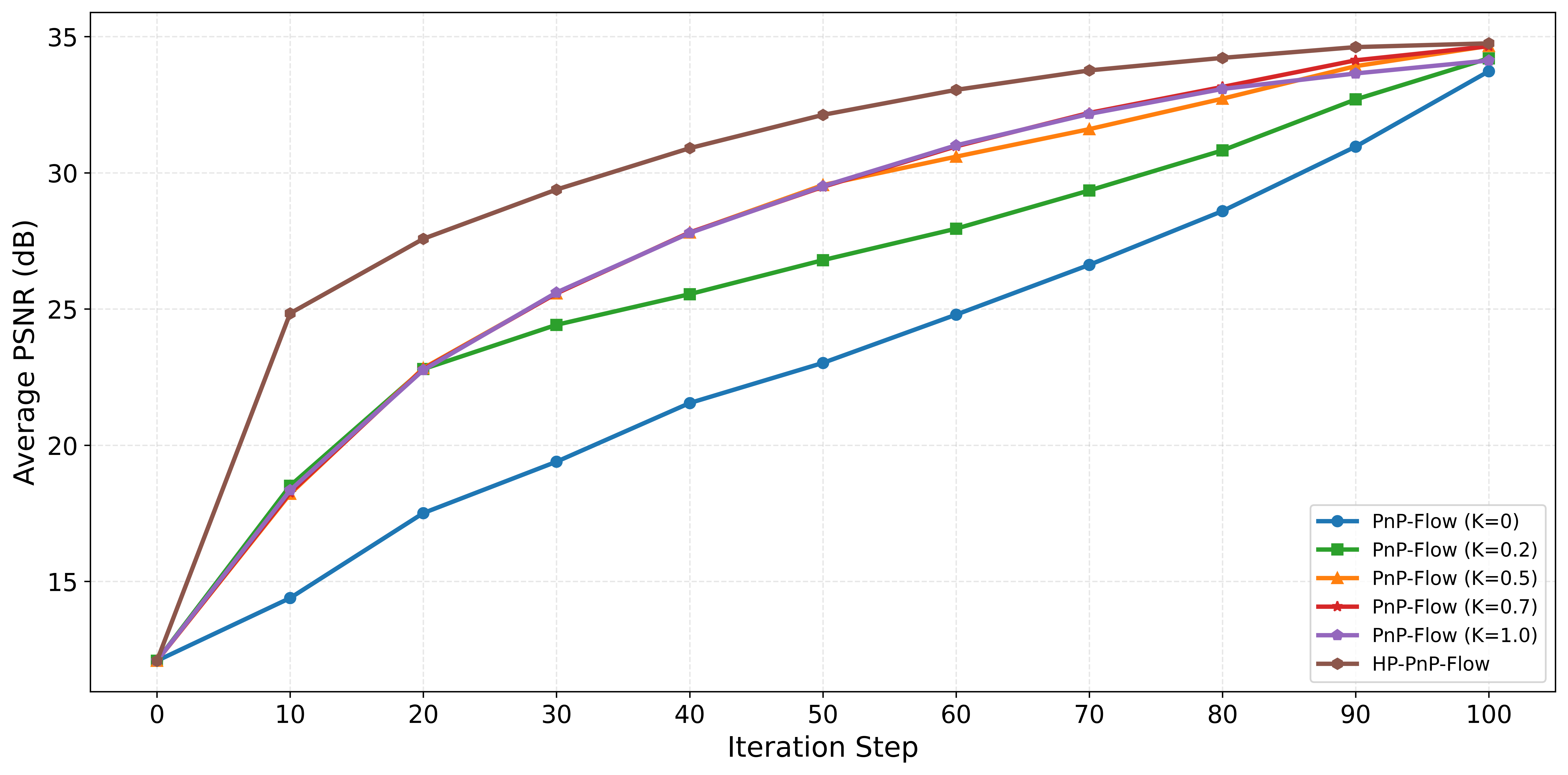}
    \caption{PSNR convergence for different transition coefficients $K$.}
    \label{fig:ablation_K}
\end{figure}
\FloatBarrier

\FloatBarrier
\begin{table}[htbp]
  \centering
  \caption{PSNR (dB) for different transition coefficients $K$.}
  \label{tab:K_ablation}
  \begin{tabular}{@{}l c c c c c c@{}}
    \toprule
    & PnP-Flow ($K=0$) & $K=0.2$ & $K=0.5$ & $K=0.7$ & $K=1.0$ & HP-PnP-Flow \\
    \midrule
    Mean PSNR & 33.73 & 34.20 & 34.66 & 34.65 & 34.12 & \textbf{34.75} \\
    Std.\ dev. & 1.07  & 1.17  & 1.24  & 1.18  & 1.07  & 1.34  \\
    \bottomrule
  \end{tabular}
\end{table}
\FloatBarrier

From Figure \ref{fig:ablation_K}, we observe that a larger $K$ accelerates early-stage convergence by increasing the contribution of the geometric phase.
The purely linear schedule ($K=0$) exhibits the slowest early-stage convergence, whereas larger values of $K$ lead to more rapid initial PSNR improvement.
When $K=0.5$, HP-PnP-Flow achieves the highest final PSNR among the single-factor variants ($34.66\,\mathrm{dB}$), outperforming both the purely linear ($K=0$) and purely geometric ($K=1$) schedules.
This suggests that the two-phase design effectively combines the complementary advantages of geometric growth for rapid early recovery and linear refinement for stable convergence.

As shown in Table \ref{tab:K_ablation}, the proposed HP-PnP-Flow, which combines the optimal $t_0$, the hybrid time schedule, and the projection operator, achieves both the fastest early-stage convergence and the highest final PSNR ($34.75\,\mathrm{dB}$) among all configurations,
suggesting that these three components provide complementary benefits.

\section{Conclusion}
\label{sec:conclusion}
In this work, we propose an efficient hybrid-projection Plug-and-Play-Flow algorithm, an improved Plug-and-Play algorithm for image inverse problems that integrates flow matching priors with a hybrid time schedule and pixel-space projection operator.
Furthermore, we establish the convergence of HP-PnP-Flow without imposing the boundedness assumption on the iterates required in original PnP-Flow (see Proposition 4 in \cite{pnp_flow}), since the pixel-space projection keeps the iterates within the prescribed pixel range.
The hybrid schedule combines geometric and linear growth phases, satisfies the convergence condition, and achieves faster early-stage reconstruction and improved numerical performance over the linear schedule used in PnP-Flow.
Experiments on CelebA and AFHQ-Cat across five degradation tasks show that HP-PnP-Flow achieves the best or second-best PSNR, SSIM, and LPIPS in most settings, while requiring fewer iterations than PnP-Flow (e.g., $100$ vs.\
$500$ iterations for deblurring on AFHQ-Cat). 
Ablation studies further demonstrate that the initial time offset $t_0$ and the hybrid schedule contribute independently to the performance gains, with their combination yielding the best overall performance.

Several directions remain for future investigation. 
First, we plan to establish quantitative error bounds for HP-PnP-Flow under mild assumptions. 
Second, it would be interesting to explore whether the proposed hybrid time schedule can be extended to other generative-model-based PnP frameworks beyond flow matching.




\bibliography{references}

\appendix
\renewcommand{\thesection}{\Alph{section}}

\section{Hyperparameter Configurations}
\label{sec:appendix_hyperparams}

We report the optimal hyperparameter settings for HP-PnP-Flow and all baseline methods on CelebA (Table \ref{tab:hyperparams_celeba}) and AFHQ-Cat
(Table \ref{tab:hyperparams_afhqcat}). These settings are selected by grid search on the validation set to maximize PSNR. A dash  $"-"$ denotes that the corresponding method is not applicable to the given degradation task.

\FloatBarrier
\begin{table}[htbp]
  \centering
  \caption{Optimal hyperparameter settings on CelebA.}
  \label{tab:hyperparams_celeba}
  \setlength{\tabcolsep}{4.5pt}
  \footnotesize
  \begin{tabular}{lccccc}
    \toprule
    Method & Denoising & Deblurring & Super-res.\ & Rand.\ inp.\ & Box inp. \\
    \midrule
    \multicolumn{6}{l}{\textbf{PnP-GS}} \\
    $\gamma$ (step size)            & ---  & 2.0  & 2.0  & ---  & ---  \\
    $\alpha$ (momentum)             & 1.0  & 0.3  & 1.0  & ---  & ---  \\
    $\sigma_f$ (noise factor)       & 1.0  & 1.8  & 3.0  & ---  & ---  \\
    $n_{\text{iter}}$ (iterations)  & 1    & 35   & 20   & 23   & ---  \\
    \addlinespace[0.5em]
    \multicolumn{6}{l}{\textbf{OT-ODE}} \\
    $t_0$ (initial time)            & 0.3  & 0.4  & 0.1  & 0.1  & 0.1  \\
    $\gamma$                        & $\sqrt{t}$ & $\sqrt{t}$ & constant & constant & $\sqrt{t}$ \\
    \addlinespace[0.5em]
    \multicolumn{6}{l}{\textbf{Flow-Priors}} \\
    $\lambda$ (regularization)      & 100  & 1000 & 10000 & 10000 & 10000 \\
    $\eta$ (learning rate)          & 0.01 & 0.01 & 0.1   & 0.01  & 0.01  \\
    \addlinespace[0.5em]
    \multicolumn{6}{l}{\textbf{D-Flow}} \\
    $\lambda$ (regularization)      & 0.001 & 0.001 & 0.001 & 0.01  & 0.001 \\
    $\alpha$ (blending)             & 0.1   & 0.1   & 0.1   & 0.1   & 0.1   \\
    $n_{\text{iter}}$ (iterations)  & 3     & 7     & 10    & 20    & 9     \\
    \addlinespace[0.5em]
    \multicolumn{6}{l}{\textbf{PnP-Flow}} \\
    $\alpha$ (step size exponent)   & 0.8  & 0.01 & 0.3  & 0.01 & 0.5  \\
    $N$ (iterations)                & 100  & 100  & 100  & 100  & 100  \\
    \addlinespace[0.5em]
    \multicolumn{6}{l}{\textbf{HP-PnP-Flow (ours)}} \\
    $\eta$ (step size scale)        & 1.0  & 1.0  & 1.0  & 1.0  & 2.0  \\
    $\alpha$ (step size exponent)   & 0.8  & 0.01 & 0.3  & 0.01 & 0.5  \\
    $N$ (iterations)                & 100  & 100  & 100  & 100  & 60   \\
    $t_0$ (initial offset)          & 0.4  & 0.2  & 0.4  & 0.5  & 0.0  \\
    $\lambda$ (decay factor)        & 0.965 & 0.9 & 0.965 & 0.965 & 0.99 \\
    $K$ (transition coefficient)    & 0.3  & 0.3  & 0.7  & 0.7  & 0.2  \\
    \bottomrule
  \end{tabular}

  \vspace{4pt}
  {\footnotesize Note 1: PnP-Flow(1) and PnP-Flow(5) share the same hyperparameters across all tasks. \\
  Note 2: HP-PnP-Flow(1) and HP-PnP-Flow(5) share the same hyperparameters across all tasks.}
\end{table}
\FloatBarrier

\FloatBarrier
\begin{table}[htbp]
  \centering
  \caption{Optimal hyperparameter settings on AFHQ-Cat.}
  \label{tab:hyperparams_afhqcat}
  \setlength{\tabcolsep}{4.5pt}
  \footnotesize
  \begin{tabular}{lccccc}
    \toprule
    Method & Denoising & Deblurring & Super-res.\ & Rand.\ inp.\ & Box inp. \\
    \midrule
    \multicolumn{6}{l}{\textbf{PnP-GS}} \\
    $\gamma$ (step size)            & ---  & 2.0  & 2.0  & ---  & ---  \\
    $\alpha$ (momentum)             & 1.0  & 0.3  & 1.0  & ---  & ---  \\
    $\sigma_f$ (noise factor)       & 1.0  & 1.8  & 5.0  & ---  & ---  \\
    $n_{\text{iter}}$ (iterations)  & 1    & 60   & 50   & 23   & ---  \\
    \addlinespace[0.5em]
    \multicolumn{6}{l}{\textbf{OT-ODE}} \\
    $t_0$ (initial time)            & 0.3  & 0.3  & 0.1  & 0.1  & 0.1  \\
    $\gamma$                        & $\sqrt{t}$ & $\sqrt{t}$ & constant & constant & $\sqrt{t}$ \\
    \addlinespace[0.5em]
    \multicolumn{6}{l}{\textbf{Flow-Priors}} \\
    $\lambda$ (regularization)      & 100  & 1000 & 10000 & 10000 & 10000 \\
    $\eta$ (learning rate)          & 0.01 & 0.01 & 0.1   & 0.01  & 0.01  \\
    \addlinespace[0.5em]
    \multicolumn{6}{l}{\textbf{D-Flow}} \\
    $\lambda$ (regularization)      & 0.001 & 0.001 & 0.001 & 0.001 & 0.01  \\
    $\alpha$ (blending)             & 0.1   & 0.5   & 0.1   & 0.1   & 0.1   \\
    $n_{\text{iter}}$ (iterations)  & 3     & 20    & 20    & 20    & 9     \\
    \addlinespace[0.5em]
    \multicolumn{6}{l}{\textbf{PnP-Flow}} \\
    $\alpha$ (step size exponent)   & 0.8  & 0.01 & 0.01 & 0.01 & 0.5  \\
    $N$ (iterations)                & 100  & 500  & 500  & 200  & 100  \\
    \addlinespace[0.5em]
    \multicolumn{6}{l}{\textbf{HP-PnP-Flow (ours)}} \\
    $\eta$ (step size scale)        & 1.2  & 1.0  & 1.2  & 1.0  & 2.0  \\
    $\alpha$ (step size exponent)   & 0.8  & 0.01 & 0.01 & 0.01 & 0.5  \\
    $N$ (iterations)                & 100  & 100  & 100  & 100  & 60   \\
    $t_0$ (initial offset)          & 0.4  & 0.2  & 0.4  & 0.3  & 0.0  \\
    $\lambda$ (decay factor)        & 0.965 & 0.965 & 0.965 & 0.965 & 0.99 \\
    $K$ (transition coefficient)    & 0.4  & 0.3  & 0.7  & 0.7  & 0.2  \\
    \bottomrule
  \end{tabular}

  \vspace{4pt}
  {\footnotesize Note 1: PnP-Flow(1) and PnP-Flow(5) share the same hyperparameters across all tasks. \\
  Note 2: HP-PnP-Flow(1) and HP-PnP-Flow(5) share the same hyperparameters across all tasks.}
\end{table}

\FloatBarrier
\section{Iteration-wise Comparison with PnP-Flow}
\label{sec:appendix_iteration_comparison}
Figures \ref{fig:app_denoise_celeba}-\ref{fig:app_boxinp_afhq} present
iteration snapshots of HP-PnP-Flow and PnP-Flow across all five restoration tasks and both datasets. All experiments are conducted under the
single-sampling setting.

\FloatBarrier
\begin{figure}[htp]
    \centering
    \begin{subfigure}[b]{0.85\textwidth}
        \centering
        \includegraphics[width=\textwidth]{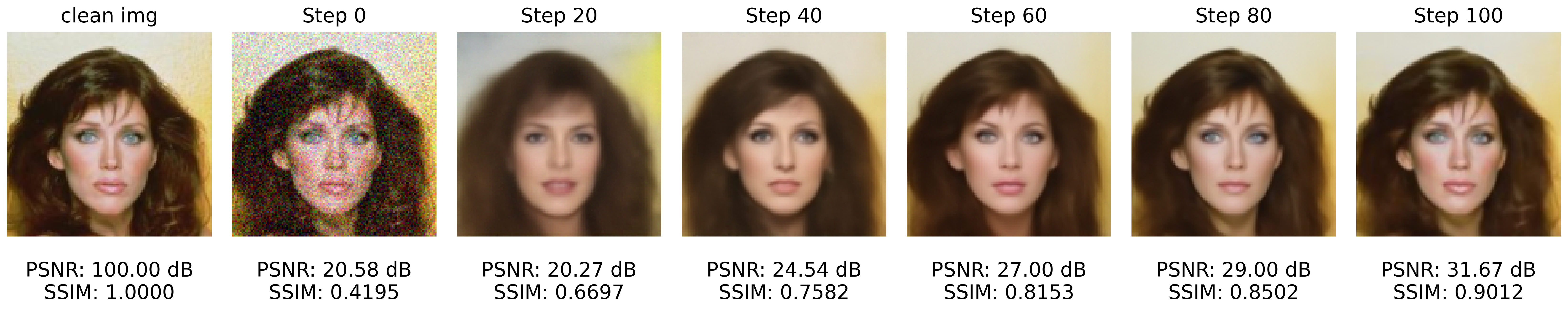}
        \caption{PnP-Flow}
    \end{subfigure}
    \\[6pt]
    \begin{subfigure}[b]{0.85\textwidth}
        \centering
        \includegraphics[width=\textwidth]{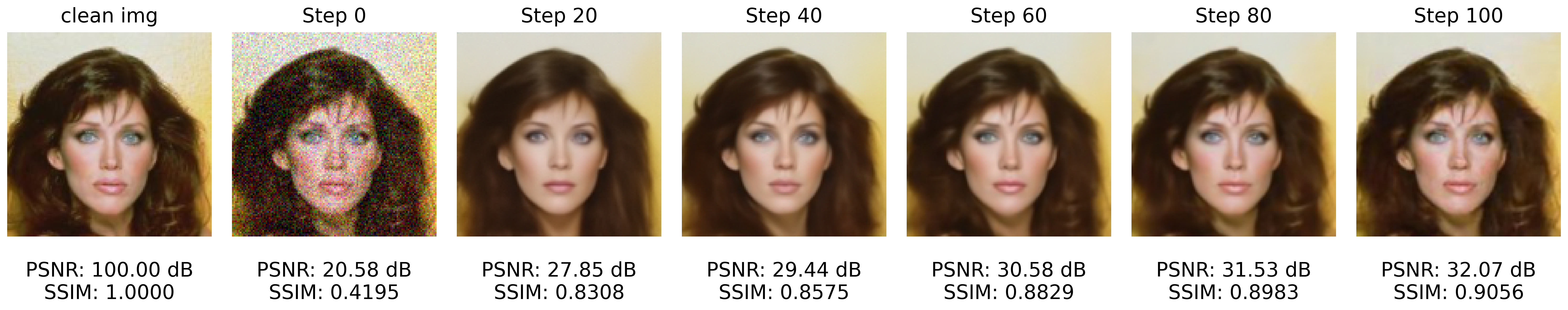}
        \caption{HP-PnP-Flow}
    \end{subfigure}
    \caption{Denoising on CelebA ($\sigma=0.2$).}
    \label{fig:app_denoise_celeba}
\end{figure}
\FloatBarrier

\begin{figure}[htp]
    \centering
    \begin{subfigure}[b]{0.85\textwidth}
        \centering
        \includegraphics[width=\textwidth]{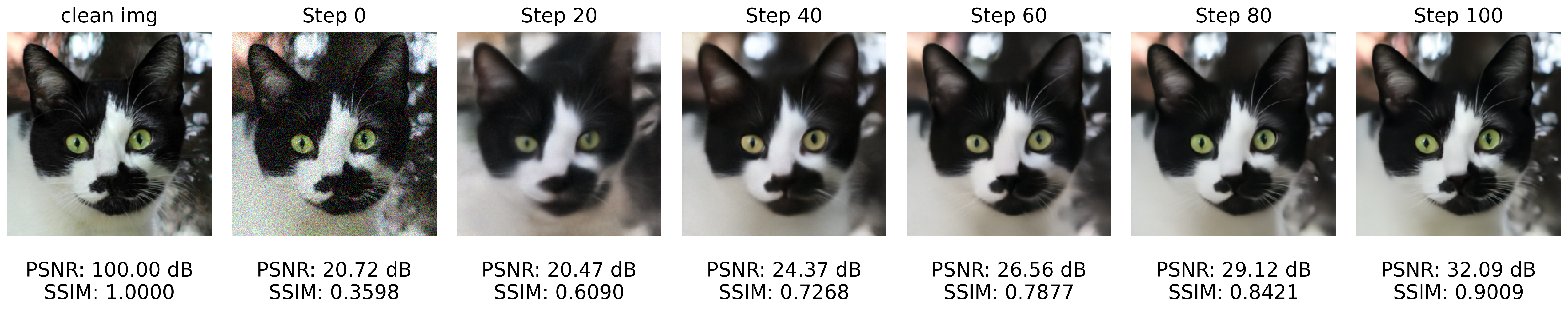}
        \caption{PnP-Flow}
    \end{subfigure}
    \\[6pt]
    \begin{subfigure}[b]{0.85\textwidth}
        \centering
        \includegraphics[width=\textwidth]{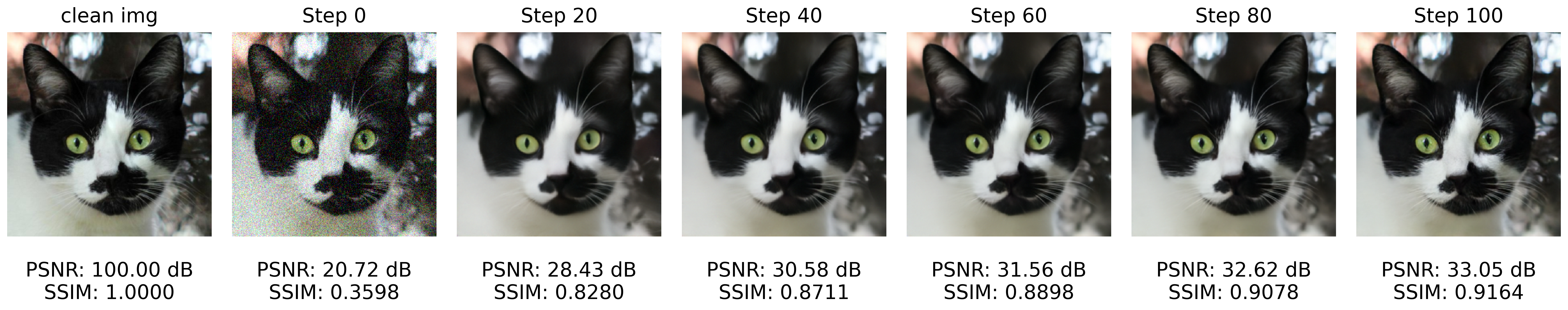}
        \caption{HP-PnP-Flow}
    \end{subfigure}
    \caption{Denoising on AFHQ-Cat ($\sigma=0.2$).}
    \label{fig:app_denoise_afhq}
\end{figure}
\FloatBarrier

\begin{figure}[htp]
    \centering
    \begin{subfigure}[b]{0.85\textwidth}
        \centering
        \includegraphics[width=\textwidth]{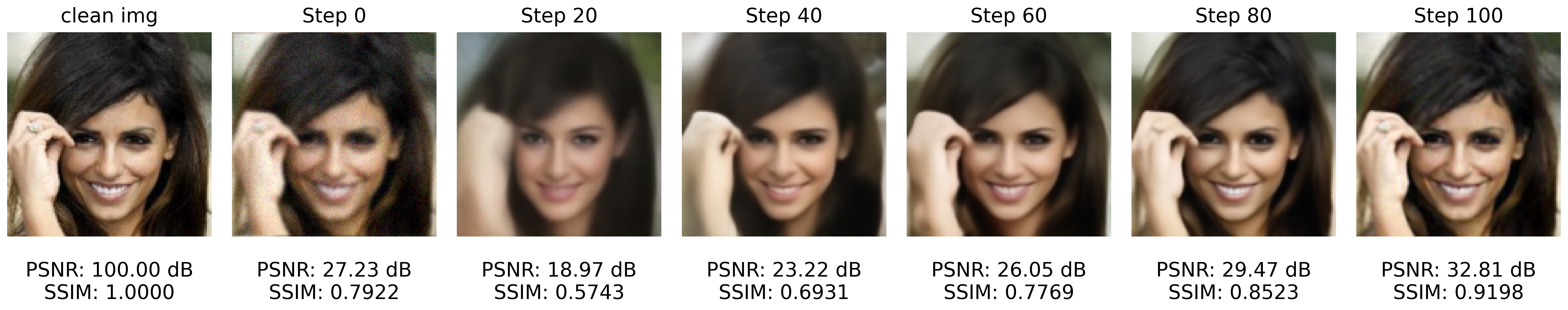}
        \caption{PnP-Flow}
    \end{subfigure}
    \\[6pt]
    \begin{subfigure}[b]{0.85\textwidth}
        \centering
        \includegraphics[width=\textwidth]{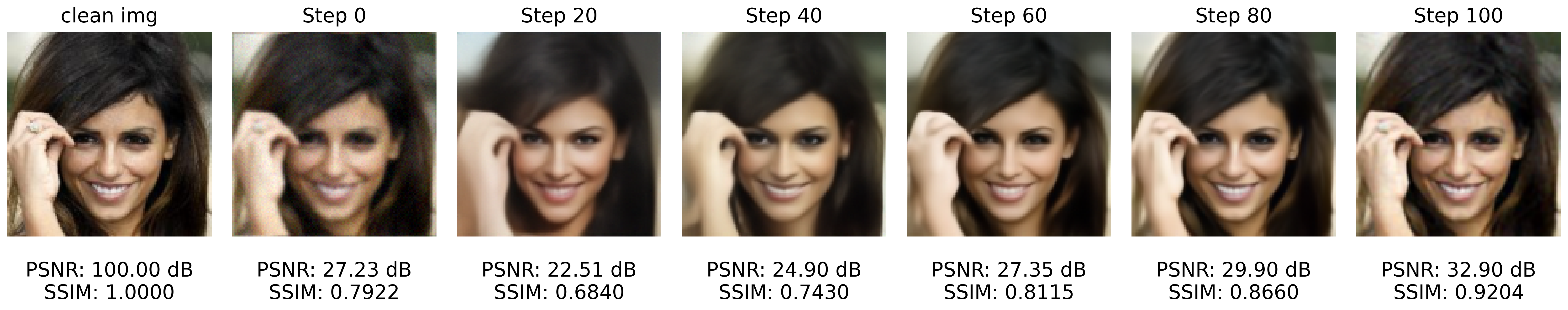}
        \caption{HP-PnP-Flow}
    \end{subfigure}
    \caption{Deblurring on CelebA ($\sigma=0.05$, $b=1.0$).}
    \label{fig:app_deblur_celeba}
\end{figure}
\FloatBarrier

\begin{figure}[htp]
    \centering
    \begin{subfigure}[b]{0.85\textwidth}
        \centering
        \includegraphics[width=\textwidth]{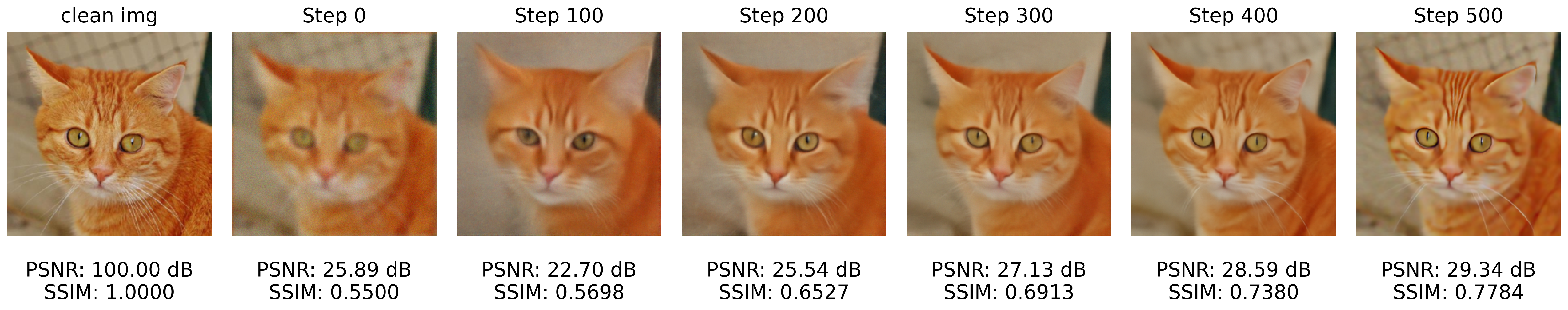}
        \caption{PnP-Flow}
    \end{subfigure}
    \\[6pt]
    \begin{subfigure}[b]{0.85\textwidth}
        \centering
        \includegraphics[width=\textwidth]{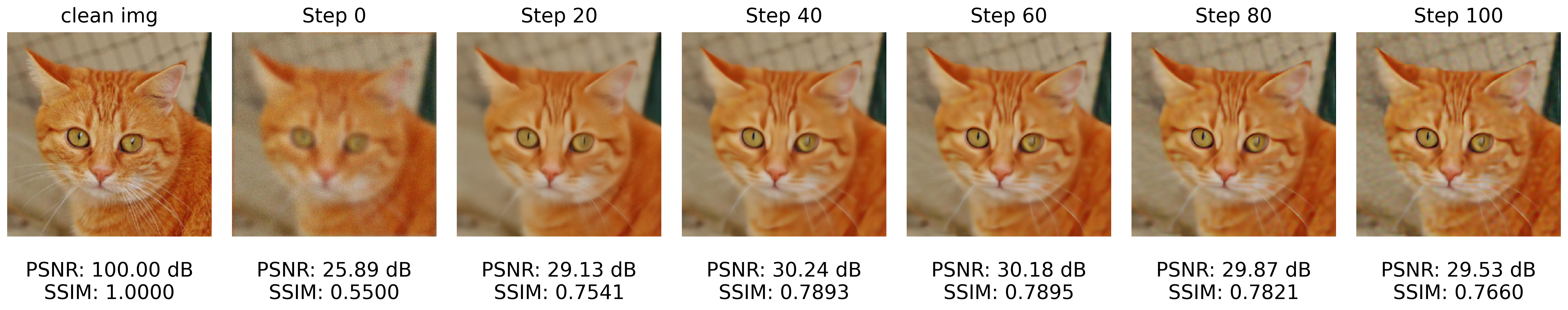}
        \caption{HP-PnP-Flow}
    \end{subfigure}
    \caption{Deblurring on AFHQ-Cat ($\sigma=0.05$, $b=3.0$).}
    \label{fig:app_deblur_afhq}
\end{figure}
\FloatBarrier

\begin{figure}[htp]
    \centering
    \begin{subfigure}[b]{0.85\textwidth}
        \centering
        \includegraphics[width=\textwidth]{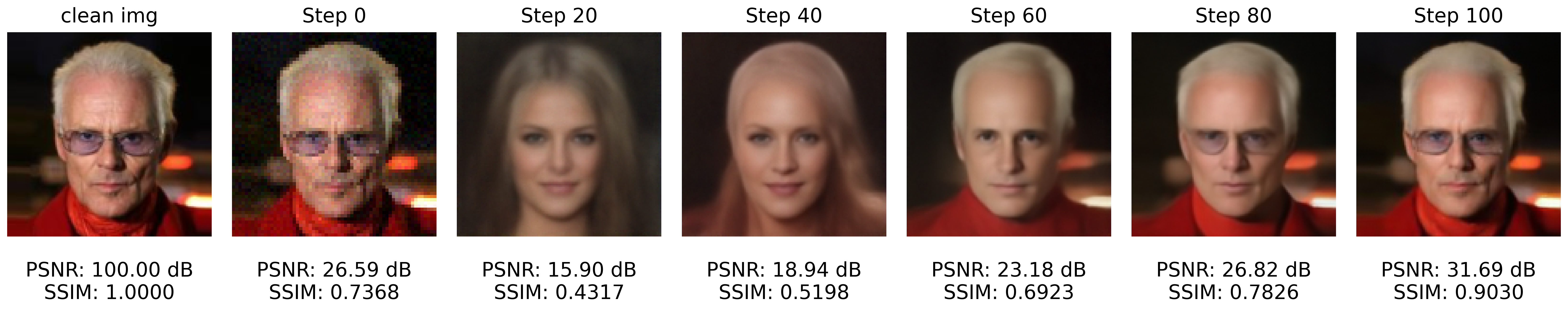}
        \caption{PnP-Flow}
    \end{subfigure}
    \\[6pt]
    \begin{subfigure}[b]{0.85\textwidth}
        \centering
        \includegraphics[width=\textwidth]{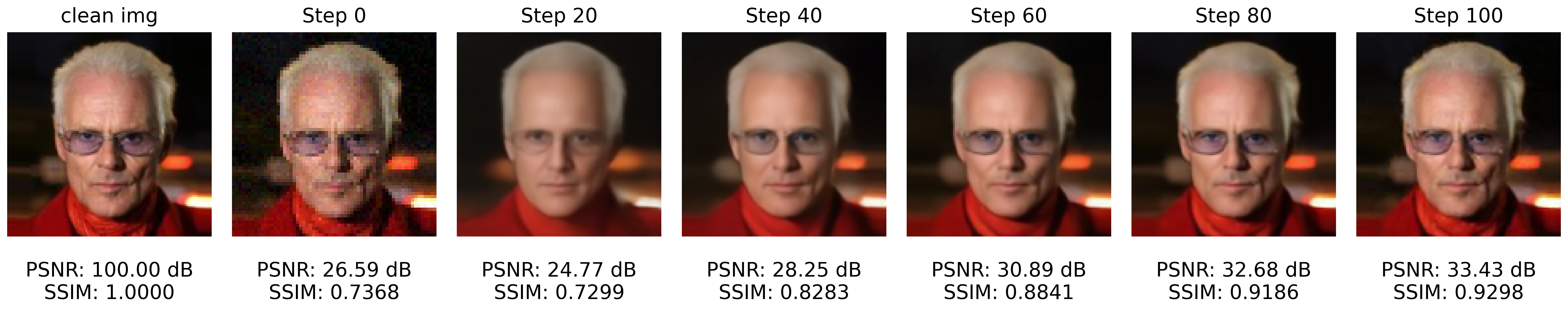}
        \caption{HP-PnP-Flow}
    \end{subfigure}
    \caption{Super-resolution on CelebA ($\times2$, $\sigma=0.05$).}
    \label{fig:app_sr_celeba}
\end{figure}
\FloatBarrier

\begin{figure}[htp]
    \centering
    \begin{subfigure}[b]{0.85\textwidth}
        \centering
        \includegraphics[width=\textwidth]{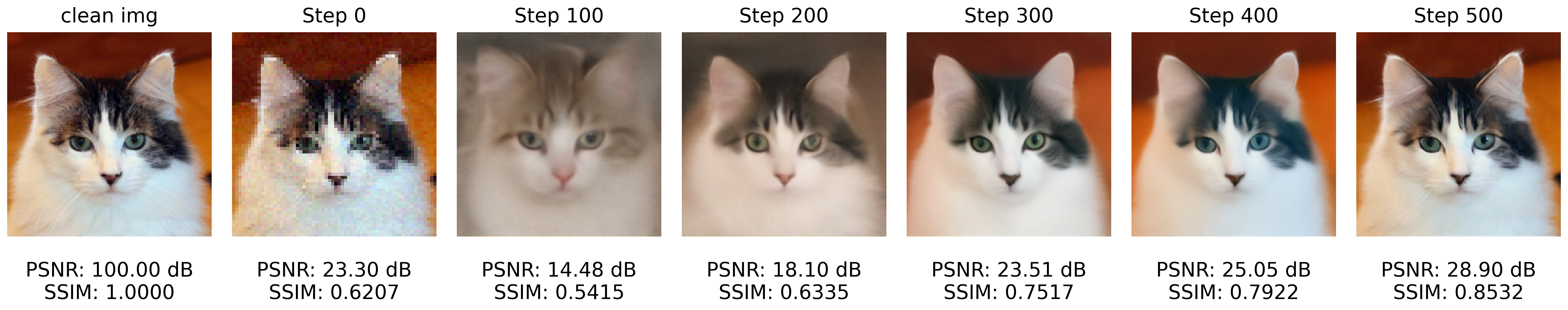}
        \caption{PnP-Flow}
    \end{subfigure}
    \\[6pt]
    \begin{subfigure}[b]{0.85\textwidth}
        \centering
        \includegraphics[width=\textwidth]{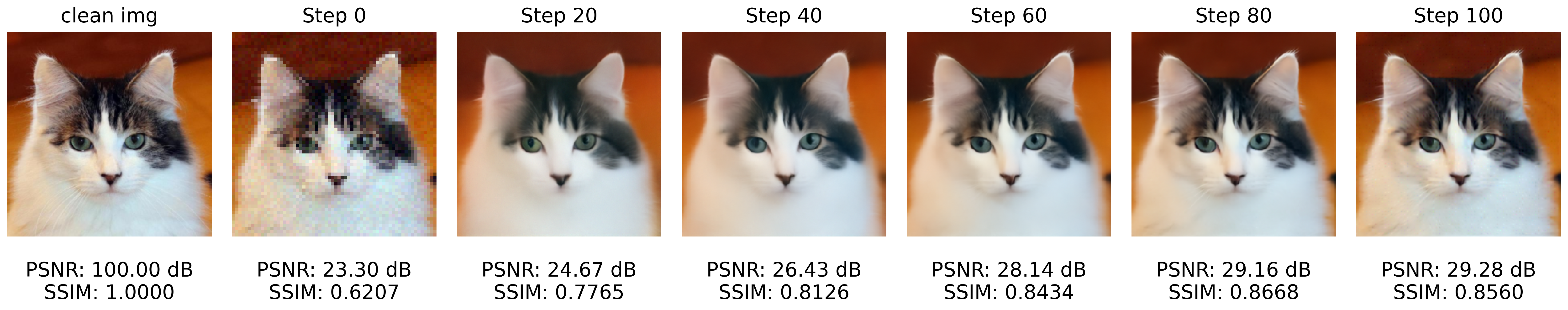}
        \caption{HP-PnP-Flow}
    \end{subfigure}
    \caption{Super-resolution on AFHQ-Cat ($\times4$, $\sigma=0.05$).}
    \label{fig:app_sr_afhq}
\end{figure}
\FloatBarrier

\begin{figure}[htp]
    \centering
    \begin{subfigure}[b]{0.85\textwidth}
        \centering
        \includegraphics[width=\textwidth]{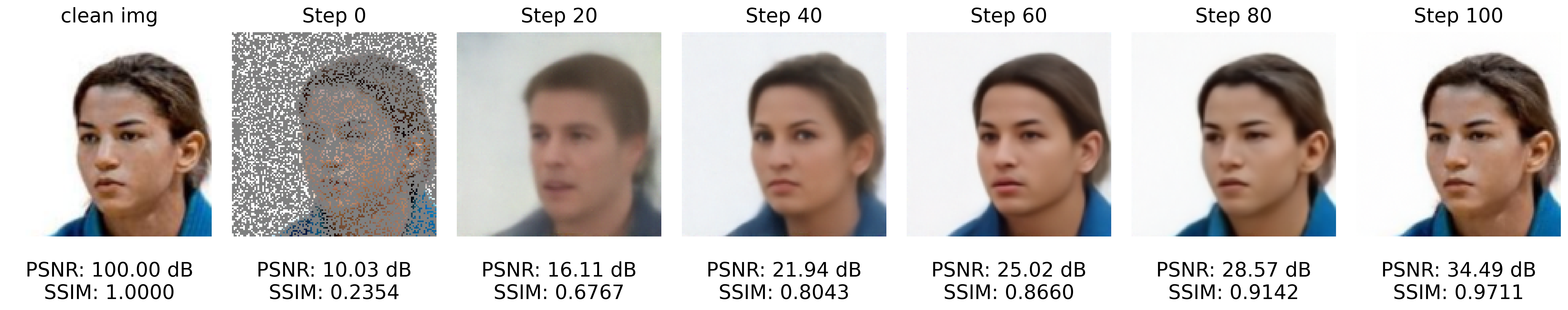}
        \caption{PnP-Flow}
    \end{subfigure}
    \\[6pt]
    \begin{subfigure}[b]{0.85\textwidth}
        \centering
        \includegraphics[width=\textwidth]{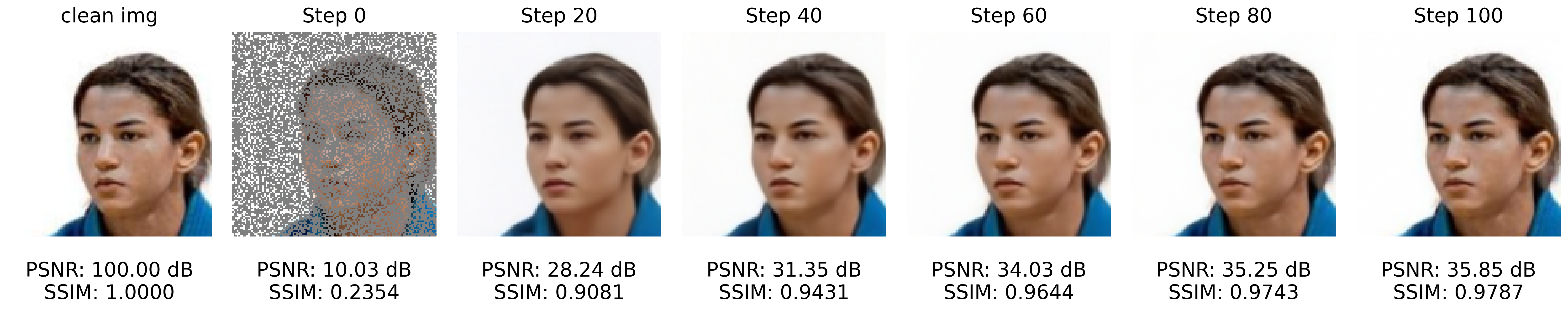}
        \caption{HP-PnP-Flow}
    \end{subfigure}
    \caption{Random inpainting on CelebA (70\% masked, $\sigma=0.01$).}
    \label{fig:app_randinp_celeba}
\end{figure}
\FloatBarrier

\begin{figure}[htp]
    \centering
    \begin{subfigure}[b]{0.85\textwidth}
        \centering
        \includegraphics[width=\textwidth]{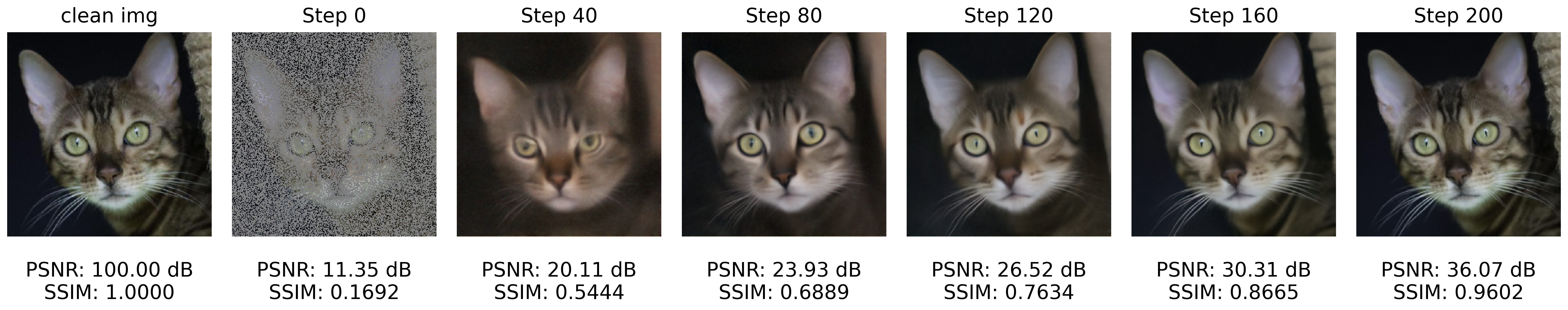}
        \caption{PnP-Flow}
    \end{subfigure}
    \\[6pt]
    \begin{subfigure}[b]{0.85\textwidth}
        \centering
        \includegraphics[width=\textwidth]{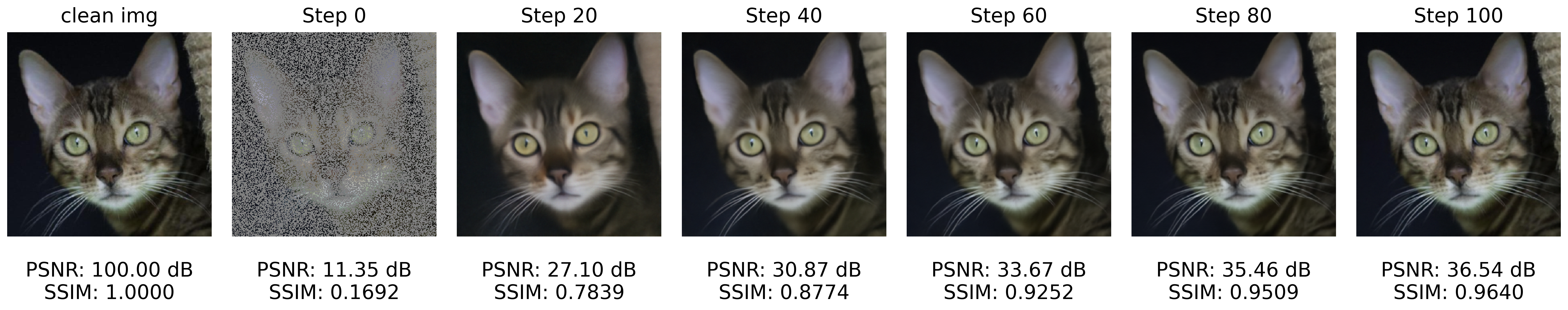}
        \caption{HP-PnP-Flow}
    \end{subfigure}
    \caption{Random inpainting on AFHQ-Cat (70\% masked, $\sigma=0.01$).}
    \label{fig:app_randinp_afhq}
\end{figure}
\FloatBarrier

\begin{figure}[htp]
    \centering
    \begin{subfigure}[b]{0.85\textwidth}
        \centering
        \includegraphics[width=\textwidth]{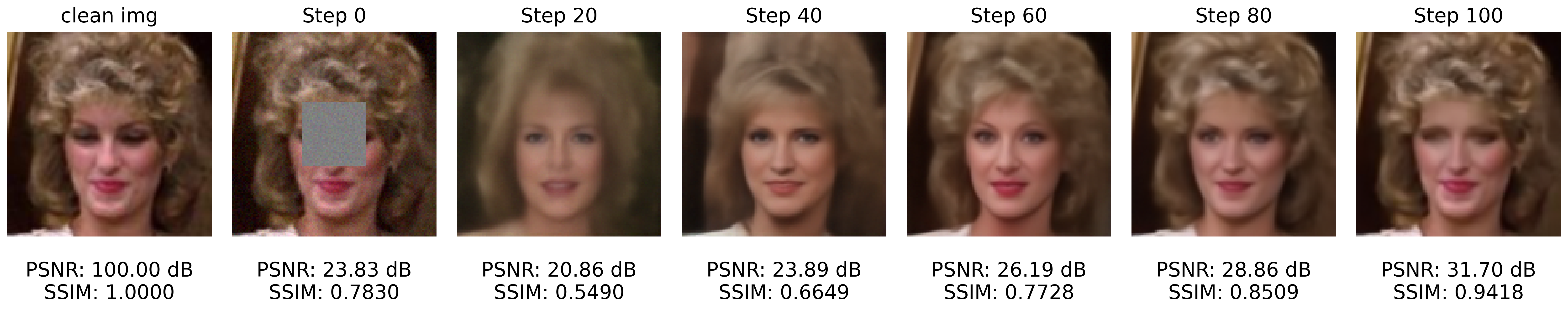}
        \caption{PnP-Flow}
    \end{subfigure}
    \\[6pt]
    \begin{subfigure}[b]{0.85\textwidth}
        \centering
        \includegraphics[width=\textwidth]{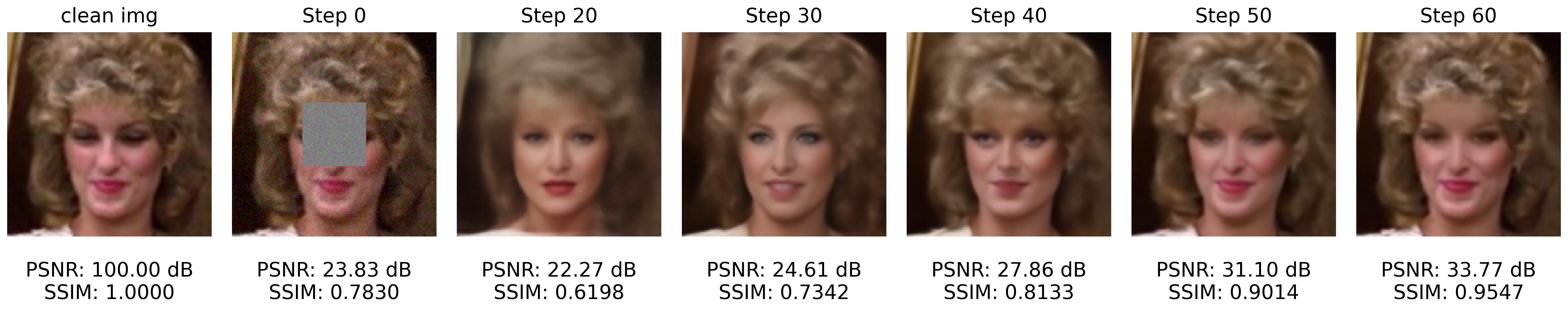}
        \caption{HP-PnP-Flow}
    \end{subfigure}
    \caption{Box inpainting on CelebA ($40\times40$, $\sigma=0.05$).}
    \label{fig:app_boxinp_celeba}
\end{figure}
\FloatBarrier

\begin{figure}[htp]
    \centering
    \begin{subfigure}[b]{0.85\textwidth}
        \centering
        \includegraphics[width=\textwidth]{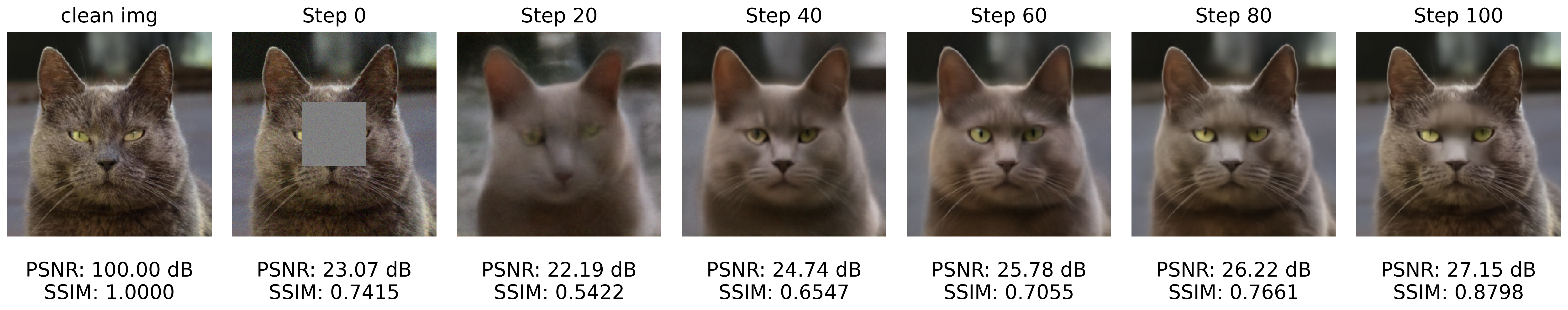}
        \caption{PnP-Flow}
    \end{subfigure}
    \\[6pt]
    \begin{subfigure}[b]{0.85\textwidth}
        \centering
        \includegraphics[width=\textwidth]{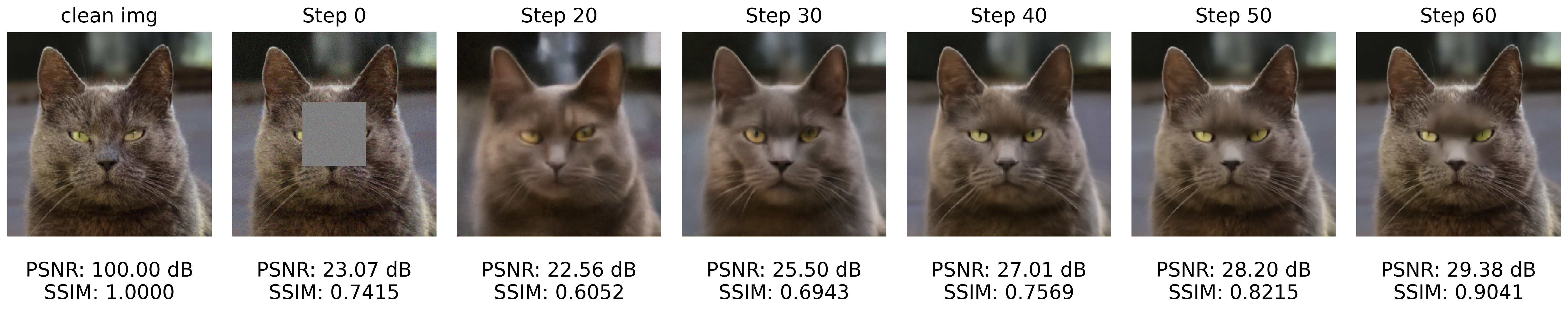}
        \caption{HP-PnP-Flow}
    \end{subfigure}
    \caption{Box inpainting on AFHQ-Cat ($80\times80$, $\sigma=0.05$).}
    \label{fig:app_boxinp_afhq}
\end{figure}
\FloatBarrier

\end{document}